\newacronym{DALMC}{{\textsc{\small DALMC}}}{diffusion annealed Langevin Monte Carlo}
\newacronym{DALD}{{\textsc{\small DALD}}}{diffusion annealed Langevin dynamics}
\newacronym{SGM}{SGM}{score-based generative model}
\newacronym{OU}{OU}{Ornstein-Uhlenbeck}
\newacronym{SDE}{SDE}{stochastic differential equation}
\newacronym{KL}{KL}{Kullback-Leibler}
\newcommand{\md}{\mathrm{d}}
\DeclareMathOperator*{\argmin}{arg\,min}
\DeclareMathOperator*{\kl}{KL}
\DeclareMathOperator*{\pid}{\pi_{\text{data}}}
\DeclareMathOperator*{\score}{\text{score}}
\newtheorem{theorem}{Theorem}[section]
\newtheorem{proposition}[theorem]{Proposition}
\newtheorem{lemma}[theorem]{Lemma}
\newtheorem{remark}[theorem]{Remark}
\newtheorem{corollary}[theorem]{Corollary}
\newcommand\independent{\protect\mathpalette{\protect\independenT}{\perp}}
\def\independenT#1#2{\mathrel{\rlap{$#1#2$}\mkern2mu{#1#2}}}
\newtheorem{assumption}{\textbf{A}\hspace{-2pt}}
\Crefname{assumption}{\textbf{A}\hspace{-3pt}}{\textbf{H}\hspace{-3pt}}
\crefname{assumption}{\textbf{A}}{\textbf{A}}
\title{Non-asymptotic Analysis of Diffusion Annealed Langevin Monte Carlo for Generative Modelling}
\author[1]{Paula Cordero-Encinar}
\author[1]{\"{O}. Deniz Akyildiz}
\author[1, 2]{Andrew B. Duncan}
\affil[1]{Imperial  College London}
\affil[2]{The Alan Turing Institute}
\begin{document}

\maketitle

\vskip 0.3in

\begin{abstract}
We investigate the theoretical properties of general diffusion (interpolation) paths and their Langevin Monte Carlo implementation, referred to as \gls*{DALMC}, under weak conditions on the data distribution. Specifically, we analyse and provide non-asymptotic error bounds for the annealed Langevin dynamics where the path of distributions is defined as Gaussian convolutions of the data distribution as in diffusion models. We then extend our results to recently proposed heavy-tailed (Student's $t$) diffusion paths, demonstrating their theoretical properties for heavy-tailed data distributions for the first time. Our analysis provides theoretical guarantees for a class of score-based generative models that interpolate between a simple distribution (Gaussian or Student's $t$) and the data distribution in \textit{finite time}. This approach offers a broader perspective compared to standard score-based diffusion approaches, which are typically based on a forward \gls*{OU} noising process.
\end{abstract}

\section{Introduction}
\label{section:introduction}
\Glspl*{SGM} \citep{song2020score, ho2020denoising} have become immensely popular in recent years due to their excellent performance in generating high-quality data. 
This success has led to widespread adoption across various generative modelling tasks, e.g., image generation \citep{dhariwal2021diffusion, Rombach_2022_CVPR, saharia2022photorealistic}, audio generation \citep{Ruan_2023_CVPR}, 
reward maximisation \citep{pmlr-v162-janner22a, he2023diffusion}. 
Additionally, their remarkable performance has sparked significant interest within the theoretical community to better understand the structure and properties of these models \citep{lee2022convergence, Chen2022ImprovedAO,
chen2023sampling, benton2024nearly}.

The goal of generative modelling is to learn the underlying probability distribution $\pid$ from a given set of samples.
Diffusion models, a particular class of \glspl*{SGM}, achieve this by using a \textit{forward process}, typically an \gls*{OU} process, to construct a path of probability distributions from the data distribution towards a simpler one -- a Gaussian. 
The time-reversed process can be characterised \citep{anderson1982reverse} but necessitates the knowledge of the scores of the marginal distributions along this path. 
These scores are usually intractable - hence they are learnt by noising the data and applying score matching techniques \citep{hyvarinen2005estimation, pascal_score_matching, pmlr-v115-song20a}. 
The learnt scores are then used to sample from the path by discretising the time-reversed diffusion process \citep{song2020score}.

\begin{figure}[t]
\vskip 0.2in
  \centering
  \begin{subfigure}{0.33\textwidth}
         \includegraphics[trim={45 65 45 65}, clip, width=\textwidth]{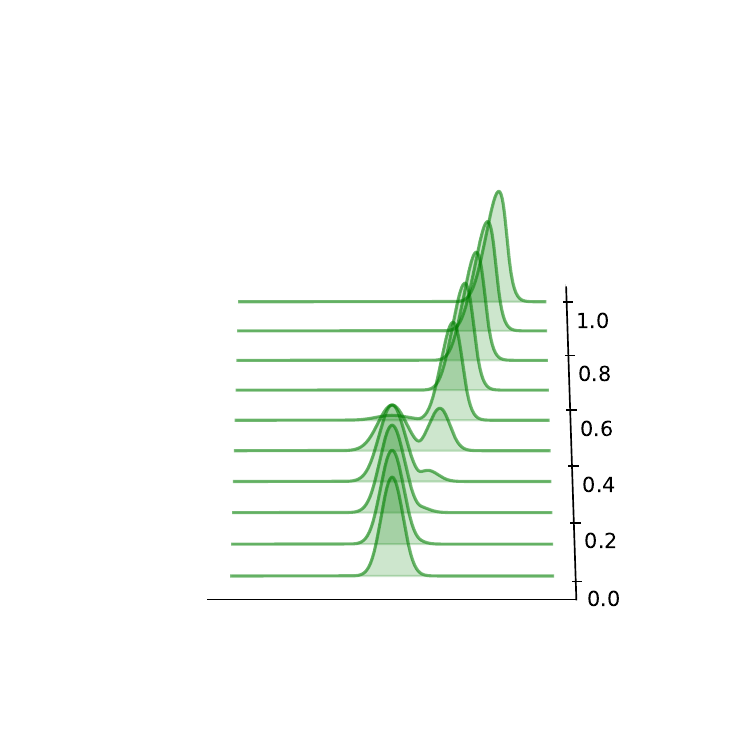}
         \caption{Geometric path}
         \label{fig:geometric_path_mixture_u_g}
     \end{subfigure}
     \begin{subfigure}{0.33\textwidth}
         \includegraphics[trim={45 65 45 65}, clip, width=\textwidth]{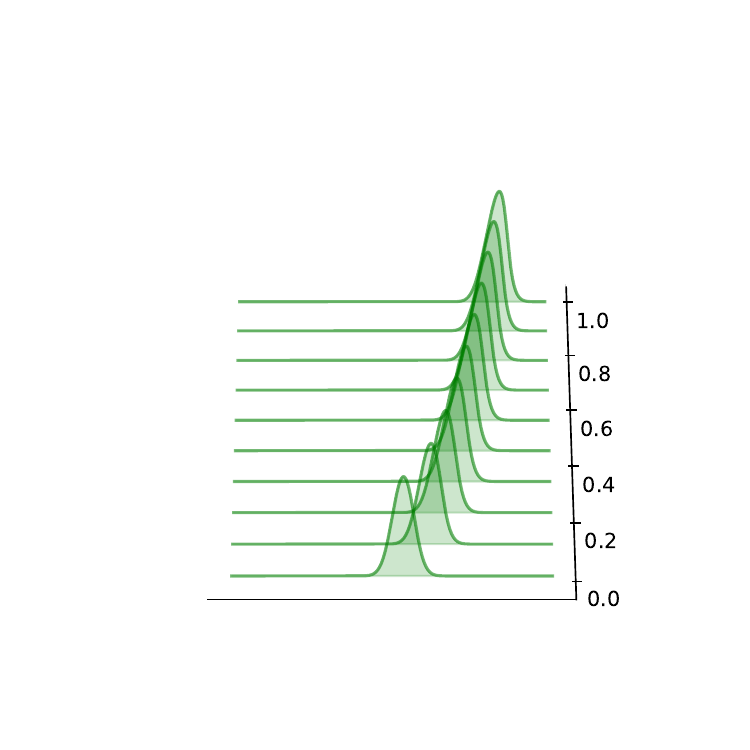}
        \caption{Diffusion path}
         \label{fig:convolutional_path_mixture_u_g}
     \end{subfigure}
    
     \caption{A visual comparison of the geometric path versus the diffusion path for $(\mu_t)_{t\in[0,1]}$. The \textit{base} distribution is given by $\mu_0 := \mathcal{N}(0,1)$ and the data distribution, $\mu_1 := \pid$, is a mixture of a Gaussian and a smoothed uniform distribution (see Section~\ref{sec:gaussian_diffusion}). As observed by \citet{chehab2024provableconvergencelimitationsgeometric}, the geometric path in (a) creates intermediate multimodal distributions which are hard to sample from. In contrast, the diffusion path in (b) stays unimodal throughout, offering more favourable properties.}
  \label{fig:convolutional_vs_geometric}
\vskip -0.2in
\end{figure}
While the forward \gls*{OU} process is mathematically convenient, it does not capture the whole idea of bridging distributions and  requires infinite time to interpolate between the data distribution $\pid$ and a Gaussian measure. In practice, however, diffusion models consider the evolution of the \gls*{OU} process only up to a finite final time $T$. Thus, the path does not fully bridge $\pid$ and a standard Gaussian.
During generation, these models instead evolve samples along a sequence of interpolated distributions between the final marginal distribution of the OU process at time $T$ and $\pid$ (although, in practice, they are initialised from a Gaussian). 
Specifically, this interpolation is characterised by defining intermediate random variables $X_t \sim \mu_t$ \footnote{In our case, the \text{base} (simple) distribution is defined at time $0$ as $\mu_0$, and the data distribution is defined at time $T$, $\mu_T = \pid$. This contrasts with standard diffusion models where the base distribution is defined at time $T$ and the data distribution at time $0$.} \citep{chehab2024practicaldiffusionpathsampling} as
\begin{align}\label{eq:one-sided-interpolant}
X_t = \sqrt{\lambda_t} X + \sqrt{1-\lambda_t} Z,
\end{align}
for $t \in [0, T]$, where $X \sim \pi_{\text{data}}$, $Z \sim \mathcal{N}(0, I)$ is independent of $X$ and a  schedule $\lambda_t = \min\{1, e^{-2(T-t)}\}$. 

The interpolation perspective of diffusion models has been investigated, see, e.g., \citet{albergo2023stochastic, gao2024gaussian}. Notably, the path in Eq.~\eqref{eq:one-sided-interpolant} is a special case of the \textit{one-sided stochastic interpolants} \citep{albergo2023stochastic}. 
As outlined in these works, the  reverse process can be made to exactly interpolate between a base distribution $\nu$ and $\pi_{\text{data}}$ in finite time by using an appropriate schedule $\lambda_t$ and introducing control terms in the corresponding \glspl*{SDE}. 
Similarly to the score term in diffusion models, these control terms are intractable and need to be learnt.

In this work, we adopt a practical approach to general linear interpolation paths between a simple base distribution $\nu$ and $\pi_{\text{data}}$, that is, $X_t = \sqrt{\lambda_t} X + \sqrt{1-\lambda_t} Z$, where $X\sim\pid$, $Z\sim\nu$ independent of $X$ and $\lambda_t\in[0,1]$, $\lambda_T=1$. 
In particular, we explore 
the behaviour of Langevin dynamics driven by the gradients of $\log\mu_t$ for $t \in [0, T]$, where $\mu_t$ are the intermediate distributions, i.e., $X_t\sim \mu_t$. 
Our approach is akin to earlier generative modelling methods based on \textit{annealed Langevin dynamics} \citep{song2019generative} which led to the development of diffusion models. However, there has been limited work analysing these methods under minimal assumptions on $\pid$. 
\citet{block2022generativemodelingdenoisingautoencoders} provide the first theoretical analysis in Wasserstein distance under smoothness and dissipativity of the data distribution. They show that the error depends exponentially on the dimension.
In contrast, \citet{lee2022convergence} provides a non-asymptotic bound in total variation under smoothness conditions and a bounded log-Sobolev constant of the data distribution. Specifically, we make the following contributions.

\paragraph{Contributions} 
\begin{itemize}
\item We provide an analysis of annealed Langevin dynamics methods driven by general linear interpolation paths between $\nu$ and $\pi_{\text{data}}$, which we term diffusion annealed Langevin Monte Carlo (\gls*{DALMC}).
In the case where $\nu$ is a Gaussian distribution, we derive non-asymptotic convergence bounds in \gls*{KL} divergence under different assumptions.

By assuming that $\pid$ has a finite second-order moment $M_2$, 
 $\log \pid$ has Lipschitz gradients and either $\pid$ is strongly convex outside a ball or $\nabla^2\log\pid$ decays to $0$ sufficiently fast (as is the case for Student's $t$-like distributions), we show in Corollary~\ref{corollary:complexity_bounds} that \gls*{DALMC} requires $\mathcal{O}\left(\frac{d (M_2 \vee d)^2 L_{\max}^2}{\varepsilon^6}\right)$ steps to achieve $\varepsilon^2$-accurate sampling from $\pid$ in \gls*{KL} divergence with a sufficiently accurate score estimator. Here, $d$ is the dimension of the data and $L_{\max} := \max_{t \in [0,T]} L_t$ where $L_t$ denotes the Lipschitz constant of $\nabla\log\mu_t$, which we prove to be finite in Lemma~\ref{lem:regularity_of_gaussian_path}, improving the results of \citet[Proposition 20]{gao2024gaussian} under the specified conditions.
Furthermore, under slightly less restrictive assumptions involving smoothness of $\pid$ with constant $L_\pi$, bounded second order moment $M_2$ and $\mathbb{E}_{\pid} \left\Vert\nabla \log\pid\left(Y\right)\right\Vert^{8}\leq K_\pi^2$, we demonstrate that the data distribution can be approximated to $\varepsilon^2$-accuracy in \gls*{KL} divergence with $\mathcal{O}\left(\frac{(M_2 \vee d)^2(d^2\vee L_\pi^2d \vee K_\pi) L_\pi}{\varepsilon^6}\right)$ steps. 
To the best of our knowledge, these are the first results obtained in \gls*{KL} divergence for these Langevin-dynamics driven generative models \citep{song2019generative}.

\item We then extend this analysis into recent heavy-tailed diffusion models \citep{pandey2024heavy} based on Student's $t$ noising distributions, that is, when the base distribution $\nu$ is chosen to be a Student's $t$ distribution.
In this case, assuming that the data distribution is smooth, has a finite second-order moment and exhibits a tail behaviour similar to that of a multivariate Student's $t$ distribution, we show that \gls*{DALMC} can be used to sample from the data distribution with the same complexity as the Gaussian case.  
As far as we are aware, this is the first analysis of heavy-tailed diffusion models with explicit complexity estimates.
\item We show that, under certain conditions on the covariances, a mixture of Gaussians with different covariances satisfy smoothness conditions and is strongly log-concave outside of a ball, implying a finite log-Sobolev constant. 
This result is of independent interest, as most analyses of Gaussian mixtures in the literature primarily focus on the equal covariance setting.
\end{itemize}

The rest of the paper is organised as follows. Section~\ref{sec:background} presents our setting and necessary background. Section~\ref{sec:gaussian_diffusion}, provides a non-asymptotic analysis of the general diffusion paths with Gaussian base distribution and their implementation via Langevin dynamics.
In Section~\ref{sec:heavy_tailed_diffusion}, we extend our analysis to heavy-tailed diffusion models.
Section~\ref{sec:related_work} discusses related literature, followed by the conclusion.

\paragraph{Notation}
Let $d$ be the dimension of data. Let $A, B$ be square matrices of the same dimension, we say $A\preccurlyeq B$ if $B-A$ is a positive semidefinite matrix and $\Vert\cdot\Vert_F$ denotes the Frobenius norm. For $a, b>0$, we write $a\lesssim b$ or $a=\mathcal{O}(b)$ to indicate that $a\leq C b$ for an absolute constant $C\geq 0$, and $a\asymp b$ if $a=\mathcal{O}(b)$ and $b=\mathcal{O}(a)$. For $f:\mathbb{R}^d\to\mathbb{R}^d$ and a probability measure $\mu$ on $\mathbb{R}^d$, we define $\Vert f\Vert_{L^2(\mu)}:=\left(\int\Vert f\Vert \md \mu\right)^{1/2}$ and  $M_{2} := \mathbb{E}_{\pid}[\Vert X\Vert^2]$.

\section{Generative Modelling via Diffusion Paths}
\label{sec:background}

We present the background and setting for our analysis.

\subsection{Diffusion Paths}
In practice, implementing the reverse process in diffusion models consists in sampling along a path of probability distributions $(\mu_t)_{t\in [0,T]}$, which starts at a simple distribution $\mu_0$ and ends at an arbitrarily complex data distribution $\mu_T = \pi_{\text{data}}$. 
In particular, when the forward process is an \gls*{OU} process and evolves the data distribution for time $T$, the starting distribution of the reversed process takes the form ${e^{dT}}\pid(e^T x)*\ \mathcal{N}(0, (1-e^{-2T})I)$ and the interpolated distributions $(\mu_t)_{t}$ are the marginals of the \gls*{OU} process.
Building on this, we can describe a more general version of the probability distribution paths that diffusion models attempt to sample from \citep{chehab2024practicaldiffusionpathsampling}, as 
\begin{align}
    \mu_t(x) = \frac{\pi_{\text{data}}(x/\sqrt{\lambda_t})}{{\lambda_t}^{d/2}}  * \frac{\nu \left({x}/{\sqrt{1-\lambda_t}}\right)}{(1-\lambda_t)^{d/2}},\label{eq:convolutional_path}
\end{align}
where $*$ denotes the convolution operation, $\nu$ describes the base or \textit{noising} distribution, and $\lambda_t$ is an increasing function called schedule, such that, $\lambda_t\in[0, 1]$ and $\lambda_T=1$. We refer to the probability path $(\mu_t)_{t\in [0,T]}$ in~\eqref{eq:convolutional_path} as the \textit{diffusion path}. In the setting of the \gls*{OU} (i.e. variance preserving) process, $\lambda_t$ corresponds to $\lambda_t = \min\{1, e^{-2(T-t)}\}$.

The diffusion path with the \gls*{OU} schedule has demonstrated very good performance in the generative modelling literature and has recently started to be explored for sampling \citep{huang2024reverse, richter2024improved, vargas2024transport}. For instance, \citet{phillips2024particle} empirically observed that the diffusion path may have a more favourable geometry for the Langevin sampler than the geometric path, obtained by taking the geometric mean of the base and target distributions, as is typically done in annealing due to the tractability of the score (Figure \ref{fig:convolutional_vs_geometric}).
 
While successful, the use of the \gls*{OU} process presents some challenges in practice. 
As mentioned earlier, the forward \gls*{OU} process cannot reach $\nu$ in finite time, meaning that, in theory the reversed path starts from a non-Gaussian distribution $\mu_0$. However, in practice, the paths are initialised from Gaussians, introducing a bias that is present in error bounds \citep{Chen2022ImprovedAO, chen2023sampling, benton2024nearly}. In our setting, by selecting an appropriate schedule for the diffusion path \eqref{eq:convolutional_path}, which satisfies $\lambda_0=0$ and $\lambda_T=1$, the path of probability distributions $(\mu_t)_{t \in [0, T]}$ can interpolate exactly between $\mu_0 = \nu$ and $\mu_T=\pid$ in finite time, unlike the \gls*{OU} process. This formulation is equivalent to that of linear one-sided stochastic interpolants which can also be realised through \glspl*{SDE} \citep[Theorem 5.3]{albergo2023stochastic}.

We will next explore an alternative approach for generative modelling with general linear diffusion paths, namely, running annealed Langevin dynamics on paths $(\mu_t)_{t\in [0, T]}$ that are constructed to meet the correct marginals.

\subsection{Annealed Langevin Dynamics for Diffusion Paths}

For general diffusion paths, the ``reverse process'' cannot be described by a closed form \gls*{SDE}. 
While \citet{albergo2023stochastic}, estimate the intractable drift term of the \gls*{SDE} using neural networks, their approach can experience numerical instabilities at $t=T$ (see \citet[Section 6]{albergo2023stochastic}) due to singularities in the drift term.
Therefore, in this work, we focus on \textit{annealed Langevin dynamics} \citep{song2019generative} to explicitly implement a sampler along the diffusion path, avoiding the extra control terms introduced in \citet{albergo2023stochastic}. Note that the score at each time $t$ can be learnt via score matching techniques, as in \citet{song2019generative}.

Our annealed Langevin dynamics consists of running a time-inhomogeneous Langevin \gls*{SDE}, where the drifts are given by the scores of reparametrised probability distributions from the diffusion path $(\hat{\mu}_t = \mu_{\kappa t})_{t\in[0,T/\kappa]}$, for some $0<\kappa<1$. That is, we will use the following \gls*{SDE}
\begin{equation}\label{eq:annealed_langevin_sde}
 \md X_t = \nabla \log \hat{\mu}_t(X_t) \md t + \sqrt{2} \md B_t\quad t\in[0, T/\kappa],
\end{equation}
where {$X_0 \sim \mu_0=\nu$ and $(B_t)_{t\geq 0}$ is a Brownian motion. 
We refer to \eqref{eq:annealed_langevin_sde} as \textit{diffusion annealed Langevin dynamics}. 
This strategy does provide a viable alternative to implement interpolation paths as the scores can be learnt. 
In particular, we consider the diffusion annealed Langevin Monte Carlo (\gls*{DALMC}) algorithm given by a simple Euler-Maruyama discretisation of \eqref{eq:annealed_langevin_sde}  and the use of a score approximation function $s_\theta(x, t)$ \citep{song2019generative}:
\begin{equation}\label{eq:annealed_langevin_mcmc_algorithm_score_approx}
    X_{l+1} = X_l + h_l s_{\theta}(X_l, t_l) + \sqrt{2 h_l} \xi_l,
\end{equation}
where $h_l > 0$ is the step size, $\xi_k\sim \mathcal{N}(0,I)$, $s_\theta(x, t)$ approximates $\nabla\log\hat{\mu}_{t}(x)$, $l\in\{1,\dots, M\}$ and $0=t_0<\dots<t_M=T/\kappa$ is a discretisation of the interval $[0,T/\kappa]$.

It is important to note that, even if simulated exactly, diffusion annealed Langevin dynamics introduces a bias, as the marginal distributions of the solution of the \gls*{SDE} \eqref{eq:annealed_langevin_sde} do not exactly correspond to $(\hat{\mu}_t)_t$, unlike in the stochastic interpolants formulation \citep{albergo2023stochastic}. One of the contributions of our work will be to quantify this bias non-asymptotically.
A key component in determining the effectiveness of the diffusion annealed Langevin dynamics will be the action of the curve of probability measures $\mu =(\mu_t)_{t\in[0, T]}$ interpolating between the base distribution and the data distribution, denoted by $\mathcal{A}(\mu)$. As noted by \citet{guo2024provablebenefitannealedlangevin}, the action serves as a measure of the cost of transporting $\nu$ to $\pid$ along the given path. Formally, the action of an absolutely continuous curve of probability measures \citep{lisini2007characterization} with finite second-order moment is defined as follows 
\begin{equation*}
  \mathcal{A}(\mu):=\int_0^T \lim_{\delta\to 0}\frac{W_2(\mu_{t+\delta}, \mu_t)}{\vert \delta\vert}.  
\end{equation*}
Based on Theorem 1 from \citet{guo2024provablebenefitannealedlangevin}, we have that the \gls*{KL} divergence between the path measure of the diffusion annealed Langevin dynamics \eqref{eq:annealed_langevin_sde}, $\mathbb{P}_{\text{DALD}} = (p_{t,\text{DALD}})_{t\in[0, T/\kappa]}$, and that of a reference \gls*{SDE} such that the marginals at each time have distribution $\hat{\mu}_t$, $\mathbb{P}=(\hat{\mu}_{t})_{t\in[0, T/\kappa]}$, can be bounded in terms of the action.
In particular, when $p_0 = p_{0, \text{DALD}}$, it follows from Girsanov's theorem that
\begin{equation*}
 \kl\left(\mathbb{P}\;||\mathbb{P}_{\text{DALD}}\right)\leq \kappa\mathcal{A}(\mu).
\end{equation*}
See Theorem \ref{theorem:preliminaries_continuous_time_kl_bound} in Appendix \ref{appendix:background} for the proof and further details. Note that by the data processing inequality, we have that $ \kl\left(\pid\;||p_{T/\kappa,\text{DALD}}\right)\leq \kl\left(\mathbb{P}\;||\mathbb{P}_{\text{DALD}}\right)$, meaning that the \gls*{KL} divergence between the data distribution and the final marginal distribution of the diffusion annealed Langevin dynamics \eqref{eq:annealed_langevin_sde} is bounded, provided that the action is finite. In that case by choosing $\kappa = \mathcal{O}(\varepsilon^2/\mathcal{A}(\mu))$, we ensure that $\kl\left(\pid\;||p_{T/\kappa,\text{DALD}}\right)\lesssim \varepsilon^2$.

\subsection{Initial Assumptions}
In what follows, we will provide an in-depth analysis of the \gls*{DALMC} algorithm when the base distribution $\nu$ is Gaussian or multivariate Student's $t$ distribution. The latter relates to recent heavy-tailed diffusion models \citep{pandey2024heavy}.  
Our results in both cases are based on the following assumptions, with additional ones introduced later as necessary.

First, as is typical in the diffusion model literature we require an $L^2$ accurate score estimator \citep{Chen2022ImprovedAO, chen2023sampling}.
\begin{assumption}\label{assumption:score_approximation}
    The score approximation function $s_\theta(x, t)$ satisfies 
    \begin{equation*}
        \sum_{l=0}^{M-1} h_l\mathbb{E}_{\hat{\mu}_t}\left[\left\Vert \nabla \log \hat{\mu}_l(X_{t_l}) - s_\theta(X_{t_l}, t_l)\right\Vert^2\right] \leq \varepsilon_{score}^2.
    \end{equation*}
    where $0=t_0<t_1<\dots<t_M=T/\kappa$ is a discretisation of the interval $[0,T/\kappa]$.
\end{assumption}

\begin{assumption}\label{assumption:finite_second_order_moment} The data distribution $\pi_{\text{data}}$ has a finite second-order moment, that is, $M_2 = \mathbb{E}_{\pid}[\Vert X\Vert^2] < \infty$.
\end{assumption}

\section{Gaussian Diffusion Paths}\label{sec:gaussian_diffusion}
\label{section:analysis}
In this section, we focus on analysing algorithms to simulate the diffusion path $(\mu_t)_{t\in [0,T]}$ defined in \eqref{eq:convolutional_path} when the base distribution $\nu$ is Gaussian, $\nu\sim \mathcal{N}(m_\nu, \sigma^2 I)$. For simplicity, we will assume that $\pid$ has mean $0$ and set $m_\nu=0$.  
This diffusion path has the remarkable property, illustrated in Figure~\ref{fig:convolutional_vs_geometric}, that when $\pid$ has finite log-Sobolev and Poincaré constants, these constants remain uniformly bounded along the entire path, as summarised in the following result.
\begin{proposition}\label{prop:bounded_log_sobolev} If $\pid$ has a finite log-Sobolev constant $C_{\text{LSI}}(\pid)$, respectively Poincaré constant $C_{\text{PI}}(\pid)$, the Gaussian diffusion path $(\mu_t)_{t \in [0,T]}$ defined in \eqref{eq:convolutional_path} with base distribution $\nu\sim\mathcal{N}(0, \sigma^2I)$ satisfies for all $t\in[0, T]$
\begin{align*}
    C_{\text{LSI}}(\mu_t)&\leq \lambda_t C_{\text{LSI}}(\pid) + (1-\lambda_t) C_{\text{LSI}}(\nu), \\
    C_{\text{PI}}(\mu_t)&\leq \lambda_t C_{\text{PI}}(\pid) + (1-\lambda_t) C_{\text{PI}}(\nu),
\end{align*}
respectively, where $C_{\text{LSI}}(\nu) = C_{\text{PI}}(\nu) = \sigma^{2}$.
\end{proposition}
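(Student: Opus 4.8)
The plan is to read off from \eqref{eq:convolutional_path} that $\mu_t$ is the law of $\sqrt{\lambda_t}\,X + \sqrt{1-\lambda_t}\,Z$ with $X\sim\pid$ and $Z\sim\nu$ independent, and then to track how the log-Sobolev and Poincar\'e constants transform under (i) scalar dilation and (ii) convolution. I will isolate two elementary facts. \emph{Dilation.} If a probability measure $\rho$ on $\mathbb{R}^d$ satisfies an LSI (resp.\ Poincar\'e inequality) with constant $C$ and $c>0$, then $(c\,\cdot)_{\#}\rho$ satisfies the corresponding inequality with constant $c^2C$; this is immediate from the chain rule $\nabla\big(f(c\,\cdot)\big)=c\,(\nabla f)(c\,\cdot)$ plugged into the definition of the inequality. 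Combined with Gross's inequality, which gives $C_{\mathrm{LSI}}(\mathcal{N}(0,\sigma^2I))=C_{\mathrm{PI}}(\mathcal{N}(0,\sigma^2I))=\sigma^2$, this shows that $\mathrm{Law}(\sqrt{\lambda_t}\,X)$ has LSI constant at most $\lambda_tC_{\mathrm{LSI}}(\pid)$ and $\mathrm{Law}(\sqrt{1-\lambda_t}\,Z)$ has LSI constant at most $(1-\lambda_t)\sigma^2$, with the analogous statements for the Poincar\'e constants.

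\emph{Convolution.} If $\rho_1,\rho_2$ on $\mathbb{R}^d$ satisfy LSI with constants $C_1,C_2$, then $\rho_1*\rho_2$ satisfies LSI with constant $C_1+C_2$ (and likewise for Poincar\'e). To see this, set $F(x,y):=f(x+y)$ on $\mathbb{R}^d\times\mathbb{R}^d$, so that $\mathrm{Ent}_{\rho_1*\rho_2}(f^2)=\mathrm{Ent}_{\rho_1\otimes\rho_2}(F^2)$. By the tensorization (subadditivity) of entropy for product measures, $\mathrm{Ent}_{\rho_1\otimes\rho_2}(F^2)\le \mathbb{E}_{\rho_2}[\mathrm{Ent}_{\rho_1}(F^2(\cdot,y))]+\mathbb{E}_{\rho_1}[\mathrm{Ent}_{\rho_2}(F^2(x,\cdot))]$; applying the one-dimensional LSI in each slice and using the key identity $\nabla_xF(x,y)=\nabla_yF(x,y)=(\nabla f)(x+y)$, both terms collapse onto the same Dirichlet energy, leaving the bound $2(C_1+C_2)\int\|\nabla f\|^2\,\md(\rho_1*\rho_2)$, i.e.\ LSI with constant $C_1+C_2$. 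The Poincar\'e case is verbatim with variances replacing entropies, using the tensorization of variance (Efron--Stein) $\mathrm{Var}_{\rho_1\otimes\rho_2}(F)\le\mathbb{E}_{\rho_1\otimes\rho_2}[\mathrm{Var}_{\rho_1}(F\mid y)+\mathrm{Var}_{\rho_2}(F\mid x)]$.

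Putting the pieces together: since $\mu_t=\mathrm{Law}(\sqrt{\lambda_t}\,X)*\mathrm{Law}(\sqrt{1-\lambda_t}\,Z)$, the convolution fact gives $C_{\mathrm{LSI}}(\mu_t)\le\lambda_tC_{\mathrm{LSI}}(\pid)+(1-\lambda_t)\sigma^2=\lambda_tC_{\mathrm{LSI}}(\pid)+(1-\lambda_t)C_{\mathrm{LSI}}(\nu)$, and the same computation with variances yields the Poincar\'e bound; the endpoints $\lambda_t\in\{0,1\}$ are trivial. I expect the only real obstacle to be the convolution fact with the \emph{additive} constant: viewing $\rho_1*\rho_2$ as a Lipschitz pushforward of $\rho_1\otimes\rho_2$ only produces a constant proportional to $\max(C_1,C_2)$, which is too lossy, so one must go through the tensorization argument above and exploit that $f(x+y)$ has the same gradient whether differentiated in $x$ or in $y$. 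The dilation step and the explicit Gaussian constants are routine.
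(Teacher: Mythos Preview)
Your proposal is correct and follows exactly the approach the paper invokes: the paper's proof is a one-line citation to Propositions~2.3.3 and~2.3.7 of \citet{sinho_book}, which are precisely the dilation and convolution stability results for log-Sobolev and Poincar\'e inequalities that you have spelled out in detail. Your tensorization argument for the additive constant under convolution, together with the observation that $\nabla_x f(x+y)=\nabla_y f(x+y)$, is the standard proof of those propositions.
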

The proof follows immediately from \citet[Propositions 2.3.3 and 2.3.7]{sinho_book}. This result is highly favourable, as, unlike geometric annealing \citep{chehab2024provableconvergencelimitationsgeometric}, the log-Sobolev and Poincaré constants remain uniformly bounded along the entire path by the worst constant independently of the distance between $\pid$ and $\nu$. 
We can visually observe this in Figure \ref{fig:convolutional_vs_geometric}, when the data distribution is given by a mixture of a Gaussian and a smoothed uniform distribution, $\pid = (1-e^{-m^2/4)}\mathcal{N}(m, 1) + e^ {-m^2/4} u_m$,
where $u_m$ is the smoothed uniform distribution on $I_m=[-m , 2m]$ for $m=10$ \citep{chehab2024provableconvergencelimitationsgeometric}.

Motivated by this, we analyse the diffusion annealed Langevin dynamics \eqref{eq:annealed_langevin_sde} to simulate from \eqref{eq:convolutional_path}. 

\subsection{Analysis of the Gaussian Diffusion Path}\label{sec:gaussian_diffusion_analysis}
We start by analysing the properties of the \textit{Gaussian diffusion path} $(\mu_t)_{t\in [0, T]}$. 
\paragraph{Smoothness of $(\mu_t)_t$.} We consider the following assumption on the Lipschitz continuity of the scores $\nabla\log\mu_t$.
\begin{assumption}\label{assumption:lipschitz_score_across_convolutional_diffusion}
    For all $t\in[0, T]$, the scores of the intermediate distributions $\nabla\log\mu_t(x)$ are Lipschitz with finite constant $L_t$.
\end{assumption}
\Cref{assumption:finite_second_order_moment} and \Cref{assumption:lipschitz_score_across_convolutional_diffusion} are sufficient for one of our non-asymptotic analyses of the Gaussian diffusion path (Theorem \ref{theorem:discretisation_analysis_convolutional_path}). 
However, \Cref{assumption:lipschitz_score_across_convolutional_diffusion} is generally difficult to verify. 
Therefore, we introduce two \textit{alternative} assumptions, \Cref{assumption:grad_log_lipschitzness_convexity_outside_of_a_ball} and \Cref{assumption:grad_log_lipschitzness_hessian_decay}, which separately ensure that $(\nabla \log\mu_t)_{t\in[0, T]}$ satisfies assumption \Cref{assumption:lipschitz_score_across_convolutional_diffusion}. In particular, we show that \Cref{assumption:grad_log_lipschitzness_convexity_outside_of_a_ball} is satisfied by a mixture of Gaussians with different covariances, given certain conditions on the covariances. While assumption \Cref{assumption:grad_log_lipschitzness_hessian_decay} is shown to hold for heavy-tailed data distributions. 
\begin{assumption}\label{assumption:grad_log_lipschitzness_convexity_outside_of_a_ball}
     The data distribution $\pi_{\text{data}}$ has density with respect to Lebesgue, which we write  $\pi_{\text{data}} \propto e^{-V_\pi}$. The potential $V_\pi$ has Lipschitz continuous gradients, with Lipschitz constant $L_\pi$. In addition, $V_\pi$ is strongly convex outside of a ball of radius $r$ with convexity parameter $M_\pi>0$, that is,
     \begin{equation*}
         \inf_{\Vert x\Vert \geq r} \nabla^2 V_\pi \succcurlyeq M_{\pi} I, \quad \inf_{\Vert x\Vert < r} \nabla^2 V_\pi \succcurlyeq -L_{\pi} I.
     \end{equation*}
\end{assumption}

In Lemma \ref{lemma:implications_between_assumptions} of the Appendix, we demonstrate that assumption \Cref{assumption:grad_log_lipschitzness_convexity_outside_of_a_ball} extends the standard assumption on the data distribution that $\pid$ is modelled as a convolution of a compactly supported distribution $\Tilde{\pi}$ and a Gaussian, see, e.g., \citet[Theorem~1]{saremi2024chain} or \citet[Assumption~0]{grenioux2024stochastic}, under some conditions on the compact support of $\Tilde{\pi}$.
Additionally, we prove that a mixture of Gaussians with different covariances satisfies assumption \Cref{assumption:grad_log_lipschitzness_convexity_outside_of_a_ball} under some mild conditions on the covariances (see Lemma \ref{lemma:example_mixture_gaussians_satisfies_assumption} and Remark \ref{remark:counter_example_mixture_gaussian} for a further discussion).
However, Lemma  \ref{lemma:d_mixture_gaussians_not_expressed_as_convoltuion_with_compactly_supported} shows that, in general, a mixture of Gaussians cannot be expressed as a convolution of a compactly supported measure with a Gaussian.
We consider the results regarding the mixture of Gaussians to hold independent significance, as we could not find explicit results in the literature addressing the smoothness properties in this case.

Leveraging the existence of a smooth strongly convex approximation of $V_\pi$ \citep{doi:10.1073/pnas.1820003116} and the Holley-Stroock perturbation lemma \citep{RefWorks:RefID:85-holley1987logarithmic}, we show that under \Cref{assumption:grad_log_lipschitzness_convexity_outside_of_a_ball}, $\pi_{\text{data}}$  satisfies a log-Sobolev inequality with a finite constant which itself implies a finite Poincaré constant (Lemma \ref{lemma:assumption_implies_LSI}) -- which is sufficient for Proposition~\ref{prop:bounded_log_sobolev} to hold. As a consequence, \Cref{assumption:grad_log_lipschitzness_convexity_outside_of_a_ball} implies that the data distribution $\pid$ has finite second order moment (i.e. \Cref{assumption:grad_log_lipschitzness_convexity_outside_of_a_ball} $\Rightarrow$ \Cref{assumption:finite_second_order_moment}).

On the other hand, heavy-tailed data distributions, such as Student's $t$-like distributions, do not satisfy assumption \Cref{assumption:grad_log_lipschitzness_convexity_outside_of_a_ball}, since their potential $V_\pi$ is not strongly convex outside of a ball. Specifically, the Hessian of the potential tends to zero as $\Vert x\Vert$ tends to infinity. 
We provide the following alternative assumption for heavy-tailed data distributions.
\begin{assumption}\label{assumption:grad_log_lipschitzness_hessian_decay}
     The data distribution $\pid$ has density with respect to Lebesgue, which we write  $\pi\propto e^{-V_\pi}$. The potential $V_\pi$ has Lipschitz continuous gradients, with Lipschitz constant $L_\pi$. In addition, $\nabla^2 V_\pi(x)$ decays to 0 with order $\mathcal{O}(\Vert x\Vert^{-2}I)$ as $\Vert x\Vert$ tends to $\infty$. That is, outside of a ball of radius $r$ we have that 
     \begin{equation*}
        - \frac{I}{\alpha_1 +\alpha_2 \Vert x\Vert^{2}} \preccurlyeq \nabla^2 V_\pi(x) \preccurlyeq \frac{I}{\beta_1 +\beta_2 \Vert x\Vert^{2}} \; \; \Vert x\Vert > r, 
     \end{equation*}
     where $\alpha_1, \alpha_2,\beta_1,\beta_2\in\mathbb{R}$.
\end{assumption}
In Appendix \ref{appendix:comments_assumption_hessian_decay} we show that multivariate Student's $t$ distributions of the form
\begin{equation*}
    \pid(x) \propto \left( 1 + \frac{1}{\alpha} (x-\mu)^{\intercal} \Sigma^{-1} (x-\mu)\right)^{-( \alpha + d)/2}
\end{equation*}
satisfy assumption \Cref{assumption:grad_log_lipschitzness_hessian_decay}.

The following Lemma establishes that assumption \Cref{assumption:lipschitz_score_across_convolutional_diffusion} holds when the data distribution satisfies either \Cref{assumption:grad_log_lipschitzness_convexity_outside_of_a_ball} or \Cref{assumption:grad_log_lipschitzness_hessian_decay}.

\begin{lemma}\label{lem:regularity_of_gaussian_path} Under \Cref{assumption:grad_log_lipschitzness_convexity_outside_of_a_ball} or \Cref{assumption:grad_log_lipschitzness_hessian_decay}, we have that for all $t\in[0,T]$ the score $\nabla \log \mu_t(x)$ is Lipschitz continuous with constant $L_t$ provided in the proof (i.e. \Cref{assumption:grad_log_lipschitzness_convexity_outside_of_a_ball} $\Rightarrow$ \Cref{assumption:lipschitz_score_across_convolutional_diffusion} and \Cref{assumption:grad_log_lipschitzness_hessian_decay} $\Rightarrow$ \Cref{assumption:lipschitz_score_across_convolutional_diffusion}).
\end{lemma}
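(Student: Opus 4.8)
The plan is to establish the Lipschitz continuity of $\nabla \log \mu_t$ by writing $\mu_t$ explicitly as a Gaussian convolution and exploiting the structure of the score of such convolutions. Recall from \eqref{eq:convolutional_path} that, when $\nu \sim \mathcal{N}(0, \sigma^2 I)$, we may write $X_t = \sqrt{\lambda_t}\, X + \sqrt{1-\lambda_t}\, Z$ with $Z \sim \mathcal{N}(0, \sigma^2 I)$; equivalently, $\mu_t = (\pid)_{\lambda_t} * \mathcal{N}(0, (1-\lambda_t)\sigma^2 I)$, where $(\pid)_{\lambda_t}$ denotes the pushforward of $\pid$ under $x \mapsto \sqrt{\lambda_t}\, x$. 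Hence $\mu_t$ is the density of a Gaussian convolution with variance $\sigma_t^2 := (1-\lambda_t)\sigma^2$. The first step is to reduce the problem to a uniform-in-$t$ bound on $\|\nabla^2 \log \mu_t\|$. For Gaussian-smoothed densities, there is a classical identity (a Bayesian / Tweedie-type formula): $\nabla^2 \log \mu_t(x) = -\tfrac{1}{\sigma_t^2} I + \tfrac{1}{\sigma_t^4}\operatorname{Cov}\!\big(\sqrt{\lambda_t}\,X \mid X_t = x\big)$, where the covariance is under the posterior of $\sqrt{\lambda_t}\,X$ given the noisy observation $X_t = x$. So the whole task becomes bounding this conditional covariance, both from above and below, uniformly in $x$ and $t$.

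The second step handles the two endpoints separately, since the behaviour near $t = T$ (where $\lambda_t \to 1$, $\sigma_t^2 \to 0$) is the delicate one. For $t$ bounded away from $T$, $\sigma_t^2$ is bounded below by a positive constant, and the crude bound $0 \preccurlyeq \operatorname{Cov}(\cdot \mid X_t = x) \preccurlyeq \lambda_t \sigma_t^2 I + (\text{something involving } M_2)$ — or more simply the universal bound that a Gaussian-perturbed score has Hessian bounded in terms of $1/\sigma_t^2$ — already gives a finite $L_t$; the finiteness of $M_2$ from \Cref{assumption:finite_second_order_moment} (which is implied by \Cref{assumption:grad_log_lipschitzness_convexity_outside_of_a_ball}, and should also be checked under \Cref{assumption:grad_log_lipschitzness_hessian_decay}) controls the posterior second moment. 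For $t$ near $T$, I would instead use the smoothness of $\pid$ itself: when $\lambda_t$ is close to $1$ the convolution is a small perturbation of $\pid$, and one can argue that $\nabla^2 \log \mu_t$ stays close to $\nabla^2 \log \pid$, which is bounded by $L_\pi$ under both \Cref{assumption:grad_log_lipschitzness_convexity_outside_of_a_ball} and \Cref{assumption:grad_log_lipschitzness_hessian_decay}. Concretely, one writes $\log \mu_t = \log\big( e^{-V_\pi \circ (\cdot/\sqrt{\lambda_t})} * \gamma_{\sigma_t^2}\big)$ up to constants and differentiates under the integral, using the Lipschitz gradient of $V_\pi$ to control the resulting expectations.

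The third step is where \Cref{assumption:grad_log_lipschitzness_convexity_outside_of_a_ball} versus \Cref{assumption:grad_log_lipschitzness_hessian_decay} genuinely diverge, and it is the place I expect the real work. Under \Cref{assumption:grad_log_lipschitzness_convexity_outside_of_a_ball}, strong convexity of $V_\pi$ outside a ball means the posterior of $X$ given $X_t = x$ is log-concave up to a bounded perturbation on the ball; via a Holley--Stroock / Brascamp--Lieb argument one bounds $\operatorname{Cov}(X \mid X_t = x) \preccurlyeq \tfrac{C}{M_\pi} I$ for $\|x\|$ large, and the bounded region is absorbed into a constant, yielding $L_t$ uniform in $t$ and finite. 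Under \Cref{assumption:grad_log_lipschitzness_hessian_decay}, there is no global convexity, so instead I would show that the decay $\nabla^2 V_\pi(x) = \mathcal{O}(\|x\|^{-2})$ forces $\|\nabla^2 \log \mu_t\|$ to remain controlled: the upper bound $\nabla^2 \log \mu_t \preccurlyeq 0$-ish direction comes from the convolution never making things more concentrated than a Gaussian of variance $\sigma_t^2$, while the lower bound needs the heavy-tail control to ensure the posterior covariance does not blow up — one tracks how the $\|x\|^{-2}$-decaying curvature interacts with the Gaussian factor of variance $\sigma_t^2$, exploiting that for large $\|x\|$ the log-density of $\mu_t$ inherits the Student's-$t$-type tail. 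The main obstacle is making the near-$t=T$ estimate uniform and quantitative in this heavy-tailed regime without strong convexity — specifically, showing the conditional covariance stays $\mathcal{O}(1)$ (or at worst $\mathcal{O}(1 + \|x\|^2/\sigma_t^4 \cdot \text{decay})$ in a way that still gives a finite operator-norm bound on $\nabla^2\log\mu_t$) as $\sigma_t^2 \to 0$; I would likely split $\mathbb{R}^d$ into a region near $x$ (where the Gaussian factor dominates and gives curvature $\asymp 1/\sigma_t^2$, cancelling against the $-I/\sigma_t^2$ term) and a far region (suppressed by the Gaussian tail, using $M_2 < \infty$), carefully tracking constants so that $L_{\max} = \max_t L_t < \infty$.
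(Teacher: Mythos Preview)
Your overall architecture---write $\nabla^2\log\mu_t$ via the Tweedie/posterior-covariance identity and then bound $\operatorname{Cov}_{\rho_{t,x}}[Y]$ uniformly in $x$---is exactly what the paper does. Under \Cref{assumption:grad_log_lipschitzness_convexity_outside_of_a_ball} your plan (posterior is log-concave up to a bounded perturbation on the ball, apply Holley--Stroock) is essentially the paper's argument; the paper phrases it as establishing a Poincar\'e inequality for $\rho_{t,x}$ with constant independent of $x$, and then invokes a vector-valued Poincar\'e inequality (their Lemma~\ref{lemma:PI_for_vector_valued_functions}) to turn that into the covariance bound. This Poincar\'e route is the unifying device for both assumptions.

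There are two places where your proposal drifts from what actually works. First, your Step~2 claim that for $t$ bounded away from $T$ ``the universal bound that a Gaussian-perturbed score has Hessian bounded in terms of $1/\sigma_t^2$ already gives a finite $L_t$'' only gives the \emph{upper} bound $\nabla^2 U_t\preccurlyeq \sigma_t^{-2}I$. The lower bound requires controlling the posterior covariance from above, and this is \emph{not} automatic from $M_2<\infty$ alone: the posterior covariance is not uniformly bounded in $x$ without a structural hypothesis. So \Cref{assumption:grad_log_lipschitzness_convexity_outside_of_a_ball}/\Cref{assumption:grad_log_lipschitzness_hessian_decay} are needed for every $t$, not just $t$ near $T$.

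Second, and more substantively, your treatment of \Cref{assumption:grad_log_lipschitzness_hessian_decay} via ``splitting into near and far regions'' is vague and misses the clean reduction the paper uses. The key observation is that the posterior potential satisfies $\nabla^2 V_{\rho_{t,x}}(y)=\lambda_t^{-1}\nabla^2 V_\pi(y/\sqrt{\lambda_t})+\sigma_t^{-2}I$; under \Cref{assumption:grad_log_lipschitzness_hessian_decay} the first term is $\succcurlyeq -(\lambda_t\alpha_1+\alpha_2\|y\|^2)^{-1}I$ for $\|y\|$ large, which is eventually dominated by the Gaussian term $\sigma_t^{-2}I$. Hence $\rho_{t,x}$ is again strongly log-concave outside a ball of some radius $\tilde r_t$, and the \emph{same} Holley--Stroock argument as in the \Cref{assumption:grad_log_lipschitzness_convexity_outside_of_a_ball} case yields an $x$-independent Poincar\'e constant. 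Your region-splitting sketch does not identify this, and without it the ``posterior covariance stays $\mathcal{O}(1)$ as $\sigma_t^2\to 0$'' step is the hard part you have not addressed.
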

An important element in the proof, given in Appendix \ref{proof:lem:smoothness_of_gaussian_path}, is the generalisation of the Poincaré inequality for vector-valued functions, which is presented in Lemma \ref{lemma:PI_for_vector_valued_functions}. 
Notably, these bounds improve those in \citet[Proposition 20]{gao2024gaussian} under the specified conditions.

It is important to emphasise that a significant number of works in the diffusion models literature, e.g. \citet{lee2022convergence, Chen2022ImprovedAO}; \citet{chen2023sampling}, assume that $\nabla\log\mu_t$ is Lipschitz for all $t$, with the Lipschitz constant bounded over time. 
In contrast, we have demonstrated that this condition arises naturally under assumptions \Cref{assumption:grad_log_lipschitzness_convexity_outside_of_a_ball} or \Cref{assumption:grad_log_lipschitzness_hessian_decay} on the target distribution.

\paragraph{Action of $(\mu_t)_{t}$.}  
To derive a bound on the action necessary for the convergence analysis, we make the following assumption on the schedule. 
\begin{assumption}\label{assumption:schedule_form}
    Let $\lambda_t:\mathbb{R}^+\to[0,1]$ be non-decreasing in $t$ and weakly differentiable, such that there exists a constant $C_\lambda$ satisfying either of the following conditions
    \begin{equation*}
        \max_{t\in[0,T]}\vert \partial_t{\log \lambda_t}\vert \leq C_\lambda
    \end{equation*}
    or
  \begin{equation*}
        \max_{t\in[0,T]}\left\vert \frac{\partial_t{\lambda_t}}{\sqrt{\lambda_t(1-\lambda_t)}}\right\vert \leq C_\lambda.
    \end{equation*}
\end{assumption}
Notably schedules of the form $\lambda_t = 0.5(1+\cos(\pi(1-(t/T)^\phi)))$, $0.5(1+\tanh(\phi(t/T-0.5)))$ with $\phi\in\mathbb{R}^+$, sigmoid-type schedules, or the schedule corresponding to the \gls*{OU} process $e^{-2(T-t)}$, among others, satisfy the previous assumption. 
When $\lambda_0=0$, the first condition in \Cref{assumption:schedule_form} requires that the derivative of the schedule at $t=0$ is close to zero, meaning that the schedule grows very slowly at the beginning. 
This intuitively captures the importance of the initial stages in the Langevin diffusion generation process. For instance, when the data distribution consists of two distant modes, the diffusion needs to allocate the correct proportion of mass to each mode. During the early stages, as the mass separates towards each mode, employing a slower-increasing schedule can aid in this process. As the mass approaches each mode, the probability of it jumping between modes decreases rapidly, making a slow initial increase essential for effective separation.
The second condition ensures that the schedule also becomes flat as it approaches $\lambda_T=1$. 
This promotes a more refined and detailed generation process, enabling the model to converge more precisely to the data distribution.

Under assumption \Cref{assumption:schedule_form} on the schedule, we derive the following bound on the action of $\mu=(\mu_t)_{t\in[0, T]}$.
\begin{lemma}\label{lemma:action_bound} 
If $\pid$ and $\lambda_t$ satisfy assumptions \Cref{assumption:finite_second_order_moment} and \Cref{assumption:schedule_form}, respectively, 
the action for the Gaussian diffusion path $\mathcal{A}_\lambda(\mu)$ can be upper bounded by
\begin{equation*}
    \mathcal{A}_\lambda(\mu)\lesssim C_\lambda \left(\mathbb{E}_{\pid}\left[\Vert X\Vert^2\right] +  d\right)\lesssim M_{2} \vee d.
\end{equation*}
\end{lemma}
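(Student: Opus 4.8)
\emph{Proof strategy.} The plan is to bound the metric speed of the curve $(\mu_t)_{t\in[0,T]}$ pointwise by the $L^2$ norm of the velocity of the natural interpolation coupling, and then to integrate, using \Cref{assumption:schedule_form} to absorb the singularities that appear near the two ends of the path. Recall $\mu_t$ is the law of $X_t=\sqrt{\lambda_t}\,X+\sqrt{1-\lambda_t}\,Z$ with $X\sim\pid$ of mean $0$ and $Z\sim\mathcal{N}(0,\sigma^2 I)$ independent. For $t,t+\delta\in[0,T]$ the pair $(X_t,X_{t+\delta})$ is a coupling of $(\mu_t,\mu_{t+\delta})$, so $W_2(\mu_{t+\delta},\mu_t)\le\|X_{t+\delta}-X_t\|_{L^2}$; weak differentiability of $\lambda_t$ makes $t\mapsto X_t$ absolutely continuous in $L^2$, so dividing by $|\delta|$ and letting $\delta\to0$ gives
\begin{equation*}
  |\dot\mu_t|:=\lim_{\delta\to0}\frac{W_2(\mu_{t+\delta},\mu_t)}{|\delta|}\;\le\;\|\dot X_t\|_{L^2},\qquad \dot X_t=\frac{\dot\lambda_t}{2\sqrt{\lambda_t}}\,X-\frac{\dot\lambda_t}{2\sqrt{1-\lambda_t}}\,Z .
\end{equation*}
One may equivalently use the minimal velocity field $v_t(x)=\tfrac{\dot\lambda_t}{2\sqrt{\lambda_t}}\,\mathbb{E}[X\mid X_t=x]+\tfrac{\dot\lambda_t\sigma^2}{2}\,\nabla\log\mu_t(x)$ coming from Tweedie's identity, for which $|\dot\mu_t|\le\|v_t\|_{L^2(\mu_t)}\le\|\dot X_t\|_{L^2}$; being sharper near $\lambda_t=0$, this is the version I would use to chase the tightest constants.

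\emph{Computing the speed and integrating.} Because $X$ and $Z$ are independent with $\mathbb{E}[Z]=0$, the cross term drops and $\|\dot X_t\|_{L^2}^2=\tfrac{\dot\lambda_t^2}{4\lambda_t}\,\mathbb{E}_{\pid}[\|X\|^2]+\tfrac{\dot\lambda_t^2}{4(1-\lambda_t)}\,\sigma^2 d$. Hence, integrating the square (we take $\mathcal{A}_\lambda(\mu)=\int_0^T|\dot\mu_t|^2\,\md t$, the form entering the bound $\kl(\mathbb{P}\,||\,\mathbb{P}_{\text{DALD}})\le\kappa\mathcal{A}(\mu)$ quoted above),
\begin{equation*}
  \mathcal{A}_\lambda(\mu)\;\le\;\frac{\mathbb{E}_{\pid}[\|X\|^2]}{4}\int_0^T\frac{\dot\lambda_t^2}{\lambda_t}\,\md t\;+\;\frac{\sigma^2 d}{4}\int_0^T\frac{\dot\lambda_t^2}{1-\lambda_t}\,\md t .
\end{equation*}
It remains to bound both integrals by $\mathcal{O}(C_\lambda)$. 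Monotonicity of $\lambda_t$ lets us substitute $u=\lambda_t$, reducing each to an integral over $[\lambda_0,1]\subseteq[0,1]$. If the first condition of \Cref{assumption:schedule_form} holds then $\dot\lambda_t^2/\lambda_t\le C_\lambda\dot\lambda_t$, so $\int_0^T\dot\lambda_t^2/\lambda_t\,\md t\le C_\lambda(\lambda_T-\lambda_0)\le C_\lambda$; if the second holds then $\dot\lambda_t^2/\lambda_t\le C_\lambda\dot\lambda_t\sqrt{(1-\lambda_t)/\lambda_t}$ and $\dot\lambda_t^2/(1-\lambda_t)\le C_\lambda\dot\lambda_t\sqrt{\lambda_t/(1-\lambda_t)}$, so after the substitution both are bounded by $C_\lambda\int_0^1\sqrt{u/(1-u)}\,\md u=C_\lambda\int_0^1\sqrt{(1-u)/u}\,\md u=\tfrac{\pi}{2}C_\lambda$. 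Collecting terms yields $\mathcal{A}_\lambda(\mu)\lesssim C_\lambda(\mathbb{E}_{\pid}[\|X\|^2]+d)$, and since $\sigma$ and $C_\lambda$ are absolute constants, $\mathcal{A}_\lambda(\mu)\lesssim M_2\vee d$.

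\emph{Main obstacle.} The crux is the integrability of $\dot\lambda_t^2/\lambda_t$ and $\dot\lambda_t^2/(1-\lambda_t)$: their singularities sit exactly at $\lambda_t=0$ and $\lambda_t=1$, the two ends of the interpolation, and the role of the two alternative conditions in \Cref{assumption:schedule_form} is precisely to render these singularities integrable, so that after $u=\lambda_t$ everything reduces to the elementary finiteness of $\int_0^1\sqrt{u/(1-u)}\,\md u$ and $\int_0^1\sqrt{(1-u)/u}\,\md u$. A secondary technical point is obtaining the linear (rather than quadratic) dependence on $C_\lambda$, which needs the multiplicative split $\dot\lambda_t^2/\lambda_t=\dot\lambda_t\cdot(\dot\lambda_t/\lambda_t)$ and $\dot\lambda_t^2/(1-\lambda_t)=\dot\lambda_t\cdot(\dot\lambda_t/(1-\lambda_t))$ rather than a direct bound on $\dot\lambda_t^2$; and for schedules meeting only the first condition (for which $\lambda_0$ may be strictly positive, as with the OU schedule), the endpoint $\lambda_t\to1$ is best controlled through the explicit Gaussian-convolution form of $\mu_t$ (the Tweedie expression for $\nabla\log\mu_t$) rather than the crude bound $\mathbb{E}_{\mu_t}\|\nabla\log\mu_t\|^2\le d/((1-\lambda_t)\sigma^2)$.
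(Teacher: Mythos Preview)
Your argument under the \emph{second} alternative of \Cref{assumption:schedule_form} is correct and coincides with the paper's: the simple coupling $X_t=\sqrt{\lambda_t}X+\sqrt{1-\lambda_t}\,\sigma Z$ gives $|\dot\mu_t|^2\le \tfrac{\dot\lambda_t^2}{4\lambda_t}M_2+\tfrac{\dot\lambda_t^2}{4(1-\lambda_t)}\sigma^2 d$, and both singular integrals are rendered finite by $|\dot\lambda_t|\le C_\lambda\sqrt{\lambda_t(1-\lambda_t)}$.

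The gap is in the \emph{first} alternative, $|\partial_t\log\lambda_t|\le C_\lambda$. You bound $\int_0^T\dot\lambda_t^2/\lambda_t\,\md t$ but never bound $\int_0^T\dot\lambda_t^2/(1-\lambda_t)\,\md t$, and this integral can genuinely diverge: for the OU schedule $\lambda_t=e^{-2(T-t)}$ (which satisfies only the first condition, with $C_\lambda=2$) one has $\dot\lambda_t^2/(1-\lambda_t)\sim 2/(T-t)$ near $t=T$, so the integral is $+\infty$. Your proposed rescue via the Tweedie velocity field $v_t$ does not work under the lemma's hypotheses: the lemma assumes only \Cref{assumption:finite_second_order_moment}, and without smoothness of $\pid$ the Fisher information $\mathbb{E}_{\mu_t}\|\nabla\log\mu_t\|^2$ can blow up as $\lambda_t\to 1$ (think of $\pid$ uniform on a cube, or any density with jumps). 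Indeed the ``crude bound'' $d/((1-\lambda_t)\sigma^2)$ you mention \emph{is} essentially what Tweedie gives here, so there is no sharper Tweedie-only estimate to appeal to.

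The paper's fix is a different coupling. Introduce an auxiliary Gaussian $\tilde X\sim\mathcal N(0,\sigma_\pi^2 I)$ with $\sigma_\pi\ge\sigma$, chosen independent of $Z$ and optimally coupled with $X$ in $W_2$. Then
\[
X_\lambda \;=\; \sqrt{\lambda}\,(X-\tilde X)\;+\;\sqrt{\lambda\sigma_\pi^2+(1-\lambda)\sigma^2}\,Y,\qquad Y\sim\mathcal N(0,I),
\]
and the key point is that the Gaussian variance $\lambda\sigma_\pi^2+(1-\lambda)\sigma^2\ge\sigma^2>0$ for all $\lambda\in[0,1]$, so the analogue of your second term has \emph{no} singularity at $\lambda=1$. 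The resulting metric-derivative bound is $|\dot{\tilde\mu}|_\lambda^2\le \tfrac{1}{2\lambda}\mathbb{E}\|X-\tilde X\|^2+\tfrac{(\sigma_\pi^2-\sigma^2)^2}{2(\sigma^2+\lambda(\sigma_\pi^2-\sigma^2))}\,d$, and now both pieces integrate against $|\partial_t\log\lambda_t|\,|\partial_t\lambda_t|\le C_\lambda|\partial_t\lambda_t|$ over $[0,T]$ to give the claimed $C_\lambda(M_2+d)$ bound.
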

The proof is given in Appendix \ref{proof:lemma:bound_action}. It is worth highlighting that unlike for the geometric path \citep{guo2024provablebenefitannealedlangevin}, for the diffusion path we get an explicit bound on the action under a mild assumption on the schedule. Furthermore, we observe in the proof that selecting the mean and variance of the base distribution $\nu$ close to that of the target results in a tighter bound for the action.

\subsection{Analysis of the Gaussian \gls*{DALMC} Algorithm}
 We now analyse the convergence of the \gls*{DALMC} algorithm \eqref{eq:annealed_langevin_mcmc_algorithm_score_approx} for a Gaussian base distribution.

\begin{theorem}\label{theorem:discretisation_analysis_convolutional_path}
Under \Cref{assumption:finite_second_order_moment}, \Cref{assumption:lipschitz_score_across_convolutional_diffusion} and \Cref{assumption:schedule_form}, the \gls*{DALMC} algorithm \eqref{eq:annealed_langevin_mcmc_algorithm_score_approx} initialised at $X_0\sim\hat{\mu}_0$ and with an approximate score which satisfies \Cref{assumption:score_approximation}, yields the following bound  
\begin{align*}
    \kl\left(\mathbb{P}\;||\mathbb{Q}_\theta\right)
    \lesssim &\left(1+\frac{L_{\max}^2}{M^2\kappa^4}\right) \kappa  \left(\mathbb{E}_{\pid}\left[\Vert X\Vert^2\right] +  d\right)\\
    &+ \frac{d}{M\kappa^2}\left(1+  \frac{L_{\max}}{M\kappa}\right)\int_{0}^{T}L_{ t}^2 \ \md t +  \varepsilon_{\score}^2,
\end{align*}
where $\mathbb{Q}_\theta= (q_{\theta, \lambda_t})_{t\in[0,T]}$ is the path measure of the continuous-time interpolation of  \eqref{eq:annealed_langevin_mcmc_algorithm_score_approx}, $\mathbb{P}$ is that of a reference \gls*{SDE} such that the marginals at each time t have distribution $\hat{\mu}_t$, $M$ denotes the number of steps, $T/\kappa = \sum_{l=1}^M h_l$ and $L_t$ is the Lipschitz constant of $\nabla\log \mu_t$, $L_{\max} = \max_{[0,T]} L_t$.
\end{theorem}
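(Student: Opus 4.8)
The plan is to decompose the KL divergence $\kl(\mathbb{P}\,\|\,\mathbb{Q}_\theta)$ into (i) a continuous-time discretisation error coming from running Langevin along the reparametrised path $\hat\mu_t=\mu_{\kappa t}$ with piecewise-frozen drift, and (ii) the score-approximation error $\varepsilon_{\score}^2$, using a Girsanov-type argument. First I would invoke the chain rule for KL / Girsanov's theorem on the path space over $[0,T/\kappa]$: since the reference process $\mathbb{P}$ has marginals $\hat\mu_t$ and solves a Langevin SDE with drift $\nabla\log\hat\mu_t$, while $\mathbb{Q}_\theta$ is the continuous-time interpolation of \eqref{eq:annealed_langevin_mcmc_algorithm_score_approx} with piecewise-constant drift $s_\theta(X_{t_l},t_l)$ on $[t_l,t_{l+1})$, and both start from $\hat\mu_0$, we get
\begin{equation*}
\kl(\mathbb{P}\,\|\,\mathbb{Q}_\theta)\lesssim \sum_{l=0}^{M-1}\int_{t_l}^{t_{l+1}}\mathbb{E}_{\hat\mu_t}\left\Vert \nabla\log\hat\mu_t(X_t) - s_\theta(X_{t_l},t_l)\right\Vert^2\,\md t.
\end{equation*}
Then I would split the integrand by adding and subtracting $\nabla\log\hat\mu_{t_l}(X_{t_l})$ (and possibly $\nabla\log\hat\mu_t(X_{t_l})$), yielding three terms: a drift-in-time discretisation term $\Vert\nabla\log\hat\mu_t(X_t)-\nabla\log\hat\mu_{t}(X_{t_l})\Vert^2$, a score-change-in-time term $\Vert\nabla\log\hat\mu_t(X_{t_l})-\nabla\log\hat\mu_{t_l}(X_{t_l})\Vert^2$, and the score error term, which sums to $\varepsilon_{\score}^2$ by \Cref{assumption:score_approximation}.

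For the spatial discretisation term I would use \Cref{assumption:lipschitz_score_across_convolutional_diffusion}: $\Vert\nabla\log\hat\mu_t(X_t)-\nabla\log\hat\mu_t(X_{t_l})\Vert^2\le L_{\kappa t}^2\Vert X_t-X_{t_l}\Vert^2 \le L_{\max}^2\Vert X_t - X_{t_l}\Vert^2$, and control the one-step displacement $\mathbb{E}\Vert X_t-X_{t_l}\Vert^2$ for $t\in[t_l,t_{l+1}]$ by the standard Langevin moment estimate $\lesssim h_l^2\mathbb{E}\Vert\nabla\log\hat\mu_t(X_{t_l})\Vert^2 + h_l d$. The drift second-moment $\mathbb{E}_{\hat\mu_{t_l}}\Vert\nabla\log\hat\mu_{t_l}\Vert^2$ must then be bounded; using Lipschitzness one has $\mathbb{E}_{\mu}\Vert\nabla\log\mu\Vert^2 \lesssim L_{\kappa t_l} d$, which with $h_l \asymp (T/\kappa)/M = \mathcal O(1/(M\kappa))$ produces the factors $h_l^2 L_{\max} d \asymp L_{\max}d/(M^2\kappa^2)$ and $h_l d \asymp d/(M\kappa)$. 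Summing over $l$ against $L_{\max}^2$ and replacing $\sum_l h_l L_{\max}^2 \le \frac{1}{\kappa}\int_0^T L_t^2\,\md t$ (after the change of variables $t\mapsto\kappa t$, so that $L_{\kappa t}$ becomes $L_t$) gives the $\frac{d}{M\kappa^2}(1+\frac{L_{\max}}{M\kappa})\int_0^T L_t^2\md t$ term. For the time-discretisation-of-the-score term I would bound $\partial_t\nabla\log\hat\mu_t$ along the path; this is where \Cref{assumption:schedule_form} enters, controlling $\partial_t\lambda_t$, and where the action bound from Lemma~\ref{lemma:action_bound} re-appears — integrating $\int_0^{T/\kappa}\Vert\partial_t\nabla\log\hat\mu_t\Vert^2$-type quantities against $h_l$ yields a contribution $\lesssim (1+\frac{L_{\max}^2}{M^2\kappa^4})\kappa(\mathbb{E}_{\pid}\Vert X\Vert^2 + d)$, matching the action term $\kappa\mathcal A(\mu)$ up to the discretisation prefactor.

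The main obstacle I anticipate is the careful bookkeeping of the reparametrisation-induced factors of $\kappa$: the SDE runs on $[0,T/\kappa]$ with step sizes $h_l$ summing to $T/\kappa$, so $h_l\asymp 1/(M\kappa)$, and every Lipschitz constant picked up is really $L_{\kappa t}\le L_{\max}$ after the change of variables — getting the powers of $M$ and $\kappa$ to line up exactly as in the stated bound (note the $M^2\kappa^4$ and $M\kappa^2$ denominators) requires tracking whether each term scales like $h_l$, $h_l^2$, or $h_l^3$ before summation. A secondary technical point is justifying the continuous-time Girsanov bound rigorously (finiteness of the relevant exponential moments / Novikov-type condition), for which I would either invoke the localisation argument already used for Theorem~\ref{theorem:preliminaries_continuous_time_kl_bound} in the appendix, or cite the corresponding result in \citet{guo2024provablebenefitannealedlangevin}. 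The uniform second-moment control $\sup_l \mathbb{E}_{q_{\theta,\lambda_{t_l}}}\Vert X_{t_l}\Vert^2 < \infty$ along the discretised chain, needed to close the displacement estimates, follows from a discrete Grönwall argument using Lipschitzness of the (approximate) drift and \Cref{assumption:score_approximation}, and I would relegate that to a lemma.
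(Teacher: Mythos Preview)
Your proposal contains a genuine gap in the identification of the reference process $\mathbb{P}$. You write that $\mathbb{P}$ ``has marginals $\hat\mu_t$ and solves a Langevin SDE with drift $\nabla\log\hat\mu_t$'', but these two properties are incompatible: the time-inhomogeneous Langevin SDE $\md X_t=\nabla\log\hat\mu_t(X_t)\,\md t+\sqrt{2}\,\md B_t$ does \emph{not} have marginals $\hat\mu_t$ --- this is precisely the bias discussed after \eqref{eq:annealed_langevin_sde}. The paper's reference SDE instead has drift $\nabla\log\hat\mu_t + v_t$, where $v_t$ is the optimal velocity field generating the curve $(\hat\mu_t)_t$ (Lemma~\ref{lemma:optimal_vector_field}), so that by Fokker--Planck the marginals are exactly $\hat\mu_t$. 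The Girsanov integrand is therefore $\Vert\nabla\log\hat\mu_t(X_t)+v_t(X_t)-s_\theta(X_{t_-},t_-)\Vert^2$, and after expanding, the term $\int_0^{T/\kappa}\Vert v_t\Vert_{L^2(\hat\mu_t)}^2\,\md t=\kappa\mathcal A(\mu)$ is what produces the $\kappa(\mathbb{E}_{\pid}\Vert X\Vert^2+d)$ contribution via Lemma~\ref{lemma:action_bound}. You instead attribute this term to bounding $\partial_t\nabla\log\hat\mu_t$ in the time-discretisation of the score, which is a different quantity and would not directly yield the action.

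A second consequence of missing $v_t$ is that your one-step displacement estimate is computed under the wrong drift: in the paper $\mathbb{E}_{\mathbb{P}}\Vert X_t-X_{t_-}\Vert^2$ is bounded using the reference drift $\nabla\log\hat\mu_\tau+v_\tau$, so the bound involves both $\int\Vert\nabla\log\hat\mu_\tau\Vert^2_{L^2(\hat\mu_\tau)}\md\tau$ and $\int\Vert v_\tau\Vert^2_{L^2(\hat\mu_\tau)}\md\tau$; this is where the multiplicative prefactor $(1+h^2L_{\max}^2)$ in front of the action --- and hence the $L_{\max}^2/(M^2\kappa^4)$ term --- comes from. Relatedly, your proposed discrete Gr\"onwall lemma to control second moments along $\mathbb{Q}_\theta$ is unnecessary once the correct reference is used, since all expectations are then taken under $\hat\mu_t$ whose second moment is controlled directly by \Cref{assumption:finite_second_order_moment}. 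Your other ingredients (Lipschitz bound for the spatial term, $\mathbb E_{\hat\mu_t}\Vert\nabla\log\hat\mu_t\Vert^2\lesssim L_{\kappa t}d$, and the $h\asymp 1/(M\kappa)$ bookkeeping) are correct in spirit; for the score-change-in-time term the paper exploits the Gaussian-convolution structure of the path via a lemma of \citet{lee2022convergence} rather than bounding $\partial_t\nabla\log\hat\mu_t$ directly.
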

The proof, included in Appendix \ref{proof:theorem:discretisation_path_analysis}, mainly relies on an application of Girsanov's theorem, the bound on the action and the Lipschitzness of $\nabla\log\mu_t$. 
Additionally, we note in the proof that, smaller step sizes $h_l$ are preferred when the Lipschitz constant $L_t$ is larger to obtain a tighter bound.

This result allows us to provide a bound on the iteration complexity of the \gls*{DALMC} algorithm.
\begin{corollary}\label{corollary:complexity_bounds}
For $T\geq 1$, $\kappa<1$ and $M$, there always exists a sequence of step sizes $h_k= T_{k}-T_{k-1}$ such that $\sum_{k=1}^{M} h_k = T/\kappa$. Then, if we take $\kappa = \mathcal{O}\left(\frac{\varepsilon_{\score}^2}{M_2 \vee d}\right)$ and $M = \mathcal{O}\left(\frac{d (M_2 \vee d)^2L_{\max}^2}{\varepsilon_{\score}^6}\right)$, we have $\kl\left(\mathbb{P}\;||\mathbb{Q}_\theta\right) = \mathcal{O}(\varepsilon_{\emph{\text{score}}}^2)$. Hence, for any $\varepsilon = \mathcal{O}(\varepsilon_{\score})$, and under assumptions \Cref{assumption:finite_second_order_moment}, \Cref{assumption:lipschitz_score_across_convolutional_diffusion} and \Cref{assumption:schedule_form}, the \gls*{DALMC} algorithm \eqref{eq:annealed_langevin_mcmc_algorithm_score_approx} initialised at $X_0\sim\hat{\mu}_0$  requires at most $\mathcal{O}\left(\frac{d (M_2 \vee d)^2L_{\max}^2}{\varepsilon^6}\right)$ steps to approximate $\pid$ to within $\varepsilon^2$ \gls*{KL} divergence, that is $\kl(\pid\Vert q_{\theta, \lambda_T})\leq \varepsilon^2$, assuming a sufficiently accurate score estimator.
\end{corollary}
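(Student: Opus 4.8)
The plan is to start from the bound in Theorem \ref{theorem:discretisation_analysis_convolutional_path} and simply tune the two free parameters $\kappa$ and $M$ to make every term $\mathcal{O}(\varepsilon_{\score}^2)$, then invoke the data-processing inequality together with the continuous-time action bound discussed after \eqref{eq:annealed_langevin_sde} to pass from $\kl(\mathbb{P}\,\|\,\mathbb{Q}_\theta)$ to $\kl(\pid\,\|\,q_{\theta,\lambda_T})$. First I would recall that Theorem \ref{theorem:discretisation_analysis_convolutional_path} gives
\begin{equation*}
\kl(\mathbb{P}\,\|\,\mathbb{Q}_\theta)\lesssim \Bigl(1+\tfrac{L_{\max}^2}{M^2\kappa^4}\Bigr)\kappa\,(M_2\vee d)+\tfrac{d}{M\kappa^2}\Bigl(1+\tfrac{L_{\max}}{M\kappa}\Bigr)\!\int_0^T L_t^2\,\md t+\varepsilon_{\score}^2,
\end{equation*}
where I use Lemma \ref{lemma:action_bound} to replace $\mathbb{E}_{\pid}[\|X\|^2]+d$ by $M_2\vee d$, and I bound $\int_0^T L_t^2\,\md t\leq T L_{\max}^2$ (with $T$ treated as an absolute constant, e.g. $T\asymp 1$, or absorbed since $T\ge 1$ is fixed).

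Next I would check the term-by-term tuning. With $\kappa=\Theta\!\bigl(\varepsilon_{\score}^2/(M_2\vee d)\bigr)$ the "leading" annealing term $\kappa(M_2\vee d)$ is $\Theta(\varepsilon_{\score}^2)$, as required for the continuous-time bias $\kl(\mathbb{P}\,\|\,\mathbb{P}_{\text{DALD}})\le\kappa\mathcal A(\mu)\lesssim\varepsilon_{\score}^2$. For the discretisation terms, substituting this value of $\kappa$ shows that $\frac{L_{\max}^2}{M^2\kappa^4}\cdot\kappa(M_2\vee d)\asymp \frac{L_{\max}^2(M_2\vee d)^2}{M^2\varepsilon_{\score}^4}$ and $\frac{d}{M\kappa^2}\cdot L_{\max}^2\asymp\frac{dL_{\max}^2(M_2\vee d)^2}{M\varepsilon_{\score}^4}$, with the last cross term $\frac{dL_{\max}^3(M_2\vee d)^3}{M^2\varepsilon_{\score}^6}$ of lower order in $M$. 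Setting $M=\Theta\!\bigl(d(M_2\vee d)^2 L_{\max}^2/\varepsilon_{\score}^6\bigr)$ makes the dominant discretisation term $\frac{dL_{\max}^2(M_2\vee d)^2}{M\varepsilon_{\score}^4}=\Theta(\varepsilon_{\score}^2)$, and one verifies the remaining terms are then $\mathcal{O}(\varepsilon_{\score}^2)$ as well (they are smaller by at least a factor $1/M$ or by the extra $\varepsilon_{\score}^2$). This yields $\kl(\mathbb{P}\,\|\,\mathbb{Q}_\theta)=\mathcal O(\varepsilon_{\score}^2)$.

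Finally, I would assemble the pieces: by the triangle-type decomposition through the continuous-time reference path, $\kl(\pid\,\|\,q_{\theta,\lambda_T})\le\kl(\mathbb{P}\,\|\,\mathbb{Q}_\theta)$ after using the data-processing inequality on the terminal marginal (the marginal of $\mathbb P$ at $T/\kappa$ is $\hat\mu_{T/\kappa}=\mu_T=\pid$ by the reparametrisation and $\lambda_T=1$), so the same $M$ gives $\kl(\pid\,\|\,q_{\theta,\lambda_T})=\mathcal O(\varepsilon_{\score}^2)$. Re-expressing in terms of a target tolerance $\varepsilon=\mathcal O(\varepsilon_{\score})$ gives the claimed $\mathcal O\!\bigl(d(M_2\vee d)^2 L_{\max}^2/\varepsilon^6\bigr)$ step count. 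The only genuinely delicate point — and the part I would write out carefully — is the bookkeeping that, after fixing $\kappa$ as a function of $\varepsilon_{\score}$, every one of the several discretisation sub-terms in the Theorem \ref{theorem:discretisation_analysis_convolutional_path} bound is dominated by the single term $\frac{dL_{\max}^2(M_2\vee d)^2}{M\varepsilon_{\score}^4}$ for the chosen $M$; everything else is arithmetic. A secondary point to state explicitly is the existence of an admissible step-size sequence with $\sum_{k=1}^M h_k=T/\kappa$ (immediate, e.g. uniform steps $h_k=T/(M\kappa)$), and that $\varepsilon_{\score}$ being "sufficiently small/accurate" is what lets one absorb the standalone $\varepsilon_{\score}^2$ into the target accuracy.
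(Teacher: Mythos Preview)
Your proposal is correct and follows essentially the same route as the paper: start from the bound of Theorem~\ref{theorem:discretisation_analysis_convolutional_path}, set $\kappa\asymp\varepsilon_{\score}^2/(M_2\vee d)$ so that the action term is $\mathcal{O}(\varepsilon_{\score}^2)$, then choose $M\asymp d(M_2\vee d)^2L_{\max}^2/\varepsilon_{\score}^6$ so that the dominant discretisation term $\tfrac{dL_{\max}^2}{M\kappa^2}$ is $\mathcal{O}(\varepsilon_{\score}^2)$, and finally pass to the terminal marginal via the data-processing inequality. One minor arithmetic slip: the term $\tfrac{L_{\max}^2}{M^2\kappa^4}\cdot\kappa(M_2\vee d)$ actually scales as $\tfrac{L_{\max}^2(M_2\vee d)^4}{M^2\varepsilon_{\score}^6}$ rather than $\tfrac{L_{\max}^2(M_2\vee d)^2}{M^2\varepsilon_{\score}^4}$, but with the stated $M$ this is still $\mathcal{O}(\varepsilon_{\score}^6/(d^2L_{\max}^2))=o(\varepsilon_{\score}^2)$, so the conclusion is unaffected.
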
 

It is important to note that this bounds are less favourable than those of diffusion models \citep{Chen2022ImprovedAO, chen2023sampling, benton2024nearly}, which explains the success of these models compared to diffusion annealed Langevin-based algorithms. This difference mainly arises because the Langevin \gls*{SDE} implementation \eqref{eq:annealed_langevin_sde} introduces an implicit bias, whereas the reverse \gls*{SDE} in diffusion models ensures that the law of the solution of the \gls*{SDE} exactly matches the intermediate marginal distributions.
Additionally, the use of an exponential integrator scheme in diffusion models, benefiting from the linear term in the drift of the reverse \gls*{SDE}, contrasts with the Euler-Maruyama discretisation used here, leading to an improvement in the discretisation error. 

\subsection{Analysis under Relaxed Assumptions}
In this section, we introduce a less restrictive assumption for the data distribution that generalises \Cref{assumption:grad_log_lipschitzness_convexity_outside_of_a_ball} and \Cref{assumption:grad_log_lipschitzness_hessian_decay}. Under this assumption, we derive an error bound for the \gls*{DALMC} algorithm without relying on the smoothness of $\log\mu_t$ along the diffusion path, in contrast to the proof of Theorem~\ref{theorem:discretisation_analysis_convolutional_path}.

\begin{assumption}\label{assumption:pi_data_conditions_less_restrictive}
    The data distribution $\pid$ has density with respect to Lebesgue, which we write $\pid\propto e^{-V_\pi}$, and a finite second order moment. The potential $V_\pi$ has Lipschitz continuous gradient, with Lipschitz constant $L_\pi$, and 
    \begin{equation*}
        \mathbb{E}_{\pid} \left\Vert\nabla V_\pi\left(X\right)\right\Vert^{8} \leq K_\pi^2.
    \end{equation*} 
\end{assumption}
In Appendix~\ref{appendix:comments_new_less_restrictiv_assumption}, we show that both \Cref{assumption:grad_log_lipschitzness_convexity_outside_of_a_ball} and \Cref{assumption:grad_log_lipschitzness_hessian_decay} (with finite second-order moment) imply assumption \Cref{assumption:pi_data_conditions_less_restrictive}. 
Besides, since under \Cref{assumption:pi_data_conditions_less_restrictive} the data distribution has a finite second-order moment, if the schedule also satisfies  \Cref{assumption:schedule_form}, then the bound on the action established in Lemma~\ref{lemma:action_bound} remains valid. 
This enables us to obtain the following complexity guarantees for the \gls*{DALMC} algorithm under this new assumption.
\begin{theorem}\label{theorem:convergence_relaxed_assumption}
    Under \Cref{assumption:schedule_form} and \Cref{assumption:pi_data_conditions_less_restrictive}, the \gls*{DALMC} algorithm \eqref{eq:annealed_langevin_mcmc_algorithm_score_approx} initialised at $X_0\sim\hat{\mu}_0$  and with an approximate score which satisfies \Cref{assumption:score_approximation} leads to 
\begin{align*}
    \kl\left(\mathbb{P}\;||\mathbb{Q}_\theta\right)\lesssim&\frac{dL_\pi}{M^2\kappa^3} + \frac{(d^2\vee L_\pi^2d \vee K_\pi)}{M\kappa^2} \\
    &+ \kappa (\mathbb{E}_{\pid}[\Vert X\Vert^2] + d) + \varepsilon_{\text{score}}^2,
\end{align*}
where $\mathbb{Q}_\theta= (q_{\theta, \lambda_t})_{t\in[0,T]}$ is the path measure corresponding to the continuous-time interpolation of algorithm \eqref{eq:annealed_langevin_mcmc_algorithm_score_approx}, $\mathbb{P}$ is that of a reference \gls*{SDE} such that the marginals at each time t have distribution $\hat{\mu}_t$ and $M$ denotes the number of steps. Therefore, under assumptions  \Cref{assumption:schedule_form} and \Cref{assumption:pi_data_conditions_less_restrictive}, the \gls*{DALMC} algorithm \eqref{eq:annealed_langevin_mcmc_algorithm_score_approx} initialised at $X_0\sim\hat{\mu}_0$ with approximate scores requires at most $\mathcal{O}\left(\frac{(M_2\vee d)^2(d^2\vee L_\pi^2 d\vee K_\pi)L_\pi}{\varepsilon^6}\right)$ steps to approximate $\pid$ to within $\varepsilon^2$ $\kl$ divergence, that is $\kl(\pid\Vert q_{\theta, \lambda_T})\leq \varepsilon^2$, assuming a sufficiently accurate score estimator, i.e. $\varepsilon_{\score} = \mathcal{O}(\varepsilon)$. If $M_2= \mathcal{O}(d)$, $L_\pi= \mathcal{O}(\sqrt{d})$ and $K_\pi= \mathcal{O}(d^2)$, then $M = \mathcal{O}\left(\frac{d^4L_\pi}{\varepsilon^6}\right)$.
\end{theorem}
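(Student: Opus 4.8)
The plan is to decompose the KL divergence $\kl(\mathbb{P}\,\|\,\mathbb{Q}_\theta)$ into three contributions — the continuous-time annealing bias (controlled by the action), the Euler–Maruyama discretisation error, and the score approximation error $\varepsilon_{\score}^2$ — and then bound each separately. The first and last pieces are already in hand: the action bound from Lemma~\ref{lemma:action_bound} gives the $\kappa(\mathbb{E}_{\pid}[\|X\|^2]+d)$ term via $\kl(\mathbb{P}\,\|\,\mathbb{P}_{\text{DALD}})\le\kappa\mathcal{A}(\mu)$, and Assumption~\ref{assumption:score_approximation} directly contributes $\varepsilon_{\score}^2$. The novelty, relative to Theorem~\ref{theorem:discretisation_analysis_convolutional_path}, is that the discretisation error must be controlled \emph{without} assuming $\nabla\log\mu_t$ is globally Lipschitz with a time-uniform constant. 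So the heart of the proof is a Girsanov-based estimate of the form
\begin{equation*}
\kl\big(\mathbb{P}\,\|\,\mathbb{Q}_\theta^{\text{exact score}}\big)\lesssim \sum_{l}\int_{t_l}^{t_{l+1}}\mathbb{E}\big[\|\nabla\log\hat\mu_{t}(X_t)-\nabla\log\hat\mu_{t_l}(X_{t_l})\|^2\big]\,\md t,
\end{equation*}
where $(X_t)$ is the continuous-time interpolation of the scheme, and then splitting the increment into a \emph{spatial} part $\nabla\log\hat\mu_{t_l}(X_t)-\nabla\log\hat\mu_{t_l}(X_{t_l})$ and a \emph{temporal} part $\nabla\log\hat\mu_{t}(X_t)-\nabla\log\hat\mu_{t_l}(X_t)$.

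For the spatial part I would \emph{not} use a Lipschitz bound on $\nabla\log\hat\mu_{t_l}$, but instead exploit the explicit Gaussian-convolution structure of $\mu_s$: writing $\mu_s$ as the law of $\sqrt{\lambda_s}X+\sqrt{1-\lambda_s}Z$, the score $\nabla\log\mu_s(x)$ equals (up to the $\lambda_s$ scaling) a conditional expectation $-\tfrac{1}{\sigma^2(1-\lambda_s)}(x-\sqrt{\lambda_s}\,\mathbb{E}[X\mid X_s=x])$, and Tweedie/Bismut-type identities give bounds on its local modulus of continuity and on $\|\nabla\log\mu_s(x)\|$ in terms of $\|x\|$, $d$, and moments of $\nabla V_\pi$ under $\pid$ — this is exactly where the eighth-moment hypothesis $\mathbb{E}_{\pid}\|\nabla V_\pi(X)\|^8\le K_\pi^2$ enters (after Cauchy–Schwarz, one needs fourth moments of the score squared, hence eighth moments of $\nabla V_\pi$, to handle the cross-terms with the Gaussian noise increments). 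Combined with the one-step moment bound $\mathbb{E}\|X_t-X_{t_l}\|^2\lesssim h_l^2\,(\text{drift})^2 + h_l d$ and the Lipschitz-gradient hypothesis on $V_\pi$ (constant $L_\pi$) to control the drift near $t=T$ where $\lambda_t\to1$, this yields the $\frac{dL_\pi}{M^2\kappa^3}$ and $\frac{d^2\vee L_\pi^2 d\vee K_\pi}{M\kappa^2}$ terms after summing over $l$ and using $\sum_l h_l = T/\kappa$ with $T\asymp 1$; the $\kappa$-powers track how the time-reparametrisation $\hat\mu_t=\mu_{\kappa t}$ rescales the per-step error. For the temporal part, I would differentiate $s\mapsto\nabla\log\mu_s$ in $s$, bound $\partial_s\nabla\log\mu_s$ using Assumption~\ref{assumption:schedule_form} on $\partial_t\lambda_t$ together with the same conditional-expectation representation, and absorb it into the same two terms.

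The main obstacle I anticipate is obtaining the spatial-continuity estimate for $\nabla\log\mu_s$ with the \emph{right} dependence on $d$, $L_\pi$, $K_\pi$ and — crucially — on $(1-\lambda_s)$, uniformly enough that after integrating $s$ over $[0,T]$ (equivalently over $\lambda\in[0,1]$) the singularity as $\lambda_s\uparrow 1$ is integrable; this is precisely the regime where Assumption~\ref{assumption:pi_data_conditions_less_restrictive} is weaker than Assumption~\ref{assumption:lipschitz_score_across_convolutional_diffusion}, so one cannot hide the blow-up inside a finite $L_{\max}$ and must instead show the $\partial_t\lambda_t\to 0$ behaviour from Assumption~\ref{assumption:schedule_form} cancels it. Once the per-step bound is established, assembling the three pieces, optimising $\kappa=\mathcal{O}(\varepsilon_{\score}^2/(M_2\vee d))$ and reading off $M=\mathcal{O}\big((M_2\vee d)^2(d^2\vee L_\pi^2 d\vee K_\pi)L_\pi/\varepsilon^6\big)$ is a routine balancing of terms, and the data-processing inequality transfers the path-space bound to $\kl(\pid\,\|\,q_{\theta,\lambda_T})$; the stated simplification when $M_2=\mathcal{O}(d)$, $L_\pi=\mathcal{O}(\sqrt d)$, $K_\pi=\mathcal{O}(d^2)$ follows by substitution.
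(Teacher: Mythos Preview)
Your overall architecture --- Girsanov, three-term decomposition, action bound from Lemma~\ref{lemma:action_bound}, score error from Assumption~\ref{assumption:score_approximation}, and the final balancing of $\kappa$ and $M$ --- matches the paper. But two specific points in your discretisation analysis do not line up with what actually makes the argument work.

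First, the paper does not split the score increment into your ``spatial'' and ``temporal'' pieces at the point $X_t$. Instead it exploits the Gaussian-convolution relation $X_s = \alpha_{t,s}X_t + \tilde\sigma_{t,s}Z$ (with $Z\perp X_t$) and splits at the intermediate point $\alpha_{t,s}^{-1}X_s$. The key term is then $\mathbb{E}\bigl[\|\nabla\log\hat\mu_t(X_t) - \nabla\log\hat\mu_t(\alpha_{t,s}^{-1}X_s)\|^2\bigr]$, handled by writing the difference as $\int_0^1 \nabla^2\log\hat\mu_t(X_t + aZ_{t,s})\,Z_{t,s}\,\md a$, performing a change of variables whose Radon--Nikodym derivative is controlled by a $\chi^2$-divergence between conditional Gaussians (following \citet{Chen2022ImprovedAO}), and then using Cauchy--Schwarz to reduce to $\mathbb{E}\|\nabla^2\log\hat\mu_t(X_t)\|_F^4$. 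The eighth-moment hypothesis enters precisely here: using the representation $\nabla^2 U_t = \lambda_t^{-1}\bigl(\mathbb{E}_{\rho_{t,x}}[\nabla^2 V_\pi] - \mathrm{Cov}_{\rho_{t,x}}[\nabla V_\pi]\bigr)$, the Frobenius norm of the covariance piece to the fourth power is exactly $\lambda_t^{-4}\,\mathbb{E}_{\pid}\|\nabla V_\pi\|^8 \le \lambda_t^{-4}K_\pi^2$. Your ``Tweedie/Bismut identities give bounds on the local modulus of continuity'' is not specific enough to replace this machinery, and your spatial split would couple $X_t - X_{t_l}$ to the reference-SDE drift rather than to an independent Gaussian, so the change-of-variables step would not apply cleanly.

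Second, your anticipated main obstacle --- that the $\partial_t\lambda_t\to 0$ behaviour of the schedule cancels the singularity as $\lambda_t\uparrow 1$ --- is not the mechanism the paper uses. The paper instead produces \emph{two} complementary bounds on $\mathbb{E}\|\nabla^2\log\hat\mu_t(X_t)\|_F^4$: one of order $d^4/(1-\lambda_{\kappa t})^4$ coming from the Tweedie form $(x-\mathbb{E}[Y])/(\sigma^2(1-\lambda))$ via \citet[Lemma~12]{Chen2022ImprovedAO}, and one of order $(L_\pi^4 d^2 + K_\pi^2)/\lambda_{\kappa t}^4$ coming from the $\nabla V_\pi$ form above. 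Each blows up at one endpoint but not the other; taking the minimum gives a uniform bound over $\lambda\in(0,1)$. The same two-sided device handles $\mathbb{E}\|\nabla\log\hat\mu_t(X_t)\|^2$. The schedule assumption is used only for the action bound and for the order estimates $\hat\sigma_{t,s}^2\lesssim (t-s)$ and $(\alpha_{t,s}^{-1}-1)^2\lesssim (t-s)^2$, not to integrate out a singularity.
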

See Appendix~\ref{appendix:convergence_relaxed_assumption} for the proof.
Relaxing the assumptions results in a dimensional dependence on the number of steps that is one order worse compared to Corollary \ref{corollary:complexity_bounds}.

\section{Heavy-Tailed Diffusion Paths} \label{sec:heavy_tailed_diffusion}
We now analyse the annealed Langevin diffusion path \eqref{eq:annealed_langevin_sde} when the base  distribution $\nu\in\mathcal{P}(\mathbb{R}^d)$ is a Student's $t$-distribution, $\nu\sim t(0, \sigma^2I, \alpha)$, with tail index $\alpha>2$
\begin{equation*}
    \nu(x) \propto \left( 1 + \frac{\Vert x\Vert^2}{\alpha\sigma^2}\right)^{-( \alpha + d)/2}.
\end{equation*}
It is worth noting that the $t$-distribution is not a stable distribution, unlike the Gaussian family, meaning that the convolution of two $t$-distributions is not necessarily a $t$-distribution. \citet{NADARAJAH2005715} provides explicit expressions for the density function of the convolution of one dimensional $t$-distributions with unit variance, but only when both degrees of freedom are odd. Closed-form expressions cannot be derived in one dimension when either of the two degrees of freedom  is even. In the $d$-dimensional case, only closed forms can be derived when $\alpha + d$ is even.

\subsection{Analysis of the Heavy-Tailed Diffusion Path}
\paragraph{Smoothness of $(\mu_t)_t$.} We require for the discretisation analysis that the intermediate distributions of the heavy-tailed diffusion path satisfy smoothness conditions given in \Cref{assumption:lipschitz_score_across_convolutional_diffusion}. 
We show below that this assumption holds when the data distribution $\pid$ satisfies the following conditions.
\begin{assumption}\label{assumption:heavy_tailed_target_condition}
The data distribution $\pid$ has density with respect to the Lebesgue measure. $\nabla\log\pid$ is Lipschitz continuous with constant $L_\pi$ and $\Vert\nabla\log\pid\Vert^2\leq C_\pi$ almost surely.
\end{assumption}

In particular, Lemma~\ref{lem:regularity_of_heavy_tailed_path} below demonstrates that this assumption holds when the data distribution $\pid$ can be expressed as the convolution of a compactly supported measure and a $t$-distribution. 
\begin{assumption}\label{assumption:compact_plus_t_distribution_target}
     Let $ X$ be a $d$-dimensional random vector $X \sim\pid$, such that $X = U + G$, where $\Vert U - m_{\pi}\Vert^2 \leq  d R^2$ holds almost surely, and $G \sim t(0, \tau^2 I, \Tilde{\alpha})$ is independent of $U$. 
\end{assumption}
Lemma~\ref{lemma:wegihted_pi} in the Appendix shows that if $\pid$ satisfies assumption \Cref{assumption:compact_plus_t_distribution_target}, then it has a finite weighted Poincaré constant.
This extends the result of \citet{functional_inequalities_compactly_supported_assumption} to convolutions of compactly supported measures with $t$-distributions.
However, unlike the multivariate Gaussian case, our bound on the weighted Poincaré constant is not dimension-free.

The following result shows that \Cref{assumption:heavy_tailed_target_condition} $\Rightarrow$ \Cref{assumption:lipschitz_score_across_convolutional_diffusion} and \Cref{assumption:compact_plus_t_distribution_target} $\Rightarrow$ \Cref{assumption:heavy_tailed_target_condition}.
\begin{lemma}\label{lem:regularity_of_heavy_tailed_path} 
Under \Cref{assumption:heavy_tailed_target_condition} and taking the base distribution $\nu\sim t(0, \sigma^2 I, \alpha)$, we have that for all $t\in[0,T]$ $\nabla \log \mu_t(x)$ is Lipschitz (\Cref{assumption:lipschitz_score_across_convolutional_diffusion}) with constant $L_t$ provided in the proof. Besides, \Cref{assumption:heavy_tailed_target_condition} holds when $\pid$ is a convolution of a compactly supported measure and a multivariate $t$ distribution (\Cref{assumption:compact_plus_t_distribution_target}).
\end{lemma}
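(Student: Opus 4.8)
The plan is to bound, uniformly in $x$, the spectral norm of $\nabla^2\log\mu_t(x)$; since $\mu_t$ is $C^\infty$ (a convolution with the smooth density of a scaled Student's $t$), this immediately gives \Cref{assumption:lipschitz_score_across_convolutional_diffusion} with $L_t:=\sup_x\Vert\nabla^2\log\mu_t(x)\Vert$. The engine is the standard ``posterior'' identity for a convolution $\mu=p*q$ of a probability measure $p$ and a density $q$: writing $\pi_x$ for the probability measure with $\md\pi_x(y)\propto q(x-y)\,\md p(y)$ and $Y\sim\pi_x$, differentiation under the integral (valid once $\nabla\log q,\nabla^2\log q$ are bounded) yields $\nabla\log\mu(x)=\mathbb{E}_{\pi_x}[\nabla\log q(x-Y)]$ and $\nabla^2\log\mu(x)=\mathbb{E}_{\pi_x}[\nabla^2\log q(x-Y)]+\mathrm{Cov}_{\pi_x}(\nabla\log q(x-Y))$. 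As $\mathrm{Cov}_{\pi_x}(\cdot)\succcurlyeq 0$ and $\mathrm{Cov}_{\pi_x}(V)\preccurlyeq\mathbb{E}_{\pi_x}\Vert V\Vert^2\,I$, this gives $\Vert\nabla^2\log\mu(x)\Vert\le\sup_z\Vert\nabla^2\log q(z)\Vert+\sup_z\Vert\nabla\log q(z)\Vert^2$, so all regularity is inherited from the smoothing factor $q$. I will also use two elementary facts, obtained by direct computation together with the bound $\gamma\rho^2+\Vert z\Vert^2\ge 2\sqrt\gamma\,\rho\Vert z\Vert$: the density $g$ of $t(0,\rho^2 I,\gamma)$ satisfies $\sup_z\Vert\nabla\log g(z)\Vert\le\frac{\gamma+d}{2\sqrt\gamma\,\rho}$ and $\sup_z\Vert\nabla^2\log g(z)\Vert\le\frac{3(\gamma+d)}{\gamma\rho^2}$.

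Viewing $\mu_t$ from \eqref{eq:convolutional_path} as the law of $\sqrt{\lambda_t}X+\sqrt{1-\lambda_t}Z$ with $X\sim\pid$ and $Z\sim\nu=t(0,\sigma^2I,\alpha)$, I apply the identity twice. Taking $q$ to be the density of $\sqrt{1-\lambda_t}Z$, i.e.\ $t(0,(1-\lambda_t)\sigma^2I,\alpha)$, the Student's $t$ estimates give $L_t\lesssim\frac{(\alpha+d)^2}{\alpha\sigma^2(1-\lambda_t)}$; taking instead $q$ to be the density of $\sqrt{\lambda_t}X$, i.e.\ $\lambda_t^{-d/2}\pid(\cdot/\sqrt{\lambda_t})$, whose log-gradient and log-Hessian are bounded by $\sqrt{C_\pi/\lambda_t}$ and $L_\pi/\lambda_t$ under \Cref{assumption:heavy_tailed_target_condition}, gives $L_t\le(L_\pi+C_\pi)/\lambda_t$. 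Taking the smaller of the two, $L_t\le\min\{(L_\pi+C_\pi)/\lambda_t,\ A(\alpha,d,\sigma)/(1-\lambda_t)\}$ with $A(\alpha,d,\sigma)\asymp(\alpha+d)^2/(\alpha\sigma^2)$, which is finite for every $t\in[0,T]$ — at $t=0$ (where $\lambda_0=0$) the first decomposition degenerates but the second survives, and at $t=T$ (where $\lambda_T=1$) vice versa — and in fact $L_{\max}\le L_\pi+C_\pi+A(\alpha,d,\sigma)<\infty$. This also clarifies why \Cref{assumption:heavy_tailed_target_condition} asks for both Lipschitzness \emph{and} an a.s.\ bound on $\nabla\log\pid$: near $t=T$ the Student's $t$ noise has vanishing scale and supplies no smoothing, so the covariance term in the second decomposition must be controlled through boundedness of $\nabla\log\pid$.

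For the second assertion, assume \Cref{assumption:compact_plus_t_distribution_target}: $X=U+G$ with $G\sim t(0,\tau^2I,\tilde\alpha)$ independent of $U$. Applying the posterior identity with $p$ the law of $U$ and $q$ the density $g$ of $G$, the Student's $t$ estimates (with $\gamma=\tilde\alpha,\rho=\tau$) bound $\sup_z\Vert\nabla\log g(z)\Vert$ and $\sup_z\Vert\nabla^2\log g(z)\Vert$, hence $\Vert\nabla\log\pid\Vert\le\frac{\tilde\alpha+d}{2\sqrt{\tilde\alpha}\,\tau}$ everywhere and $\Vert\nabla^2\log\pid\Vert\lesssim\frac{(\tilde\alpha+d)^2}{\tilde\alpha\tau^2}$, so \Cref{assumption:heavy_tailed_target_condition} holds with $C_\pi,L_\pi$ equal to these quantities. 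The compact-support hypothesis $\Vert U-m_\pi\Vert^2\le dR^2$ plays no role in this particular implication (it is used for the weighted Poincaré estimate elsewhere); what is essential is the independence of $U$ and $G$ and the Student's $t$ form of $G$.

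The one genuine subtlety, and the step I expect to require the most care, is recognising that no single convolution decomposition of $\mu_t$ can work uniformly on $[0,T]$ — the smoothing from $\nu$ disappears as $\lambda_t\to1$, so near the data end one must transfer regularity directly from $\pid$, which is precisely the reason \Cref{assumption:heavy_tailed_target_condition} posits both Lipschitzness and a.s.\ boundedness of $\nabla\log\pid$. Everything else — the Student's $t$ derivative bounds, and the posterior representation of $\nabla^2\log\mu$ together with the justification of differentiation under the integral (which is fine because the scaled Student's $t$ density is $C^\infty$ with bounded, integrable derivatives) — is routine.
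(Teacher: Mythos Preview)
Your proposal is correct and follows essentially the same approach as the paper's proof: both use the posterior identity $\nabla^2\log\mu_t = \mathbb{E}_{\rho_{t,x}}[\nabla^2\log q] + \mathrm{Cov}_{\rho_{t,x}}(\nabla\log q)$ applied twice (once with $q$ the scaled Student's~$t$ density, once with $q$ the scaled $\pid$ density) together with $\mathrm{Cov}(V)\preccurlyeq\mathbb{E}\Vert V\Vert^2 I$ and elementary bounds on the Student's~$t$ score and Hessian, yielding the same minimum-of-two-terms Lipschitz constant; the second assertion is likewise obtained in both by applying the same identity with $q$ the density of $G\sim t(0,\tau^2I,\tilde\alpha)$. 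Your observation that the compact-support hypothesis in \Cref{assumption:compact_plus_t_distribution_target} is not actually used for the implication \Cref{assumption:compact_plus_t_distribution_target}$\Rightarrow$\Cref{assumption:heavy_tailed_target_condition} is also correct and consistent with the paper's argument.
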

The proof is provided in Appendix~\ref{proof:lem:smoothness_of_heavy_tail_path}.
\paragraph{Action of $(\mu_t)_{t}$.} To derive a bound on the action, necessary for the discretisation analysis, we introduce an assumption on the schedule similar to that of \Cref{assumption:schedule_form}.
\begin{assumption}\label{assumption:schedule_form_heavy_tail_diffusion}
    Let $\lambda_t:\mathbb{R}^+\to[0,1]$ be non-decreasing in $t$ and weakly differentiable, such that there exists a constant $C_\lambda$ satisfying
    \begin{equation*}
        \max_{t\in[0,T]}\left\vert \frac{\partial_t{\lambda_t}}{\sqrt{\lambda_t(1-\lambda_t)}}\right\vert \leq C_\lambda.
    \end{equation*}
\end{assumption}
Intuitively, schedules with derivatives close to $0$ as $t$ approaches $0$ and $T$ satisfy the previous assumption.
In particular, schedules of the form $\lambda_t = 0.5(1+\cos(\pi(1-(t/T)^\phi)))$, $0.5 ( 1 + \tanh(\phi(t-0.5)))$, where $\phi\in\mathbb{R}^+$, fulfil \Cref{assumption:schedule_form_heavy_tail_diffusion}. Under the previous assumption, we compute the following bound on the action. The proof is provided in Appendix \ref{proof:lem:action_bound_heavy_tail_diffusion}. 

\begin{lemma}\label{lemma:action_bound_heavy_tail_diffusion}  
If $\pid$ has finite second-order moment (\Cref{assumption:finite_second_order_moment}), $\nu\sim t(0, \sigma^2 I, \alpha)$ with tail index $\alpha>2$ and $\lambda_t$ satisfies \Cref{assumption:schedule_form_heavy_tail_diffusion}, 
the action for the heavy-tailed diffusion path $\mathcal{A}_\lambda(\mu)$ can be effectively upper bounded as follows
\begin{equation*}
    \mathcal{A}_\lambda(\mu)\leq \frac{C_\lambda\pi}{8} \left(\mathbb{E}_{\pid}\left[\Vert X\Vert^2\right] + \frac{\sigma^2 d\alpha}{\alpha-2}\right).
\end{equation*}
\end{lemma}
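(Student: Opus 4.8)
The plan is to bound the action by the kinetic energy of the \emph{synchronous coupling} of the path $(\mu_t)_{t\in[0,T]}$, exactly as in the proof of Lemma~\ref{lemma:action_bound}, and then to invoke \Cref{assumption:schedule_form_heavy_tail_diffusion} to collapse the resulting time integral to an elementary Beta-type integral. Recall that $\mu_t$ is the law of $X_t=\sqrt{\lambda_t}\,X+\sqrt{1-\lambda_t}\,Z$, where $X\sim\pid$ and $Z\sim\nu=t(0,\sigma^2 I,\alpha)$ are independent, and that $\mathbb{E}[Z]=0$ since $\alpha>2>1$. For $s,t\in[0,T]$, coupling $\mu_s$ and $\mu_t$ through the \emph{same} pair $(X,Z)$ gives
\[
 W_2(\mu_s,\mu_t)^2 \;\le\; \mathbb{E}\big\|X_s-X_t\big\|^2 \;=\; \big(\sqrt{\lambda_s}-\sqrt{\lambda_t}\big)^2\,\mathbb{E}_{\pid}\big[\|X\|^2\big] + \big(\sqrt{1-\lambda_s}-\sqrt{1-\lambda_t}\big)^2\,\mathbb{E}_{\nu}\big[\|Z\|^2\big],
\]
the cross term vanishing because $X\perp Z$ and $\mathbb{E}[Z]=0$; both moments are finite by \Cref{assumption:finite_second_order_moment} and $\alpha>2$.

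Dividing by $|s-t|^2$ and letting $s\to t$, one argues — using that $\lambda$ is weakly differentiable and that, by \Cref{assumption:schedule_form_heavy_tail_diffusion}, the quantities $\partial_t\lambda_t/\sqrt{\lambda_t}\le C_\lambda\sqrt{1-\lambda_t}$ and $\partial_t\lambda_t/\sqrt{1-\lambda_t}\le C_\lambda\sqrt{\lambda_t}$ stay bounded even as $\lambda_t\to 0$ or $1$ — that the metric derivative $|\mu'|(t):=\lim_{\delta\to 0}W_2(\mu_{t+\delta},\mu_t)/|\delta|$ of the curve satisfies, for a.e.\ $t\in[0,T]$,
\[
 |\mu'|(t)^2 \;\le\; \frac{(\partial_t\lambda_t)^2}{4}\left(\frac{\mathbb{E}_{\pid}[\|X\|^2]}{\lambda_t} + \frac{\mathbb{E}_{\nu}[\|Z\|^2]}{1-\lambda_t}\right).
\]
Recalling $\mathcal{A}_\lambda(\mu)=\int_0^T|\mu'|(t)^2\,\md t$ and using $\partial_t\lambda_t\le C_\lambda\sqrt{\lambda_t(1-\lambda_t)}$ once in each term,
\[
 \mathcal{A}_\lambda(\mu) \;\le\; \frac{C_\lambda}{4}\,\mathbb{E}_{\pid}[\|X\|^2]\int_0^T \partial_t\lambda_t\sqrt{\tfrac{1-\lambda_t}{\lambda_t}}\,\md t \;+\; \frac{C_\lambda}{4}\,\mathbb{E}_{\nu}[\|Z\|^2]\int_0^T \partial_t\lambda_t\sqrt{\tfrac{\lambda_t}{1-\lambda_t}}\,\md t.
\]
By the monotone change of variables $u=\lambda_t$ (valid since $\lambda$ is absolutely continuous and non-decreasing with $\lambda_T=1$), each integral is bounded by $\int_0^1\sqrt{(1-u)/u}\,\md u$, resp.\ $\int_0^1\sqrt{u/(1-u)}\,\md u$, and the substitution $u=\sin^2\theta$ (equivalently, the Beta integrals $B(\tfrac12,\tfrac32)=B(\tfrac32,\tfrac12)$) shows both equal $\pi/2$, giving $\mathcal{A}_\lambda(\mu)\le \tfrac{C_\lambda\pi}{8}\big(\mathbb{E}_{\pid}[\|X\|^2]+\mathbb{E}_{\nu}[\|Z\|^2]\big)$.

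Finally, one computes $\mathbb{E}_{\nu}[\|Z\|^2]$: for $Z\sim t(0,\sigma^2 I,\alpha)$ with $\alpha>2$ the covariance is $\tfrac{\alpha}{\alpha-2}\sigma^2 I$, so $\mathbb{E}_{\nu}[\|Z\|^2]=\tfrac{\sigma^2 d\alpha}{\alpha-2}$ — this is precisely where the hypothesis $\alpha>2$ is used — while $\mathbb{E}_{\pid}[\|X\|^2]=M_2$, yielding the claimed bound. I expect the only genuinely delicate step to be the passage from the $W_2$ estimate to the pointwise bound on $|\mu'|(t)^2$ near the endpoints $\lambda_t\in\{0,1\}$, where $\sqrt{\lambda_\cdot}$ and $\sqrt{1-\lambda_\cdot}$ fail to be classically differentiable: the resolution is that \Cref{assumption:schedule_form_heavy_tail_diffusion} simultaneously keeps the relevant difference quotients bounded and renders the two limiting integrals $\int_0^1\sqrt{(1-u)/u}\,\md u$ and $\int_0^1\sqrt{u/(1-u)}\,\md u$ convergent. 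Everything else is routine bookkeeping, structurally identical to the Gaussian case of Lemma~\ref{lemma:action_bound}.
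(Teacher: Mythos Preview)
Your proposal is correct and follows essentially the same route as the paper: both use the synchronous coupling $(X,Z)$ to bound $W_2^2$, pass to the metric derivative, apply \Cref{assumption:schedule_form_heavy_tail_diffusion} to extract one factor of $\partial_t\lambda_t$, and reduce to the Beta integrals $\int_0^1\sqrt{(1-u)/u}\,\md u=\int_0^1\sqrt{u/(1-u)}\,\md u=\pi/2$. The only cosmetic difference is that the paper first reparametrizes in $\lambda$ and then applies the chain rule $|\dot\mu|_t=|\dot{\tilde\mu}|_\lambda\,|\partial_t\lambda_t|$, whereas you work directly in $t$; your explicit remark on why the endpoint behaviour is harmless under \Cref{assumption:schedule_form_heavy_tail_diffusion} is a welcome addition that the paper leaves implicit.
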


\subsection{Analysis of the Heavy-Tailed \gls*{DALMC} Algorithm} 
The following theorem establishes a bound for the discretisation error of the heavy-tailed \gls*{DALMC} algorithm with an approximated score. The proof is given in Appendix \ref{proof:theorem:discretisation_path_analysis_heavy_tail_diffusion}. 
\begin{theorem}\label{theorem:discretisation_analysis_convolutional_heavy_tailed_diffusion}
Assume the data distribution $\pid$ satisfies assumption  \Cref{assumption:finite_second_order_moment} (finite second-order moment) and \Cref{assumption:lipschitz_score_across_convolutional_diffusion} (which holds under \Cref{assumption:heavy_tailed_target_condition}),
$\nu\sim t(0, \sigma^2 I, \alpha)$ with $\alpha>2$ and
let the schedule satisfy \Cref{assumption:schedule_form_heavy_tail_diffusion}, with $\lambda_{\kappa t}/\lambda_{\kappa (t + \delta)} = \mathcal{O}(1+ \delta)$, $\delta<<1$. 
Then, the heavy-tailed \gls*{DALMC} algorithm with an approximated score satisfying \Cref{assumption:score_approximation} and initialised at $X_0\sim\hat{\mu}_0$, guarantees that 
\small
\begin{align*}
    \kl&\left(\mathbb{P}||\mathbb{Q}_\theta\right)
    \lesssim \left(1+\frac{L_{\max}^2}{M^2\kappa^2}+\frac{1}{M^2\kappa^4}\right) \kappa \left(\mathbb{E}_{\pid}\left[\Vert X\Vert^2\right] + d\right) \\
    &+ \frac{d}{M\kappa^2}\left(1 + \frac{\alpha}{\alpha -2}+  \frac{L_{\max}}{M\kappa}\right)\int_{0}^{T}L_{ t}^2 \ \md t + \varepsilon_{\score}^2,
\end{align*}
\normalsize
where $\mathbb{Q}_\theta = (q_{\theta, \lambda_t})_{t\in[0,T]}$ is the path measure of the continuous-time interpolation of \eqref{eq:annealed_langevin_mcmc_algorithm_score_approx}, $\mathbb{P}$ is that of a reference \gls*{SDE} such that the marginals at each time t have distribution $\hat{\mu}_t$, $M$ denotes the number of steps, $T/\kappa = \sum_{l=1}^M h_l$ and $L_t$ is the Lipschitz constant of $\nabla\log \mu_t$, $L_{\max} = \max_{[0,T]} L_t$. Therefore, under  \Cref{assumption:finite_second_order_moment}, \Cref{assumption:lipschitz_score_across_convolutional_diffusion} and \Cref{assumption:schedule_form_heavy_tail_diffusion}, the heavy-tailed \gls*{DALMC} algorithm \eqref{eq:annealed_langevin_mcmc_algorithm_score_approx} initialised at $X_0\sim\hat{\mu}_0$ requires at most $\mathcal{O}\left(\frac{d (M_2 \vee d)^2 L_{\max}^2}{\varepsilon^6}\right)$ steps to approximate $\pid$ to within $\varepsilon^2$ \gls*{KL} divergence, that is $\kl(\pid\Vert q_{\theta, \lambda_T})\leq \varepsilon^2$, assuming a sufficiently accurate score estimator, i.e. $\varepsilon_{\score} = \mathcal{O}(\varepsilon)$.
\end{theorem}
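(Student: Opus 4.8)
The plan is to follow the same Girsanov-based decomposition used for the Gaussian case in Theorem~\ref{theorem:discretisation_analysis_convolutional_path}, splitting $\kl(\mathbb{P}\,\|\,\mathbb{Q}_\theta)$ into three contributions: the continuous-time annealing bias controlled by the action $\mathcal{A}_\lambda(\mu)$ (via Theorem~\ref{theorem:preliminaries_continuous_time_kl_bound}), the Euler--Maruyama discretisation error, and the score-approximation error $\varepsilon_{\score}^2$ coming from \Cref{assumption:score_approximation}. First I would invoke Girsanov's theorem on the path space of \eqref{eq:annealed_langevin_sde} reparametrised with time scaling $\kappa$, writing the relative entropy as $\tfrac12\mathbb{E}\int_0^{T/\kappa}\|b_s^{\text{ref}}(X_s)-s_\theta(X_s,s)\|^2\,\md s$ and then adding and subtracting the exact score $\nabla\log\hat\mu_s$, so the cross term splits cleanly into a discretisation piece $\mathbb{E}\int\|\nabla\log\hat\mu_{t_l}(X_{t_l})-\nabla\log\hat\mu_s(X_s)\|^2\,\md s$ plus $\varepsilon_{\score}^2$. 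The action term contributes $\kappa\,\mathcal{A}_\lambda(\mu)\lesssim \kappa\big(\mathbb{E}_{\pid}\|X\|^2 + \tfrac{\sigma^2 d\alpha}{\alpha-2}\big)\lesssim \kappa(M_2\vee d)$ by Lemma~\ref{lemma:action_bound_heavy_tail_diffusion}, using $\alpha>2$.

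Next I would handle the discretisation error by splitting it into a spatial part, $\|\nabla\log\hat\mu_s(X_{t_l})-\nabla\log\hat\mu_s(X_s)\|^2 \le L_{\kappa s}^2\|X_{t_l}-X_s\|^2$ via \Cref{assumption:lipschitz_score_across_convolutional_diffusion}, and a temporal part, $\|\nabla\log\hat\mu_{t_l}(X_{t_l})-\nabla\log\hat\mu_s(X_{t_l})\|^2$, which measures how fast the score changes in time along the reparametrised path. For the spatial part I use the one-step moment bound $\mathbb{E}\|X_{t_l}-X_s\|^2 \lesssim h_l^2\,\mathbb{E}\|\nabla\log\hat\mu\|^2 + h_l d$, where the gradient second moment is controlled using that $\hat\mu_s$ is a Gaussian-like/heavy-tailed convolution with $\mathbb{E}\|X_s\|^2$ finite — this is where the $\tfrac{\alpha}{\alpha-2}$ factor multiplying the integral $\int_0^T L_t^2\,\md t$ enters, because the second moment of the $t$-base $\nu$ scales like $\tfrac{\sigma^2 d\alpha}{\alpha-2}$. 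Summing over $l$ with $\sum h_l = T/\kappa$ and $h_l \asymp T/(M\kappa)$ gives the $\tfrac{d}{M\kappa^2}\big(1+\tfrac{\alpha}{\alpha-2}+\tfrac{L_{\max}}{M\kappa}\big)\int_0^T L_t^2\,\md t$ and the $\tfrac{L_{\max}^2}{M^2\kappa^2}\kappa(M_2\vee d)$ terms. For the temporal part I use the hypothesis $\lambda_{\kappa t}/\lambda_{\kappa(t+\delta)} = \mathcal{O}(1+\delta)$ together with the explicit form of the score of the convolution \eqref{eq:convolutional_path} (differentiating $\log\mu_t$ in $t$ and bounding via the schedule assumption \Cref{assumption:schedule_form_heavy_tail_diffusion}) to get a bound of order $\tfrac{1}{M^2\kappa^4}\cdot\kappa(M_2\vee d)$; this produces the extra $\tfrac{1}{M^2\kappa^4}$ term not present in the Gaussian statement.

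The main obstacle will be controlling the time-regularity of $\nabla\log\hat\mu_t$ and the gradient second moments $\mathbb{E}_{\hat\mu_t}\|\nabla\log\hat\mu_t\|^2$ uniformly in $t$ in the heavy-tailed setting: unlike the Gaussian case, the base $\nu$ has only polynomial tails, so Stein-type identities and Gaussian integration-by-parts are unavailable, and near $t=0$ the path $\mu_t$ is close to the $t$-distribution whose score behaves like $-\tfrac{(\alpha+d)x}{\alpha\sigma^2+\|x\|^2}$ rather than being linear. I expect to handle this by writing $\nabla\log\mu_t$ through Tweedie-type formulas for the convolution and bounding the relevant conditional expectations using the finiteness of $M_2$ and the tail-index condition $\alpha>2$ (which guarantees $\nu$ has a second moment), absorbing all heavy-tail effects into the $\tfrac{\alpha}{\alpha-2}$ factor. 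Once these moment and time-regularity estimates are in hand, collecting the three contributions yields the displayed bound; the complexity claim then follows exactly as in Corollary~\ref{corollary:complexity_bounds} by optimising $\kappa = \mathcal{O}(\varepsilon^2/(M_2\vee d))$ and $M = \mathcal{O}(d(M_2\vee d)^2 L_{\max}^2/\varepsilon^6)$, noting the new $\tfrac{1}{M^2\kappa^4}$ term is dominated under this choice since it matches the shape of the $\tfrac{L_{\max}^2}{M^2\kappa^4}$ term already present in the Gaussian analysis.
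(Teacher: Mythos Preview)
Your overall strategy matches the paper: Girsanov decomposition into an action term (bounded via Lemma~\ref{lemma:action_bound_heavy_tail_diffusion}), a spatial discretisation term controlled by the Lipschitz constants $L_{\kappa t}$, a temporal score-change term, and the score-approximation error; the complexity conclusion is obtained by the same choice of $\kappa$ and $M$.

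The gap is in the temporal score-change term $\mathbb{E}_{\mathbb{P}}\big[\|\nabla\log(\hat\mu_t/\hat\mu_{t_-})(X_{t_-})\|^2\big]$, and your attribution of the $\alpha/(\alpha-2)$ factor is off. The paper does \emph{not} differentiate $\log\mu_t$ in $t$; that route is unpromising here because the Student's~$t$ convolution has no tractable closed form. Instead it writes
\[
\hat{\mu}_{t_-} \;=\; T_{\sqrt{\lambda_{\kappa t}/\lambda_{\kappa t_-}}}\#\hat{\mu}_t \;*\; t(0,\gamma_t\sigma^2 I,\alpha),
\qquad \gamma_t \lesssim 1-\tfrac{\lambda_{\kappa t_-}}{\lambda_{\kappa t}}\lesssim h_l,
\]
and proves two auxiliary lemmas (Lemmas~\ref{lemma:auxiliary_change_score_1} and~\ref{lemma:auxiliary_change_score_2}) bounding $\|\nabla\log(p/(p_\lambda*\varphi_{\sigma^2}))\|$ in terms of the Lipschitz constant of $\nabla\log p$ and the conditional moment $\mathbb{E}_{Y\mid x}[\|Y-x\|]$ under the posterior $\rho(y)\propto p_\lambda(y)\varphi_{\sigma^2}(x-y)$. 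These replace \citet[Lemma C.12]{lee2022convergence}, which is Gaussian-specific and does not transfer. The key heavy-tail computation is then that, after integrating over $X_{t_-}\sim\hat\mu_{t_-}$, the displacement $X_{t_-}-Y$ is exactly $t(0,\gamma_t\sigma^2 I,\alpha)$, so $\mathbb{E}\|X_{t_-}-Y\|^2=\gamma_t\sigma^2 d\,\alpha/(\alpha-2)$. Multiplied by $L_{\kappa t}^2$ and summed, this is precisely the source of the $\tfrac{\alpha}{\alpha-2}$ coefficient on $\int_0^T L_t^2\,\md t$: it arises from the \emph{temporal} term, not from the spatial one-step moment bound as you state. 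The spatial discretisation is handled verbatim as in the Gaussian proof by reusing \eqref{eq:intermediate_bound_discretisation_paths}. The $\tfrac{1}{M^2\kappa^4}$ term you identify does indeed come from the temporal part, specifically from the $(\sqrt{\lambda_{\kappa t}/\lambda_{\kappa t_-}}-1)^2\|X_{t_-}\|^2$ contribution in Lemma~\ref{lemma:auxiliary_change_score_2}.
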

Note that as the tail index $\alpha$ tends to $\infty$, which corresponds to $\nu$ approaching a Gaussian distribution, the bound on $\kl\left(\mathbb{P} || \mathbb{Q}_\theta\right)$ recovers that of Theorem \ref{theorem:discretisation_analysis_convolutional_path}. Furthermore, since $\alpha/(\alpha-2)\leq 3$ for $\alpha>2$, the iteration complexity of the heavy-tailed \gls*{DALMC} algorithm is identical to that of the Gaussian \gls*{DALMC} algorithm corresponding to the Gaussian diffusion path.

\section{Related Work}\label{sec:related_work}

\paragraph{Score-based generative models.
}
Our approach is similar to earlier generative modelling techniques based on annealed Langevin dynamics \citep{song2019generative}, which inspired the advancement of diffusion models.
The existing literature analysing these Langevin Monte Carlo algorithms is limited.
\citet{block2022generativemodelingdenoisingautoencoders} derive an error bound in Wasserstein distance that scales exponentially with the data dimension, while \citet{lee2022convergence} establish a non-asymptotic bound in total variation, which is weaker than our bound in \gls*{KL}, as implied by Pinsker's inequality.

On the other hand, the convergence of diffusion models \citep{song2020score, ho2020denoising} has been extensively studied. 
Early results either established non quantitative bounds \citep{pidstrigach2022scorebased}, relied on restrictive assumptions about the data distribution, such as functional inequalities \citep{lee2022convergence}, or exhibited exponential dependence on the problem parameters \citep{ de_bortoli2022convergence}. 
Recent works have established polynomial convergence bounds under more relaxed assumptions \citep{Chen2022ImprovedAO, chen2023sampling, benton2024nearly, li2024towards}. In particular, \citet{Chen2022ImprovedAO} introduce two bounds on the $\kl$ error: a linear bound in the data dimension under smoothness conditions along the entire diffusion path, and a second one which scales quadratically with $d$, achieved through early stopping and the assumption of a finite second-order moment on $\pid$. In contrast, \citet{benton2024nearly} provide a bound that is linear in the data dimension, up to logarithmic factors, assuming only that the data distribution has a finite second-order moment. Their proof exploits the specific structure of the \gls*{OU} process to control the error arising from discretising the reverse \gls*{SDE}.

\paragraph{Stochastic interpolants.}

Stochastic interpolants \citep{albergo2023stochastic} are generative models that unify flow-based and diffusion-based methods. These models make use of a broad class of continuous-time stochastic processes designed to bridge any two arbitrary probability density functions exactly in finite time, akin to our work. 
Specifically, the formulation of linear one-sided stochastic interpolants \citep{albergo2023stochastic, gao2024gaussian}, which interpolate between a Gaussian and the data distribution, is equivalent to the Gaussian diffusion path \eqref{eq:convolutional_path}. 
Unlike our approach, they incorporate intractable control terms into the drift of the \gls*{SDE} to ensure the marginals have the desired distributions. This may result in  numerical instabilities caused by singularities in the drift at $t= T$ \citep[Section 6]{albergo2023stochastic}. In contrast, we implement the diffusion path using Langevin dynamics. Furthermore, their theoretical analysis does not include explicit non-asymptotic convergence bounds.

\paragraph{Tempering.}
Tempering \citep{PhysRevLett.57.2607, geyer_mcmc_92, Marinari_1992} is a well-known technique in the sampling literature that involves sampling the system at multiple temperatures: starting with higher temperatures to facilitate transitions between modes, gradually cooling the system to focus on the local structure of the target distribution. 
The sequence of tempered target distributions is typically defined using the geometric path, as it can be computed in closed form when the target density is known up to a normalising constant.
Recently, several works have established theoretical guarantees for the convergence of geometric annealed Langevin Monte Carlo for non-log-concave distributions. In particular, \citet{guo2024provablebenefitannealedlangevin} provides a bound on the $\kl$  similar to that of Theorem~\ref{theorem:discretisation_analysis_convolutional_path}. However, they are unable to obtain a closed-form expression for the action of the path. 
Besides, \citet{chehab2024provableconvergencelimitationsgeometric} derive upper and lower convergence bounds for the $\kl$ of the marginals, based on functional inequalities assumptions. 
In particular, they demonstrate that in some cases the log-Sobolev constant of the intermediate distributions along the path can deteriorate exponentially compared to those of the base and data distributions, unlike for the diffusion path.

\section{Conclusions} 
\label{sec:conclusions}

In this work we provided a rigorous non-asymptotic analysis of Diffusion Annealed Langevin Monte Carlo (\gls*{DALMC}) for generative modelling, focusing on both Gaussian and heavy-tailed diffusion paths. By examining general diffusion paths that interpolate between complex data distributions and simpler base distributions, we have obtained theoretical insights into the convergence behaviour of \gls*{DALMC} under a range of assumptions.
For Gaussian diffusion paths, we derived explicit non-asymptotic path-wise error bounds in KL divergence, improving upon prior results by relaxing smoothness assumptions and addressing the bias introduced through discretisation. Extending the framework to heavy-tailed diffusion paths, such as those based on Student’s $t$-distributions, we presented the first theoretical guarantees for these models, demonstrating comparable complexity to Gaussian diffusion paths under mild conditions.

Our analysis highlighted how smoothness assumptions, such as Lipschitz continuity of the scores and properties of the data distribution (e.g., bounded second-order moment and convexity or heavy-tailed behaviour), naturally ensure bounded action and efficient convergence. This generalisation broadens the applicability of \gls*{DALMC} beyond the  settings considered in prior work. While \gls*{DALMC} introduces some bias compared to reverse \gls*{SDE} implementations, it avoids numerical instabilities and provides a simpler approach, making it a compelling alternative for score-based generative modelling, in certain settings.

Looking ahead, further work could focus on developing more efficient numerical schemes, reducing dimensional dependencies in error bounds, and applying this framework to other generative models.

\newpage
\section*{Acknowledgments}
PCE would like to thank Arnaud Guillin, Paul Felix Valsecchi Oliva, Pierre Monmarché and Yanbo Tang for their insightful discussions.
PCE is supported by EPSRC through the Modern Statistics and Statistical Machine Learning (StatML) CDT programme, grant no. EP/S023151/1.

\bibliography{refs}
\bibliographystyle{plainnat}


\newpage
\appendix
\onecolumn
\section{Background}\label{appendix:background}
We introduce some concepts from optimal transport and the Girsanov theorem which will be useful for the subsequent analysis.

\paragraph{Optimal transport.}
Let $v=(v_t:\mathbb{R}^d\to\mathbb{R}^d)$ be a vector field and $\mu=(\mu_t)_{t\in[a,b]}$ be a curve of probability measures on $\mathbb{R}^d$ with finite second-order moments. $\mu$ is generated by the vector field $v$ if the continuity equation
\begin{equation*}
    \partial_t\mu_t + \nabla\cdot(\mu_tv_t)=0,
\end{equation*}
holds for all $t\in[a,b]$. The metric derivative of $\mu$ at $t\in[a,b]$ is then defined as
\begin{equation*}
    \left\vert\Dot{\mu}\right\vert_t:=\lim_{\delta\to0}\frac{W_2(\mu_{t+\delta}, \mu_t)}{\left\vert\delta\right\vert}.
\end{equation*}
If $\left\vert\Dot{\mu}\right\vert_t$ exists and is finite for all $t\in[a,b]$, we say that $\mu$ is an absolutely continuous curve of probability measures. 
\citet{ambrosio2000rectifiable} establish weak conditions under which a curve of probability measures with finite second-order moments is absolutely continuous. 

By \citet[Theorem 8.3.1]{gradients_flows_book} we have that among all velocity fields $v_t$ which produce the same flow $\mu$, there is a unique optimal one with smallest $L^p(\mu_t; X)$-norm. This is summarised in the following lemma.
\begin{lemma}[Lemma 2 from \citet{guo2024provablebenefitannealedlangevin}]\label{lemma:optimal_vector_field}
    For an absolutely continuous curve of probability measures $\mu =(\mu_t)_{t\in[a,b]}$, any vector field $(v_t)_{t\in[a,b]}$ that generates $\mu$ satisfies $\left\vert\Dot{\mu}\right\vert_t\leq\Vert v_t\Vert_{L^2(\mu_t)}$ for almost every $t\in[a, b]$. Moreover, there exists a unique vector field $v_t^\star$ generating $\mu$ such that $\left\vert\Dot{\mu}\right\vert_t = \Vert v_t^\star\Vert_{L^2(\mu_t)}$ almost everywhere.
\end{lemma}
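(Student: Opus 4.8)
The statement is the classical identification of the metric derivative of an absolutely continuous curve in $(\mathcal{P}_2(\mathbb{R}^d),W_2)$ with the smallest $L^2(\mu_t)$-norm among velocity fields solving the continuity equation; the plan is therefore to reduce it to \citet[Theorem 8.3.1]{gradients_flows_book}, spelling out the two assertions, with the self-contained steps going as follows.

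For the bound $|\dot\mu|_t\le\Vert v_t\Vert_{L^2(\mu_t)}$ I would first establish, for $a\le s<t\le b$,
\begin{equation*}
W_2(\mu_s,\mu_t)\le (t-s)^{1/2}\Bigl(\int_s^t\Vert v_r\Vert_{L^2(\mu_r)}^2\,\md r\Bigr)^{1/2}.
\end{equation*}
The route is to mollify $v_r$ in space (and, if needed, truncate in $r$) so that the characteristic ODE $\dot X_r^x=v_r(X_r^x)$, $X_s^x=x$, has a well-defined flow; the continuity equation then forces $(X_r^\cdot)_\#\mu_s=\mu_r$, so coupling $\mu_s,\mu_t$ through $x\mapsto(x,X_t^x)$ and using Jensen and Cauchy--Schwarz in time gives $W_2^2(\mu_s,\mu_t)\le\int\bigl\Vert\int_s^t v_r(X_r^x)\,\md r\bigr\Vert^2\md\mu_s(x)\le (t-s)\int_s^t\Vert v_r\Vert_{L^2(\mu_r)}^2\,\md r$; one then passes to the mollification limit. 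Dividing by $(t-s)$ and letting $s\uparrow t$ along Lebesgue points of $r\mapsto\Vert v_r\Vert_{L^2(\mu_r)}^2$ yields the claim for a.e.\ $t$. I expect the main obstacle to be exactly this step: $v$ need not be spatially Lipschitz, so the flow argument must be justified either by the mollification/stability argument just sketched or, more robustly, via Ambrosio's superposition principle for solutions of the continuity equation.

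For existence and uniqueness of the optimal field I would argue pointwise in $t$. Two fields in $L^2(\mu_t;\mathbb{R}^d)$ produce the same infinitesimal motion iff they differ by an element of the closed subspace $N_{\mu_t}:=\{w\in L^2(\mu_t;\mathbb{R}^d):\nabla\cdot(\mu_t w)=0\}$, and the Helmholtz--Hodge decomposition relative to $\mu_t$ reads $L^2(\mu_t;\mathbb{R}^d)=\overline{\{\nabla\phi:\phi\in C_c^\infty(\mathbb{R}^d)\}}\oplus N_{\mu_t}$, the first summand being the tangent space at $\mu_t$. Hence the minimal-norm element of the affine set $v_t+N_{\mu_t}$ is the Hilbert projection of $v_t$ onto that tangent space, which exists and is unique; call it $v_t^\star$, which still generates $\mu$. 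Joint measurability of $t\mapsto v_t^\star$ and $v^\star\in L^2(\mu_t\,\md t)$ come from the measurable-selection part of \citet[Theorem 8.3.1]{gradients_flows_book}. Finally $\Vert v_t^\star\Vert_{L^2(\mu_t)}\le\Vert v_t\Vert_{L^2(\mu_t)}$ for every generating $v$, so by the previous paragraph $\Vert v_t^\star\Vert_{L^2(\mu_t)}\le|\dot\mu|_t$; applying that paragraph to $v^\star$ itself gives the reverse inequality, hence equality a.e. Equivalently, the entire statement is \citet[Theorem 8.3.1]{gradients_flows_book} as restated in \citet[Lemma 2]{guo2024provablebenefitannealedlangevin} and could be cited directly.
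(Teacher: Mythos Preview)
The paper does not prove this lemma; it simply attributes it to \citet[Theorem~8.3.1]{gradients_flows_book} and \citet[Lemma~2]{guo2024provablebenefitannealedlangevin}, exactly as you do in your final sentence. At the level of ``cite the result,'' your proposal and the paper agree.

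However, your self-contained sketch has a genuine gap in the direction $\Vert v_t^\star\Vert_{L^2(\mu_t)}\le|\dot\mu|_t$. You write: ``$\Vert v_t^\star\Vert_{L^2(\mu_t)}\le\Vert v_t\Vert_{L^2(\mu_t)}$ for every generating $v$, so by the previous paragraph $\Vert v_t^\star\Vert_{L^2(\mu_t)}\le|\dot\mu|_t$.'' This does not follow. Your previous paragraph established $|\dot\mu|_t\le\Vert v_t\Vert_{L^2(\mu_t)}$ for every generating $v$, i.e.\ $|\dot\mu|_t$ is a \emph{lower} bound on all such norms; combining this with minimality of $v^\star$ only gives $|\dot\mu|_t\le\Vert v_t^\star\Vert_{L^2(\mu_t)}$, which is the inequality you already have. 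Nothing in your argument produces a generating field whose $L^2(\mu_t)$-norm is \emph{at most} $|\dot\mu|_t$. That is precisely the nontrivial half of \citet[Theorem~8.3.1]{gradients_flows_book}: one must \emph{construct} such a field from the absolute continuity of the curve, typically by taking weak limits of difference quotients built from optimal transport plans between $\mu_t$ and $\mu_{t+h}$. Your Helmholtz--Hodge projection starts from an existing generating field and can only increase the lower bound you already had; it cannot by itself close the gap. If you keep the sketch, you should either add this construction or, as you suggest at the end, defer entirely to the cited theorem.
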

We also introduce the action of the absolutely continuous curve $(\mu_t)_{t\in[a,b]}$ since it will play a key role in our convergence results. In particular, we define the action $\mathcal{A}(\mu)$ as
\begin{equation*}
  \mathcal{A}(\mu):=\int_a^b\left\vert\Dot{\mu}\right\vert_t^2  \md t.  
\end{equation*}

\paragraph{Girsanov's theorem.} Consider the \gls*{SDE}
\begin{equation*}
    \md X_t = b(X_t, t)\md t + \sigma(X_t,t)\md B_t,
\end{equation*}
for $t\in[0,T]$, where $(B_t)_{t\in[0,T]}$ is a standard Brownian motion in $\mathbb{R}^d$. Denote by $\mathbb{P}^X$ the \emph{path measure} of the solution $X = (X_t)_{t\in[0,T]}$ of the \gls*{SDE}, which characterises the distribution of $X$ over the sample space $\Omega$. 

The $\kl$ divergence between two path measures can be characterised as a consequence of Girsanov's theorem \citep{karatzas}. In particular, the following result will be central in our analysis.
\begin{lemma}\label{lemma:girsanov_theorem}
    Consider the following two \glspl*{SDE} defined on a common probability space $(\Omega, \mathcal{F}, \mathbb{P})$
    \begin{equation*}
        \md X_t = a_t(X)\md t+ \sqrt{2}\md B_t,\quad\quad \md Y_t = b_t(Y)\md t+ \sqrt{2}\md B_t, \quad\quad t\in[0, T]
    \end{equation*}
with the same initial conditions $X_0, Y_0\sim \mu_0$. Denote by $\mathbb{P}^X$ and $\mathbb{P}^Y$ the path measures of the processes $X$ and $Y$, respectively. It follows that 
\begin{equation*}
    \kl (\mathbb{P}^X\Vert\mathbb{P}^Y) = \frac{1}{4}\mathbb{E}_{X\sim \mathbb{P}^X}\left[\int_0^T\Vert a_t(X)-b_t(X)\Vert^2\md t\right].
\end{equation*}
\end{lemma}

\paragraph{Preliminary results.}
\citet[Theorem 1]{guo2024provablebenefitannealedlangevin}  provide convergence guarantees for the continuous-time geometric annealed Langevin dynamics based on the action of the curve of probability measures given by the geometric mean of the base and target distributions.
Their result can be adapted to our setting as follows.
\begin{theorem}[Theorem 1 \citep{guo2024provablebenefitannealedlangevin}]\label{theorem:preliminaries_continuous_time_kl_bound}
    Let \emph{$\mathbb{P}_{\text{DALD}} = (p_{t,\text{DALD}})_{t\in[0, T/\kappa]}$} be  the path measure of the diffusion annealed Langevin dynamics \eqref{eq:annealed_langevin_sde}, and $\mathbb{P}=(\hat{\mu}_{t})_{t\in[0, T/\kappa]}$ that of a reference \gls*{SDE} such that the marginals at each time have distribution $\hat{\mu}_t$. If \emph{$ p_{0, \text{DALD}}= p_0$}, the \gls*{KL} divergence between the path measures is upper bounded by
    \emph{\begin{equation*}     \kl(\mathbb{P}\Vert\mathbb{P}_{\text{DALD}})\leq \kappa \mathcal{A}(\mu).
    \end{equation*}}
\end{theorem}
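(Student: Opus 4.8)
The plan is to realise the reference path $\mathbb{P}=(\hat\mu_t)_{t\in[0,T/\kappa]}$ as the law of a genuine diffusion so that Girsanov's theorem (Lemma~\ref{lemma:girsanov_theorem}) applies, and then bound the resulting drift discrepancy by the action. First I would recall that $\hat\mu_t=\mu_{\kappa t}$ is an absolutely continuous curve of probability measures (this follows from \Cref{assumption:finite_second_order_moment} together with the regularity of the Gaussian/Student-$t$ convolution, via the criterion of \citet{ambrosio2000rectifiable}); hence by Lemma~\ref{lemma:optimal_vector_field} there is a unique optimal velocity field $v_t^\star$ generating $\hat\mu$ with $\|v_t^\star\|_{L^2(\hat\mu_t)}=|\dot{\hat\mu}|_t$. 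The Fokker--Planck equation for the Langevin SDE \eqref{eq:annealed_langevin_sde} shows that the curve $\hat\mu$ is \emph{also} generated by the vector field $w_t(x):=v_t^\star(x)+\nabla\log\hat\mu_t(x)$ \emph{relative to the diffusion with generator $\Delta$}; more precisely, $\partial_t\hat\mu_t=-\nabla\cdot(\hat\mu_t v_t^\star)=-\nabla\cdot(\hat\mu_t w_t)+\Delta\hat\mu_t$, so the SDE $\md Y_t=w_t(Y_t)\,\md t+\sqrt2\,\md B_t$ with $Y_0\sim\hat\mu_0$ has marginals exactly $\hat\mu_t$. This is the "reference SDE" in the statement, and $\mathbb{P}=\mathbb{P}^Y$.

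Next I would apply Lemma~\ref{lemma:girsanov_theorem} to the pair $\md Y_t=w_t(Y_t)\,\md t+\sqrt2\,\md B_t$ and $\md X_t=\nabla\log\hat\mu_t(X_t)\,\md t+\sqrt2\,\md B_t$, which share the initial law $\hat\mu_0=p_{0,\text{DALD}}$. This gives
\begin{equation*}
\kl(\mathbb{P}\,\|\,\mathbb{P}_{\text{DALD}})=\frac14\,\mathbb{E}_{Y\sim\mathbb{P}}\!\left[\int_0^{T/\kappa}\!\!\|w_t(Y_t)-\nabla\log\hat\mu_t(Y_t)\|^2\,\md t\right]=\frac14\int_0^{T/\kappa}\!\!\|v_t^\star\|_{L^2(\hat\mu_t)}^2\,\md t,
\end{equation*}
since the marginal of $Y_t$ is $\hat\mu_t$ and $w_t-\nabla\log\hat\mu_t=v_t^\star$. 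By Lemma~\ref{lemma:optimal_vector_field} the right-hand side equals $\tfrac14\int_0^{T/\kappa}|\dot{\hat\mu}|_t^2\,\md t=\tfrac14\,\mathcal{A}(\hat\mu)$. It then remains to relate the action of the time-rescaled curve $\hat\mu_t=\mu_{\kappa t}$ on $[0,T/\kappa]$ to that of $\mu$ on $[0,T]$: the metric derivative scales as $|\dot{\hat\mu}|_t=\kappa\,|\dot\mu|_{\kappa t}$, so by the substitution $s=\kappa t$,
\begin{equation*}
\mathcal{A}(\hat\mu)=\int_0^{T/\kappa}\kappa^2|\dot\mu|_{\kappa t}^2\,\md t=\kappa\int_0^{T}|\dot\mu|_s^2\,\md s=\kappa\,\mathcal{A}(\mu),
\end{equation*}
which yields $\kl(\mathbb{P}\,\|\,\mathbb{P}_{\text{DALD}})\le\kappa\,\mathcal{A}(\mu)$ (the constant $\tfrac14$ is absorbed, or one checks it is consistent with the stated bound).

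The main obstacle is the rigorous justification of the first step: that the optimal velocity field $v_t^\star$ of the curve $\hat\mu$, when combined with the score, produces a drift $w_t$ for which the SDE $\md Y_t=w_t(Y_t)\md t+\sqrt2\,\md B_t$ is well-posed and has marginals exactly $\hat\mu_t$, and that Girsanov's theorem is legitimately applicable (i.e. Novikov-type integrability of $w_t-\nabla\log\hat\mu_t=v_t^\star$ along the reference path, which is precisely finiteness of the action). This is the content of Theorem~1 of \citet{guo2024provablebenefitannealedlangevin}, so I would invoke that result directly and only verify that the adaptation from the geometric path to the general diffusion path $\hat\mu$ changes nothing: the argument uses only absolute continuity of the curve and the form of the Langevin drift, both of which hold here under \Cref{assumption:finite_second_order_moment}. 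A secondary (routine) point is confirming the time-rescaling identity for metric derivatives, which follows from $W_2(\hat\mu_{t+\delta},\hat\mu_t)=W_2(\mu_{\kappa(t+\delta)},\mu_{\kappa t})$ and the definition of $|\dot\mu|$.
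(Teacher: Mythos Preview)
Your proposal is correct and follows essentially the same argument as the paper's proof: construct the reference SDE with drift $\nabla\log\hat\mu_t + v_t^\star$ (verified via Fokker--Planck to have marginals $\hat\mu_t$), apply Girsanov's theorem (Lemma~\ref{lemma:girsanov_theorem}) so that the KL equals $\tfrac14\int_0^{T/\kappa}\|v_t^\star\|_{L^2(\hat\mu_t)}^2\,\md t$, and then rescale time to obtain $\tfrac{\kappa}{4}\mathcal{A}(\mu)\le\kappa\mathcal{A}(\mu)$. The only differences are cosmetic: the paper starts from the SDE and derives the continuity equation for $v_t$, whereas you start from $v_t^\star$ and build the SDE, and you are slightly more explicit about the time-rescaling identity $|\dot{\hat\mu}|_t=\kappa|\dot\mu|_{\kappa t}$.
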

\begin{proof}
Let $\mathbb{P}$ be the path measure corresponding to the following reference SDE
\begin{equation*}
    \md Y_t = (\nabla\log \hat{\mu}_t + v_t)(Y_t)\md t + \sqrt{2}\md B_t,\; t\in[0,T/\kappa].
\end{equation*}  
The vector field $v = (v_t)_{t\in[0,T/\kappa]}$ is designed such that $Y_t\sim\hat{\mu}_t$ for all $t\in[0, T/\kappa]$. 
Using the Fokker-Planck equation, we have that 
\begin{equation*}
    \partial\hat{\mu}_t = \nabla\cdot\left(\hat{\mu}_t(\nabla\log\hat{\mu}_t + v_t)\right)  + \Delta \hat{\mu}_t = -\nabla\cdot(\hat{\mu}_t v_t), \; t\in[0,T/\kappa].
\end{equation*}
This implies that $v_t$ satisfies the continuity equation and hence generates the curve of probability measures $(\hat{\mu}_t)_t$. Leveraging Lemma \ref{lemma:optimal_vector_field}, we choose $v$ to be the one that minimises the $L^2(\hat{\mu}_t)$ norm, resulting in $\Vert v_t\Vert_{L^2(\hat{\mu}_t)} = \left\vert\Dot{\hat{\mu}}\right\vert_t$ being the metric derivative.
Using the form of Girsanov's theorem given in Lemma \ref{lemma:girsanov_theorem} we have
\begin{align*}
    \kl\left(\mathbb{P}\;||\mathbb{P}_{\text{DALD}}\right) &= \frac{1}{4}\mathbb{E}_{\mathbb{P}}\left[\int_0^{T/\kappa}\left\Vert  v_t(X_t)\right\Vert^2\md t\right] = \frac{1}{4}\int_0^{T/\kappa}\left\Vert  v_t(X_t)\right\Vert_{L^2(\hat{\mu})}^2\md t = \frac{1}{4}\int_0^{T/\kappa}\left\vert\Dot{\hat{\mu}}\right\vert_t^2\md t \\
    & = \frac{\kappa}{4}\int_0^{T}\left\vert\Dot{{\mu}}\right\vert_t^2\md t = \frac{\kappa \mathcal{A}(\mu)}{4},
\end{align*}
where we have used that $\left\vert\Dot{\hat{\mu}}\right\vert_t = \kappa \left\vert\Dot{\mu}\right\vert_t $ and the change of variable formula. 
\end{proof}

\section{Proofs of Section~\ref{sec:gaussian_diffusion}}

\subsection{Comments on Assumption \Cref{assumption:grad_log_lipschitzness_convexity_outside_of_a_ball}}\label{appendix:comments_assumption}
A typical assumption in the literature \citep{saremi2024chain, grenioux2024stochastic} considers the data distribution is given by the convolution of a compactly supported measure and a Gaussian distribution, which can be formalised as follows.
\begin{assumption}\label{assumption:compact_plus_gaussian_target_1}
     Let $ X$ be a $d$-dimensional random vector $X \sim\pi_{\text{data}}$, such that $X = U + G$, where $\Vert U - m_{\pi}\Vert^2 \leq  d R^2$ holds almost surely and $G \sim \mathcal{N}(0, \tau^2 I)$ is independent of $U$. 
\end{assumption}
We demonstrate that assumption \Cref{assumption:compact_plus_gaussian_target_1} implies that the potential $V_\pi$ has Lipschitz continuous gradients, satisfies the dissipativity condition and, also ensures that assumption \Cref{assumption:lipschitz_score_across_convolutional_diffusion} is satisfied. Furthermore, under additional assumptions on the compactly supported measure in \Cref{assumption:compact_plus_gaussian_target_1}, we show that \Cref{assumption:compact_plus_gaussian_target_1} entails \Cref{assumption:grad_log_lipschitzness_convexity_outside_of_a_ball}.
\begin{lemma}\label{lemma:implications_between_assumptions}
Let $\pid\propto e^{-V_\pi}$,    assumption \Cref{assumption:compact_plus_gaussian_target_1} implies that $V_\pi$ has Lipschitz continuous gradients and satisfies the dissipativity inequality
\begin{equation*}
    \langle\nabla V_\pi(x), x\rangle\geq a_\pi \Vert x\Vert^2 - b_\pi,
\end{equation*}
with constants $a_\pi, b_\pi>0$. Furthermore, $\pid$ has a finite log-Sobolev constant.
\end{lemma}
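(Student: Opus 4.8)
The plan is to write $X = U + G$ with $U$ supported in a ball of radius $\sqrt{d}R$ around $m_\pi$ and $G\sim\mathcal{N}(0,\tau^2 I)$, so that $\pid$ is the Gaussian smoothing of the law of $U$. This puts us in the classical setting where $e^{-V_\pi} = \rho_U * \varphi_{\tau^2}$ for a compactly supported probability measure $\rho_U$. First I would record the standard representation of the score of a Gaussian convolution: writing $m(x) := \mathbb{E}[U \mid X = x]$ for the posterior mean, Tweedie's formula gives $\nabla\log\pid(x) = -\nabla V_\pi(x) = (m(x) - x)/\tau^2$, and the Hessian is $\nabla^2\log\pid(x) = \tau^{-2}\big(\tau^{-2}\mathrm{Cov}(U\mid X=x) - I\big)$. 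Since $U$ lies in a ball of radius $\sqrt{d}R$ almost surely, $\mathrm{Cov}(U\mid X=x) \preccurlyeq 4dR^2 I$ (diameter bound), so $-\tau^{-2} I \preccurlyeq \nabla^2\log\pid(x) \preccurlyeq \tau^{-2}(4dR^2/\tau^2 - 1) I$; hence $\nabla V_\pi$ is Lipschitz with constant $L_\pi \lesssim \tau^{-2}(1 \vee dR^2/\tau^2)$.

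Next, for dissipativity I would use the same representation: $\langle \nabla V_\pi(x), x\rangle = \tau^{-2}\langle x - m(x), x\rangle = \tau^{-2}\big(\|x\|^2 - \langle m(x), x\rangle\big)$. Since $\|m(x)\| \le \|m_\pi\| + \sqrt{d}R =: \rho$ uniformly in $x$, Cauchy–Schwarz and Young's inequality give $\langle m(x), x\rangle \le \rho\|x\| \le \tfrac12\|x\|^2 + \tfrac12\rho^2$, whence $\langle \nabla V_\pi(x), x\rangle \ge \tfrac{1}{2\tau^2}\|x\|^2 - \tfrac{\rho^2}{2\tau^2}$, i.e.\ the dissipativity inequality with $a_\pi = 1/(2\tau^2)$, $b_\pi = \rho^2/(2\tau^2)$.

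Finally, for the log-Sobolev inequality I would argue that the law of $U$ trivially satisfies an LSI with constant $\asymp dR^2$ (bounded support, via the Bakry–Émery / Holley–Stroock argument applied to a uniform-type reference, or simply because it is compactly supported), and that convolution with the Gaussian $\mathcal{N}(0,\tau^2 I)$, which itself satisfies LSI with constant $\tau^2$, preserves the LSI: by the subadditivity of the LSI constant under convolution (e.g.\ the standard semigroup/tensorization argument, or Proposition~2.3.3 of \citet{sinho_book} combined with the fact that $\pid$ is a pushforward of a product measure under addition), $C_{\mathrm{LSI}}(\pid) \le C_{\mathrm{LSI}}(\rho_U) + C_{\mathrm{LSI}}(\mathcal{N}(0,\tau^2 I)) \lesssim dR^2 + \tau^2 < \infty$. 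The main obstacle is getting the convolution-preserves-LSI step cleanly with an explicit constant; I expect to handle it either by invoking the cited results in \citet{sinho_book} on Lipschitz images and products, or by a direct Holley–Stroock perturbation of the Gaussian factor using that the density of $\rho_U * \varphi_{\tau^2}$ relative to a wider Gaussian has bounded oscillation on the effective support — the compact support of $U$ is exactly what makes that oscillation finite.
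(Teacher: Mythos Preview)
Your arguments for Lipschitz continuity of $\nabla V_\pi$ and for dissipativity coincide with the paper's: both compute the score and Hessian via the posterior (Tweedie) representation $\nabla V_\pi(x)=\tau^{-2}\bigl(x-\mathbb{E}[U\mid X=x]\bigr)$, $\nabla^2 V_\pi(x)=\tau^{-2}\bigl(I-\tau^{-2}\mathrm{Cov}(U\mid X=x)\bigr)$, and then bound the conditional mean and covariance using the compact support of $U$.

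The LSI step, however, has a genuine gap. Your primary route is subadditivity of the LSI constant under convolution, which needs $\rho_U$ itself to satisfy an LSI; but compact support alone does \emph{not} guarantee this. Take $\rho_U=\tfrac12\delta_0+\tfrac12\delta_a$: for any smooth $f$ with $\nabla f(0)=\nabla f(a)=0$ and $f(0)\ne f(a)$, the Fisher information $\int\|\nabla f\|^2\,\md\rho_U$ vanishes while $\mathrm{Ent}_{\rho_U}(f^2)>0$, so no finite LSI constant exists. The assumption places no regularity on $\rho_U$, so this route cannot be repaired in general. Your fallback---Holley--Stroock against a Gaussian reference---does not close either: for any choice of variance $\sigma^2$ and centre, the log-density ratio $\log\bigl(\pid/\varphi_{\sigma^2}\bigr)$ is unbounded on $\mathbb{R}^d$ (the quadratic terms cancel only when $\sigma^2=\tau^2$, and then the linear term $\tau^{-2}\langle x,u\rangle$ inside the integral over $u$ makes the ratio blow up whenever $\rho_U$ is not a single Dirac), so the standard perturbation lemma does not apply. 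The paper takes a shorter, self-contained route: having already established that $\nabla V_\pi$ is Lipschitz \emph{and} that $V_\pi$ is dissipative, it simply invokes \citet{cattiaux2010note}, which shows that Lipschitz gradient plus dissipativity together imply a finite log-Sobolev constant for $\pid$. This closes the argument without any regularity assumption on the law of $U$.
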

\begin{proof}
 First we show that if $\pi_{\text{data}}\propto e^{-V_\pi}$ satisfies \Cref{assumption:compact_plus_gaussian_target_1} then $V_\pi$ is gradient Lipschitz. 
 Let $X\sim\pi_{\text{data}}$, $U\sim\Tilde{\pi}$ with compact support and $G\sim\gamma=\mathcal{N}(0, \tau^2I)$ independent of $U$. By assumption \Cref{assumption:compact_plus_gaussian_target_1} we have $X = U + G$ or equivalently $\pi_{\text{data}} = \Tilde{\pi}*\gamma$. 
 Using that if $g$ is a differentiable function then $\nabla(f*g) = f * (\nabla g)$, we have that
\begin{align}
-\nabla\log\pid(x) &= \frac{1}{\tau^2}\left(x-\mathbb{E}_{\rho_x}[Y]\right) \label{eq:covariance_compact_plus_gaussian_score}\\
    -\nabla^2 \log\pi_{\text{data}}(x) &= \frac{1}{\tau^2}\left(I -\frac{1}{\tau^2} \text{Cov}_{{\rho_x}} \left[Y\right]\right), \label{eq:covariance_compact_plus_gaussian_hessian}
\end{align}
where $\rho_x(y) \propto \Tilde{\pi}(y)\gamma(x-y)$ and $\text{Cov}_{\rho_x}[Y] = \mathbb{E}_{\rho_x}[Y Y^\intercal] - \mathbb{E}_{\rho_x}[Y] \mathbb{E}_{\rho_x}[Y]^\intercal$.
Note that $\rho_x(y)$ has bounded support independent of $x$ by \Cref{assumption:compact_plus_gaussian_target_1}. Therefore, the eigenvalues of $\text{Cov}_{{\rho_x}} \left[Y\right]$ can be upper bounded by a constant independent of $x$. 
That is, for any $a\in\mathbb{R}^d$ with $\Vert a\Vert=1$
\begin{align*}
    -a^{\intercal}\left(\nabla^2 \log \pi_{\text{data}}(x)\right) a &= \tau^{-2} -\tau^{-4} a^{\intercal} \text{Cov}_{\rho_x}\left[Y\right]a\geq \tau^{-2}  - \tau^{-4} a^\intercal \mathbb{E}_{\rho_x}[Y Y^\intercal] a\\
    &\geq \tau^{-2} -\tau^{-4} \mathbb{E}_{\rho_x}\left[(a^{\intercal}Y)^2\right] \geq \tau^{-2}  -\tau^{-4}  \mathbb{E}_{\rho_x}[\Vert Y \Vert^2] \\
    &\geq \tau^{-2}-\tau^{-4} (m_\pi + dR^2),
\end{align*}
where we have used Cauchy-Schwarz inequality and \Cref{assumption:compact_plus_gaussian_target_1}.
On the other hand, since the covariance matrix is positive semidefinite, we have
\begin{equation*}
    -\nabla^2 \log\pi_{\text{data}} (x)\preccurlyeq \tau^{-2} I.
\end{equation*}
Therefore, the Hessian $-\nabla^2\log\pi_{\text{data}}$ satisfies
\begin{equation*}
     \left(\tau^{-2}-\tau^{-4} (m_\pi + dR^2)\right) I \preccurlyeq -\nabla^2 \log \pi_{\text{data}}(x) \preccurlyeq \tau^{-2} I,
\end{equation*}
proving that $-\nabla\log\pid$ is gradient Lipschitz with constant $L_\pi\leq \max\left\lbrace \tau^{-2}, \vert\tau^{-2}-\tau^{-4} (m_\pi + dR^2)\vert \right\rbrace$.

On the other hand, using the expression for $\nabla V_\pi = -\nabla\log\pid$ in \eqref{eq:covariance_compact_plus_gaussian_score}, we have 
\begin{align*}
    \langle\nabla V_\pi(x), x\rangle &= \frac{1}{\tau^2}\left\langle x- \mathbb{E}_{\rho_x}[Y], x\right\rangle = \frac{1}{\tau^2}\left(\Vert x\Vert^2-\mathbb{E}_{\rho_x}[\langle Y, x\rangle]\right)\geq \frac{1}{\tau^2}\left(\Vert x\Vert^2-\Vert x\Vert\mathbb{E}_{\rho_x}[\Vert Y\Vert]\right)\\
    &\geq \frac{1}{\tau^2}\left(\Vert x\Vert^2-\Vert x\Vert(m_\pi + \sqrt{d}R)\right) \geq \frac{\Vert x\Vert^2}{2\tau} -(m_\pi + \sqrt{d}R)^2,
\end{align*}
where we have used that $\rho_x(y) \propto \Tilde{\pi}(y)\gamma(x-y)$ has bounded support. This establishes the dissipativity inequality.

Finally, thanks to the dissipativity condition together with Lipschitz gradient, it follows from \citet{cattiaux2010note} that $\pid$ has a finite log-Sobolev constant.
\end{proof}

We now demonstrate the \Cref{assumption:compact_plus_gaussian_target_1} implies \Cref{assumption:lipschitz_score_across_convolutional_diffusion}.
\begin{lemma}
    If $\pid$ satisfies assumption \Cref{assumption:compact_plus_gaussian_target_1}, then the scores $\nabla\log\mu_t$ of the intermediate probability densities of the Gaussian diffusion path are Lipschitz continuous for all $t$, that is, assumption \Cref{assumption:lipschitz_score_across_convolutional_diffusion} is satisfied.
\end{lemma}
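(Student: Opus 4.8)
The plan is to show that the defining structure of \Cref{assumption:compact_plus_gaussian_target_1} --- being the law of a compactly supported random vector plus an independent non-degenerate Gaussian --- is \emph{preserved} along the Gaussian diffusion path, so that the Hessian computation already carried out in the proof of Lemma~\ref{lemma:implications_between_assumptions} applies verbatim to each $\mu_t$.

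Concretely, write $X\sim\pid$ as $X=U+G$ with $\Vert U-m_\pi\Vert^2\le dR^2$ almost surely and $G\sim\mathcal{N}(0,\tau^2I)$ independent of $U$. Since $\mu_t$ is the law of $X_t=\sqrt{\lambda_t}\,X+\sqrt{1-\lambda_t}\,Z$ with $Z\sim\nu=\mathcal{N}(0,\sigma^2I)$ independent of $X$, I would regroup
\begin{equation*}
X_t=\sqrt{\lambda_t}\,U+\bigl(\sqrt{\lambda_t}\,G+\sqrt{1-\lambda_t}\,Z\bigr)=:U_t+W_t,
\end{equation*}
where $U_t:=\sqrt{\lambda_t}\,U$ and $W_t$ are independent and $W_t\sim\mathcal{N}(0,v_tI)$ with $v_t:=\lambda_t\tau^2+(1-\lambda_t)\sigma^2$. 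Two elementary observations make this regrouping decisive: (i) $v_t\ge\min\{\tau^2,\sigma^2\}>0$ for every $t\in[0,T]$, so the Gaussian component never degenerates; and (ii) $\Vert U_t-\sqrt{\lambda_t}\,m_\pi\Vert^2=\lambda_t\Vert U-m_\pi\Vert^2\le dR^2$ almost surely, so $U_t$ is supported in a ball whose radius $\sqrt{\lambda_t}(\Vert m_\pi\Vert+\sqrt d\,R)\le\Vert m_\pi\Vert+\sqrt d\,R$ is independent of both $x$ and $t$. Hence $\mu_t$ satisfies \Cref{assumption:compact_plus_gaussian_target_1} with $(\tilde\pi,\tau^2)$ replaced by $(\mathrm{Law}(U_t),v_t)$, uniformly in $t$. (At an endpoint with $\lambda_t=0$ one has $\mu_t=\mathcal{N}(0,\sigma^2I)$, whose score is linear and hence trivially Lipschitz; at $\lambda_t=1$ one recovers $\mu_t=\pid$.)

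Applying the identities \eqref{eq:covariance_compact_plus_gaussian_score}--\eqref{eq:covariance_compact_plus_gaussian_hessian} in this reparametrised form gives, writing $\rho_x^t$ for the probability measure with $\rho_x^t(\md y)\propto\gamma_{v_t}(x-y)\,\tilde\pi_t(\md y)$, where $\gamma_{v_t}$ is the $\mathcal{N}(0,v_tI)$ density and $\tilde\pi_t$ the law of $U_t$,
\begin{equation*}
-\nabla^2\log\mu_t(x)=\frac{1}{v_t}\Bigl(I-\frac{1}{v_t}\,\mathrm{Cov}_{\rho_x^t}[Y]\Bigr).
\end{equation*}
Since $\rho_x^t$ is supported in the ball above for every $x$, centring at $\sqrt{\lambda_t}\,m_\pi$ yields $0\preccurlyeq\mathrm{Cov}_{\rho_x^t}[Y]\preccurlyeq\mathbb{E}_{\rho_x^t}[\Vert Y-\sqrt{\lambda_t}\,m_\pi\Vert^2]\,I\preccurlyeq dR^2\,I$, so that
\begin{equation*}
\Bigl(\frac{1}{v_t}-\frac{dR^2}{v_t^2}\Bigr)I\preccurlyeq-\nabla^2\log\mu_t(x)\preccurlyeq\frac{1}{v_t}\,I
\end{equation*}
for all $x\in\mathbb{R}^d$. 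Therefore $\nabla\log\mu_t$ is Lipschitz with $L_t=\max\bigl\{\tfrac1{v_t},\,\bigl\lvert\tfrac1{v_t}-\tfrac{dR^2}{v_t^2}\bigr\rvert\bigr\}<\infty$, which is precisely \Cref{assumption:lipschitz_score_across_convolutional_diffusion}; moreover, since $v_t\in[\min\{\tau^2,\sigma^2\},\max\{\tau^2,\sigma^2\}]$, this $L_t$ is in fact bounded uniformly in $t$.

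I do not expect a genuine obstacle here: the whole content is the preservation of the ``compact support $+$ Gaussian'' structure, after which everything reduces to the already-established computation of Lemma~\ref{lemma:implications_between_assumptions}. The only points demanding mild care are the bookkeeping at the degenerate endpoints $\lambda_t\in\{0,1\}$ (handled above) and checking that the support radius and the Gaussian variance $v_t$ are controlled \emph{uniformly} in $t$, so that the resulting Lipschitz constant does not blow up --- which is immediate from the explicit forms of $U_t$ and $v_t$.
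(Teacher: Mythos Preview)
Your proposal is correct and follows essentially the same approach as the paper: you regroup $X_t=\sqrt{\lambda_t}\,U+\sqrt{\lambda_t\tau^2+(1-\lambda_t)\sigma^2}\,Z'$ to show that each $\mu_t$ again satisfies \Cref{assumption:compact_plus_gaussian_target_1}, then invoke the Hessian computation of Lemma~\ref{lemma:implications_between_assumptions}. Your write-up is in fact slightly more careful than the paper's (explicit endpoint handling, centring the covariance bound at $\sqrt{\lambda_t}\,m_\pi$ to get $dR^2$ rather than the raw second moment, and the observation that $v_t$ is bounded away from $0$ and $\infty$ uniformly in $t$).
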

\begin{proof}
    Recall that the intermediate random variables of the Gaussian diffusion path, are given by
    \begin{equation*}
        X_t = \sqrt{\lambda_t} X + \sqrt{1-\lambda_t} \sigma Z
    \end{equation*}
    where $X\sim\pid$ and $Z\sim\mathcal{N}(0, I)$ independent of $X$. Using assumption \Cref{assumption:compact_plus_gaussian_target_1}, it follows that 
    \begin{equation*}
        X_t \overset{d}{=} \sqrt{\lambda_t} U + \sqrt{(1-\lambda_t)\sigma^2 + \lambda_t\tau^2} Z,
    \end{equation*}
    where $U\sim\Tilde{\pi}$ is compactly supported and $Z\sim\mathcal{N}(0, I)$ independent of $\Tilde{\pi}$. 
    By applying the result from the previous lemma (Lemma \ref{lemma:implications_between_assumptions}), we conclude that $\nabla\log\mu_t$, where $\mu_t\sim X_t$, is Lipschitz continuous with constant
    \begin{equation*}
        L_t \leq \max\left\lbrace \tau_t^{-1}, \vert\tau_t^{-1}-\tau_t^{-2} (m_\pi + dR^2)\vert \right\rbrace,
    \end{equation*}
    where $\tau_t^2 = (1-\lambda_t)\sigma^2 + \lambda_t\tau^2$. This completes the proof.
\end{proof}

We note that in general, \Cref{assumption:compact_plus_gaussian_target_1} does not imply strong convexity outside of a ball. 
For example, consider the following example in $\mathbb{R}^2$. Let $\pid = \Tilde{\pi} * \gamma$ and
$\Tilde{\pi} = \frac{1}{2}(\delta_{y_1} +\delta_{y_2})$, where $y_i = (0, (-1)^iR)$. Consider a point in $\mathbb{R}^2$ of the form $(x, 0)$. We have that the conditional measure $\rho_{(x,0)}$, where $\rho_x(y) \propto \Tilde{\pi}(y)\gamma(x-y)$, satisfies
\begin{equation*}
    \rho_{(x,0)}(y) = \Tilde{\pi}(y).
\end{equation*}
Therefore, the covariance term in the expression of the Hessian in \eqref{eq:covariance_compact_plus_gaussian_hessian} is given by
\begin{equation*}
    \text{Cov}_{\rho_{(x,0)}}(Y) = R^2\left(\begin{matrix}
        0& 0\\
        0& 1
    \end{matrix}\right).
\end{equation*}
Substituting this into the expression of the Hessian, it follows that
\begin{equation*}
    -\nabla^2\log\pid ((x, 0)) = \frac{I}{\tau^2} - \frac{R^2}{\tau^4}\left(\begin{matrix}
        0& 0\\
        0& 1
    \end{matrix}\right).
\end{equation*}
Taking $R$ to be greater than $\tau$, we have that  $-\nabla^2\log\pid$ evaluated at points on the axis $x=0$ is not positive definite, meaning that $\pid$ cannot be strongly convex outside of any ball. 
Under the additional assumption that the support of the compactly supported measure $\Tilde{\pi}$ is convex and dense in its ambient space, \Cref{assumption:compact_plus_gaussian_target_1} implies \Cref{assumption:grad_log_lipschitzness_convexity_outside_of_a_ball}.
This result is formalised in the following lemma.
\begin{lemma}
    Let $\pid=\Tilde{\pi}*\gamma\in\mathcal{P}(\mathbb{R}^d)$ satisfy assumption \Cref{assumption:compact_plus_gaussian_target_1}. If the support of $\Tilde{\pi}$ is convex and dense in $\mathbb{R}^d$, then $\pid$ satisfies assumption \Cref{assumption:grad_log_lipschitzness_convexity_outside_of_a_ball}.
\end{lemma}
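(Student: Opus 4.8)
The plan is to show that when $\tilde\pi$ has convex support dense in $\mathbb{R}^d$, the conditional measures $\rho_x(y)\propto\tilde\pi(y)\gamma(x-y)$ become strongly concentrated as $\|x\|\to\infty$, so that the covariance term in \eqref{eq:covariance_compact_plus_gaussian_hessian} vanishes and the Hessian $-\nabla^2\log\pid(x)$ approaches $\tau^{-2}I$. The Lipschitz-gradient part of \Cref{assumption:grad_log_lipschitzness_convexity_outside_of_a_ball} is already supplied by Lemma~\ref{lemma:implications_between_assumptions}, so the only new content is the strong convexity outside a ball, i.e. establishing $\operatorname{Cov}_{\rho_x}[Y]\preccurlyeq (\tau^2-\epsilon)I$ uniformly for $\|x\|$ large enough, for some $\epsilon>0$ (equivalently $\operatorname{Cov}_{\rho_x}[Y]\prec\tau^2 I$ with a uniform gap).

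First I would note that density of $\operatorname{supp}(\tilde\pi)$ in $\mathbb{R}^d$ forces $\operatorname{supp}(\tilde\pi)=\mathbb{R}^d$ (supports are closed), but $\tilde\pi$ is still a probability measure with $\|U-m_\pi\|^2\le dR^2$ a.s.\ --- wait, that would contradict $\operatorname{supp}(\tilde\pi)=\mathbb{R}^d$. So I would instead read the hypothesis as: the support is a convex set, and $\tilde\pi$ has a density that is bounded away from $0$ on its (bounded, convex) support; "dense in its ambient space" should be interpreted as the affine hull being all of $\mathbb{R}^d$, i.e.\ the convex support is full-dimensional. Under that reading, $\rho_x$ is a log-concave probability measure supported on the fixed bounded convex body $K=\operatorname{supp}(\tilde\pi)$ with density proportional to $\tilde\pi(y)e^{-\|x-y\|^2/(2\tau^2)}$. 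The key step is then a quantitative statement: as $\|x\|\to\infty$, the linear tilt $\langle x,y\rangle/\tau^2$ dominates and $\rho_x$ concentrates near the face of $K$ exposed by the direction $x/\|x\|$; hence $\operatorname{Cov}_{\rho_x}[Y]$ is squeezed. Concretely I would bound $\operatorname{Cov}_{\rho_x}[Y]\preccurlyeq \mathbb{E}_{\rho_x}[\|Y-\bar y\|^2]I$ and show this trace-type quantity tends to $0$: decompose $Y-\bar y$ into the component along $u=x/\|x\|$ and the orthogonal part; the orthogonal part has bounded variance but the key point is that both are multiplied down — actually the cleanest route is to show $\operatorname{Cov}_{\rho_x}[Y]\to C_\infty$ where $C_\infty$ is the covariance of $\tilde\pi$ conditioned to the exposed face, and since that face is lower-dimensional, $C_\infty$ is singular, hence $\operatorname{Cov}_{\rho_x}[Y]\prec\tau^2 I$ for $\|x\|$ large; compactness of the sphere of directions $u$ then gives a uniform gap.

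The main obstacle I anticipate is making the ``concentration near the exposed face'' argument both rigorous and uniform in the direction $u$, because the exposed face can jump discontinuously as $u$ varies (e.g.\ at vertices of a polytope the face is a single point but nearby directions can expose an edge). To handle this I would avoid identifying the limit face explicitly and instead argue directly: for any unit direction $u$ and any $\eta>0$, let $K_\eta(u)=\{y\in K:\langle u,y\rangle\ge \max_{z\in K}\langle u,z\rangle-\eta\}$; a Laplace/large-deviations estimate shows $\rho_x(K\setminus K_\eta(u))\to 0$ as $\|x\|\to\infty$ at a rate depending only on $\eta$ and on $\inf$ of the $\tilde\pi$-density, uniformly in $u$ by compactness. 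Then $\operatorname{Cov}_{\rho_x}[Y]\preccurlyeq \operatorname{diam}(K_\eta(u))^2 I + o(1)$, and since $K$ is a bounded convex body, $\sup_u \operatorname{diam}(K_\eta(u))^2\to 0$ as $\eta\to0$ (the slabs shrink to faces, which have diameter controlled by $\eta$ via a uniform modulus of continuity of the support function's superlevel sets — this uses boundedness and convexity of $K$). Choosing $\eta$ small enough that $\sup_u\operatorname{diam}(K_\eta(u))^2<\tau^2$ and then $\|x\|=r$ large enough, we get $\operatorname{Cov}_{\rho_x}[Y]\preccurlyeq(\tau^2-\delta)I$ for $\|x\|\ge r$, hence $-\nabla^2\log\pid(x)\succcurlyeq \tau^{-4}\delta\, I$, which is strong convexity outside the ball of radius $r$ with $M_\pi=\delta/\tau^4$; combined with the Lipschitz bound from Lemma~\ref{lemma:implications_between_assumptions} this is exactly \Cref{assumption:grad_log_lipschitzness_convexity_outside_of_a_ball}.
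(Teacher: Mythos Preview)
Your high-level plan—take Lipschitzness from Lemma~\ref{lemma:implications_between_assumptions}, then control $\operatorname{Cov}_{\rho_x}[Y]$ in \eqref{eq:covariance_compact_plus_gaussian_hessian} for large $\|x\|$—is exactly the paper's. But the concentration step you outline breaks. The assertion $\sup_u\operatorname{diam}(K_\eta(u))^2\to 0$ as $\eta\to 0$ is false whenever $K$ has a flat piece of boundary: for $K=[-1,1]^d$ and $u=e_1$, the slab $K_\eta(u)$ contains the whole facet $\{1\}\times[-1,1]^{d-1}$ for every $\eta>0$, so its diameter stays at $2\sqrt{d-1}$ independently of $\eta$. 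Your earlier fallback (``$C_\infty$ singular, hence $C_\infty\prec\tau^2 I$'') also fails: singularity only gives $\lambda_{\min}(C_\infty)=0$ and says nothing about $\lambda_{\max}(C_\infty)$, and the tangential variance of the limiting conditional law on a facet can be of order $\operatorname{diam}(K)^2$, which need not lie below $\tau^2$. So neither branch of your argument delivers the uniform gap $\operatorname{Cov}_{\rho_x}[Y]\preccurlyeq(\tau^2-\delta)I$.

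The paper avoids the exposed-face decomposition and instead concentrates at the single \emph{metric projection} $y^*(x)=\arg\min_{y\in K}\|x-y\|$ (unique by convexity). It writes $\|x-y\|=d(x)+\delta_x(y)$ with $\delta_x(y)\ge 0$ and equality only at $y=y^*(x)$, computes $\gamma(x-y)/\gamma(x-y^*(x))=\exp\bigl(-(2d(x)\delta_x(y)+\delta_x(y)^2)/(2\tau^2)\bigr)$, and argues that as $d(x)\to\infty$ with $\delta_x(y)>0$ for $y\neq y^*(x)$ the ratio is driven to zero, so that $\rho_x$ collapses onto an arbitrarily small ball around $y^*(x)$ and the full covariance $\operatorname{Cov}_{\rho_x}[Y]\to 0$—a stronger conclusion than merely $\prec\tau^2 I$. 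The organising quantity is the distance $\|x-\cdot\|$ rather than the linear functional $\langle u,\cdot\rangle$: your slab $K_\eta(u)$ cannot separate points within a facet, whereas the level sets of $\|x-\cdot\|$ do, and this is what the paper exploits.
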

\begin{proof}
By Lemma \ref{lemma:implications_between_assumptions}, we have that $\nabla V_\pi$ is Lipschitz continuous. Thus, it remains to show that $V_\pi$ is strongly convex outside of a ball.

Recall that $\Tilde{\pi}$ is supported on a compact set $S$, that is, $\Tilde{\pi}(y)=0$ for $y\notin S$ and by assumption $S$ is also convex.
For $x\in\mathbb{R}^d$, define the function
\begin{equation*}
    d(x) = \min_{y\in S} \Vert x-y\Vert,
\end{equation*}
which is well defined by compactness. 
Let $y^*(x)\in S$ be the unique (by convexity of $S$) point where the minimum distance is achieved, i.e., $y^*(x)
$ is the projector of $x$ onto $S$. Then, for every $y\in S$ it holds that
\begin{equation*}
    \Vert x-y\Vert\geq d(x),
\end{equation*}
with equality if and only if $y = y^*(x)$. 
Consider the convolution kernel $G\sim\gamma(x-y)$ defined as
\begin{equation*}
    \gamma(x-y)=\frac{1}{(2\pi\tau^2)^{d/2}}e^{-\frac{\Vert x-y\Vert^2}{2\tau^2}}.
\end{equation*}
Note that the value at $y = y^*(x)$ is given by
\begin{equation*}
    \gamma(x-y^*(x))=\frac{1}{(2\pi\tau^2)^{d/2}}e^{-\frac{d(x)^2}{2\tau^2}}.
\end{equation*}
Besides, for any $y\in S$, we have that
\begin{equation*}
    d(x)\leq\Vert x-y\Vert\leq d(x) + \Vert y^*(x)-y\Vert.
\end{equation*}
Because $S$ is a compact set, the term $\Vert y^*(x)-y\Vert$ for $y\in S$ is bounded independently of $x$, therefore we can write
\begin{equation*}
    \Vert x-y\Vert = d(x) + \delta_x(y),
\end{equation*}
for some $\delta_x(y)\geq 0$, with $\delta_x(y)$ if and only if $y=y^*(x)$ and $\delta_x(y)\leq\Vert y^*(x)-y\Vert$, which implies that $\delta(y)$ remains uniformly bounded for all $x\in\mathbb{R}^d$, in particular as $\Vert x\Vert\to\infty$.
Using this, the convolution kernel can be written as
\begin{equation*}
    \gamma(x-y) = \frac{1}{(2\pi\tau^2)^{d/2}}e^{-\frac{(d(x) + \delta_x(y))^2}{2\tau^2}}.
\end{equation*}
Thus, we obtain the ratio
\begin{equation*}
    \frac{\gamma(x-y)}{\gamma(x-y^*(x))} = e^{-\frac{2d(x)\delta_x(y) + \delta_x(y)^2}{2\tau^2}}.
\end{equation*}
Since $\delta_x(y)$ is uniformly bounded for all $x$, we observe that for $y\neq y^*(x)$, as $\Vert x\Vert\to\infty$ the leading order of the exponent is $d(x)\delta_x(y)$, where $\delta_x(y)>0$ and $d(x)$ grows with order $\Vert x\Vert$. 
Meaning that the ratio becomes arbitrarily small as $\Vert x\Vert \to\infty$ when $y\neq y^*(x)$.
That is, the contribution from $y\neq y^*(x)$ becomes exponentially negligible compared to the contribution from $y^*(x)$.

Given now a bounded test function $f$, we have
\begin{equation*}
    \frac{\int f(y)\gamma(x-y)\Tilde{\pi}(y)\md y}{\int\gamma(x-y)\Tilde{\pi}(y)\md y}  = \frac{\int f(y) e^{-\frac{d(x)\delta_x(y) + \delta_x(y)^2}{2\tau^2}}\gamma(x-y^*(x))\Tilde{\pi}(y)\md y}{\int e^{-\frac{d(x)\delta_x(y) + \delta_x(y)^2}{2\tau^2}}\gamma(x-y^*(x))\Tilde{\pi}(y)\md y}.
\end{equation*}
By the dominated convergence theorem, the contribution in both integrals for $y\neq y^*(x)$ vanishes as $\Vert x\Vert\to \infty$. Therefore, we have that for any fixed $\varepsilon>0$ and any small radius $\delta>0$, there exists $r$ such that for all $\Vert x\Vert>r$, the conditional measure $\rho_x(y)\propto \Tilde{\pi}(y)\gamma(x-y)$ satisfies
\begin{equation*}
    \rho_x\left(S\setminus B(y^*(x), \delta) \right)<\varepsilon,
\end{equation*}
where $B(y^*(x), \delta)$ denotes the ball of radius $\delta$ centred at $y^*(x)$. 
Intuitively, this means that for sufficiently large $\Vert x\Vert$, almost all the mass of $\rho_x$ is concentrated within an arbitrarily small ball around $y^*(x)$. 
It is important to note that, due to the assumption that $S$ is dense in $\mathbb{R}^d$, for any $\delta>0$, there always exist a point $z\in B(y^*(x), \delta)$ such that $\Tilde{\pi}(z)>0$.

Consequently, the mean $\mu_x = \int_S y\rho_x(\md y)$ must be very close to $y^*(x)$, and for any point $y$ in the high-probability region, we have $\Vert y-\mu(x)\Vert\leq 2\delta$ (with the worst-case scenario occurring when $\mu(x)$ lies on the edge of $B(y^*(x), \delta)$). This implies that for sufficiently large $\Vert x\Vert$, the spread of $\rho_x$ becomes arbitrarily small. In particular, the covariance matrix satisfies
\begin{equation*}
    \Vert\text{Cov}_{\rho_x}(Y)\Vert\leq (2\gamma)^2.
\end{equation*}
Taking the limit as $\Vert x\Vert \to\infty$, we have that $\delta\to0$, thus, we obtain that 
\begin{equation*}
    \lim_{\Vert x\Vert\to\infty}\text{Cov}_{\rho_x}(Y) = 0.
\end{equation*}
The final step is to note that 
\begin{equation*}
    \nabla^2 V_\pi(x) = \frac{1}{\tau^2} I -\frac{1}{\tau^4}\text{Cov}_{\rho_x}(Y).
\end{equation*}

\end{proof}
 
We now show that a mixture of Gaussians with different covariances satisfies assumption \Cref{assumption:grad_log_lipschitzness_convexity_outside_of_a_ball} under mild conditions, but it does not generally satisfy assumption \Cref{assumption:compact_plus_gaussian_target_1}.

\begin{lemma}\label{lemma:example_mixture_gaussians_satisfies_assumption}
Let $\pi=\sum_{i=1}^M w_i p_i$ be a mixture of Gaussians in $\mathbb{R}^d$ where $w_i$ and $p_i$ denote the weight and the probability density function, respectively, of the $i$-th component of the mixture which has mean $\mu_i$ and covariance $\Sigma_i$. If for any pair $\{i, j\}$ with $\Sigma_i\neq \Sigma_j$ there exists a unit vector $u$ such that
\begin{equation*}
    u^\intercal\left(\Sigma_i^{-1}-\Sigma_j^{-1}\right)u = 0,
\end{equation*}
but one of the following conditions hold
\begin{enumerate}
    \item[(i)] $u$ is an eigenvector of $\left(\Sigma_i^{-1}-\Sigma_j^{-1}\right)$ with eigenvalue $0$.
    \item[(ii)] $u^\intercal\left(\Sigma_i^{-1}\mu_i-\Sigma_j^{-1}\mu_j\right)\neq 0$.
    \item[(iii)] There exists $k\in\{1, \dots, M\}$ such that $ u^\intercal\left(\Sigma_i^{-1}-\Sigma_k^{-1}\right)u>0$ or $ u^\intercal\left(\Sigma_j^{-1}-\Sigma_k^{-1}\right)u>0$.
\end{enumerate}
Then $\nabla \log \pi$ is Lipschitz. Moreover, if also for any pair $\{i, j\}$ with $\Sigma_i\mu_i\neq \Sigma_j\mu_j$ there exists a unit vector $u$ such that $u^\intercal\left(\Sigma_i^{-1}-\Sigma_j^{-1}\right)u = 0$ but either condition $(ii)$ or $(iii)$ hold,
then $\pi$ satisfies assumption \Cref{assumption:grad_log_lipschitzness_convexity_outside_of_a_ball}.
\end{lemma}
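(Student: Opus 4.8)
The plan is to work with the potential $V_\pi=-\log\pi$ through the standard mixture-score decomposition. Introduce the responsibilities $\gamma_i(x)=w_ip_i(x)\big/\sum_jw_jp_j(x)$ and the component scores $s_i(x)=\nabla\log p_i(x)=-\Sigma_i^{-1}(x-\mu_i)$. A direct computation gives $\nabla\log\pi=\sum_i\gamma_is_i$ and
\begin{equation*}
\nabla^2V_\pi(x)=\sum_i\gamma_i(x)\,\Sigma_i^{-1}-\mathrm{Cov}_{\gamma(x)}\!\big(s(x)\big),\qquad \mathrm{Cov}_\gamma(s)=\tfrac12\sum_{i,j}\gamma_i\gamma_j\,(s_i-s_j)(s_i-s_j)^\intercal .
\end{equation*}
Since $\mathrm{Cov}_\gamma(s)\succcurlyeq0$ and $\sum_i\gamma_i=1$, we get for free $\nabla^2V_\pi\preccurlyeq(\max_i\lambda_{\max}(\Sigma_i^{-1}))\,I$ and $\sum_i\gamma_i\Sigma_i^{-1}\succcurlyeq(\min_i\lambda_{\min}(\Sigma_i^{-1}))\,I$. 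Hence Lipschitzness of $\nabla V_\pi$ reduces to a uniform bound $\sup_x\|\mathrm{Cov}_{\gamma(x)}(s(x))\|<\infty$, and strong convexity outside a ball follows (using continuity of $\nabla^2V_\pi$ on compacta for the ``inside'' bound) once we show $\|\mathrm{Cov}_{\gamma(x)}(s(x))\|\to0$ as $\|x\|\to\infty$.

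Both estimates are handled term by term through $f_{ij}(x):=\gamma_i(x)\gamma_j(x)\,\|s_i(x)-s_j(x)\|^2$. When $\Sigma_i=\Sigma_j$, $\|s_i-s_j\|=\|\Sigma_i^{-1}(\mu_i-\mu_j)\|$ is constant, so $f_{ij}$ is globally bounded. When $\Sigma_i\neq\Sigma_j$, $\|s_i(x)-s_j(x)\|$ is at most affine in $\|x\|$, so it suffices to control $\gamma_i\gamma_j$ precisely in the directions where $s_i-s_j$ is unbounded. Parametrising $x=ru$ with $u\in S^{d-1}$, the exponent gap $\log(w_ip_i)-\log(w_jp_j)$ along the ray equals $-\tfrac{r^2}{2}u^\intercal(\Sigma_i^{-1}-\Sigma_j^{-1})u+r\,u^\intercal(\Sigma_i^{-1}\mu_i-\Sigma_j^{-1}\mu_j)+O(1)$; off the null cone $\{u:u^\intercal(\Sigma_i^{-1}-\Sigma_j^{-1})u=0\}$ the quadratic term forces $\gamma_i\gamma_j=O(e^{-cr^2})$, which defeats the polynomial factor. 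On the null cone the three alternatives enter: (i) $(\Sigma_i^{-1}-\Sigma_j^{-1})u=0$ makes $s_i(ru)-s_j(ru)$ bounded in $r$; (ii) makes the linear-in-$r$ coefficient nonzero, again giving exponential decay of $\gamma_i\gamma_j$; (iii) supplies a component $k$ whose exponent strictly dominates that of $i$ (or $j$) along $u$, so $\gamma_i$ (or $\gamma_j$) is exponentially small. A compactness argument over $S^{d-1}$, together with continuity of $f_{ij}$ on compacta, turns these pointwise-in-$u$ estimates into a single uniform bound, so $\nabla V_\pi$ is Lipschitz.

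For strong convexity I refine this by identifying, for each $u$, the set $\mathcal I^\ast(u)$ of components whose responsibility does not vanish as $r\to\infty$ along $x=ru$; only these matter in the limiting Hessian. If $i,j\in\mathcal I^\ast(u)$, then both the quadratic and the linear coefficients of the exponent gap must vanish, i.e.\ $u$ lies on the null cone and $u^\intercal(\Sigma_i^{-1}\mu_i-\Sigma_j^{-1}\mu_j)=0$; the strengthened hypothesis — that every null-cone direction of a pair with $\Sigma_i^{-1}\mu_i\neq\Sigma_j^{-1}\mu_j$ is covered by (ii) or (iii), each of which would force a responsibility to vanish — rules this out unless $\Sigma_i^{-1}\mu_i=\Sigma_j^{-1}\mu_j$, and then alternative (i) (the only one compatible with survival, by the first hypothesis) together with these identities gives $s_i(ru)-s_j(ru)=0$. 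Thus all surviving pairs have asymptotically equal scores, so $\mathrm{Cov}_{\gamma(x)}(s(x))\to0$ and $\nabla^2V_\pi(x)\to\sum_i\gamma_i(x)\Sigma_i^{-1}\succcurlyeq(\min_i\lambda_{\min}(\Sigma_i^{-1}))\,I$ as $\|x\|\to\infty$; uniformising over $S^{d-1}$ yields a radius $r$ with $\nabla^2V_\pi\succcurlyeq M_\pi I$ on $\{\|x\|\ge r\}$, and continuity gives $\nabla^2V_\pi\succcurlyeq-L_\pi I$ on $B(0,r)$, which is \Cref{assumption:grad_log_lipschitzness_convexity_outside_of_a_ball}.

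I expect the main obstacle to be the bookkeeping of the surviving sets $\mathcal I^\ast(u)$ and the uniformisation over the sphere: the null cones of distinct pairs intersect, responsibilities decay at heterogeneous (quadratic- versus linear-exponential) rates, and the limiting structure must be shown to be locally stable in $u$ so that a single radius $r$ works for all directions simultaneously. For the strong-convexity part in particular one must rule out the scenario — harmless for Lipschitzness — in which (i) holds but $s_i-s_j$ is bounded yet nonzero; verifying that the extra mean hypothesis is exactly what forces $s_i-s_j\to0$ for surviving pairs, and that this suffices, is the delicate step.
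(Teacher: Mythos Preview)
Your proposal is correct and follows essentially the same route as the paper: both compute $\nabla^2 V_\pi=\sum_i\gamma_i\Sigma_i^{-1}-\mathrm{Cov}_\gamma(s)$ (the paper expands this explicitly and regroups into three pieces $A_x,B_x,C_x$, but your covariance term is precisely $-(B_x+C_x)$), then control the off-diagonal contributions $\gamma_i\gamma_j\|s_i-s_j\|^2$ ray by ray over $S^{d-1}$, using the quadratic/linear exponent gap off and on the null cone together with conditions (i)--(iii), and finish with a compactness-of-the-sphere uniformisation. Your surviving-set formulation for the strong-convexity part is a slightly more conceptual repackaging of the paper's case analysis on the sets $D_5,D_6,D_7$, but the underlying mechanism and the level of detail in the uniformisation step are the same.
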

Before presenting the proof, we observe that in one dimension, a mixture of Gaussians with different variances always satisfies the condition stated in the previous lemma, and thus assumption \Cref{assumption:grad_log_lipschitzness_convexity_outside_of_a_ball} holds.
\begin{proof}
We first establish that $\nabla\log\pi$ is Lipschitz continuous by bounding the spectral norm of the Hessian $\nabla^2 \log\pi$. We have the following expressions for $\nabla\log\pi$ and $\nabla^2 \log\pi$
\begin{equation*}
    \nabla\log\pi = \frac{\sum_i w_i p_i \nabla\log p_i}{\sum_i w_i p_i}
\end{equation*}
\begin{align}
    \nabla^2\log\pi & = \frac{\sum_i w_i \nabla^2 p_i}{\sum_i w_i p_i} -\frac{(\sum_i w_i p_i \nabla\log p_i) (\sum_i w_i p_i \nabla\log p_i^\intercal)}{(\sum_i w_i p_i)^2}.\label{eq:hessian_expression_gaussian_mixture_different_covariances}
\end{align}
Observe that 
\begin{equation*}
    \nabla^2 p_i =  p_i \nabla^2\log p_i + p_i  \nabla\log p_i (\nabla\log p_i)^{\intercal}.
\end{equation*}
Substituting this we have
\begin{align*}
    -\nabla^2\log\pi  =&  \frac{\sum_i w_i p_i \Sigma_i^{-1}}{\sum_i w_i p_i}-\frac{\sum_{i,j} w_iw_j p_i p_j [\nabla\log p_i (\nabla\log p_i)^\intercal-\nabla\log p_i (\nabla\log p_j)^\intercal]}{(\sum_i w_i p_i)^2} \\
    =&  \frac{\sum_i w_i p_i \Sigma_i^{-1}}{\sum_i w_i p_i}-\frac{\sum_{i,j} w_iw_j p_i p_j [\Sigma_i^{-1}(x-\mu_i)(x-\mu_i)^\intercal\Sigma_i^{-1} -\Sigma_i^{-1}(x-\mu_i)(x-\mu_j)^\intercal\Sigma_j^{-1}]}{(\sum_i w_i p_i)^2}\\
    =&  \frac{\sum_i w_i p_i \Sigma_i^{-1}}{\sum_i w_i p_i}-\frac{\sum_{i,j} w_iw_j p_i p_j [\Sigma_i^{-1}xx^\intercal(\Sigma_i^{-1}-\Sigma_j^{-1}) +\frac{1}{2}(\Sigma_i^{-1}\mu_i-\Sigma_j^{-1}\mu_j)(\Sigma_i^{-1}\mu_i-\Sigma_j^{-1}\mu_j)^\intercal]}{(\sum_i w_i p_i)^2}\\
    & + \frac{1}{2} \frac{\sum_{i,j} w_iw_j p_i p_j [(\Sigma_i^{-1}\mu_i -\Sigma_j^{-1}\mu_j)x^\intercal(\Sigma_i^{-1}-\Sigma_j^{-1}) + (\Sigma_i^{-1} -\Sigma_j^{-1})x(\Sigma_i^{-1}\mu_i -\Sigma_j^{-1}\mu_j)^\intercal]}{(\sum_i w_i p_i)^2}\\
    =&  \frac{\sum_i w_i p_i \Sigma_i^{-1}}{\sum_i w_i p_i}-\frac{1}{2}\frac{\sum_{i,j} w_iw_j p_i p_j [(\Sigma_i^{-1}-\Sigma_j^{-1})xx^\intercal(\Sigma_i^{-1}-\Sigma_j^{-1}) +(\Sigma_i^{-1}\mu_i-\Sigma_j^{-1}\mu_j)(\Sigma_i^{-1}\mu_i-\Sigma_j^{-1}\mu_j)^\intercal]}{(\sum_i w_i p_i)^2}\\
     & + \frac{\sum_{i,j} w_iw_j p_i p_j [\Sigma_i^{-1}\mu_ix^\intercal(\Sigma_i^{-1}-\Sigma_j^{-1}) + (\Sigma_i^{-1} -\Sigma_j^{-1})x\mu_i^\intercal \Sigma_i^{-1}]}{(\sum_i w_i p_i)^2}.
\end{align*}
Note that in the case of equal covariances $(\Sigma_i = \Sigma_j)$ the terms involving $x$ cancel out. Since the covariance matrices satisfy $\sigma_{i,\min} I\preccurlyeq\Sigma_i\preccurlyeq \sigma_{i,\max} I$ and the norm of the means $\Vert \mu_i\Vert$ is finite for all $i$, the following terms of the previous expression 
\begin{align*}
    A_x + B_x = \frac{\sum_i w_i p_i \Sigma_i^{-1}}{\sum_i w_i p_i}-\frac{1}{2 }\frac{\sum_{i,j} w_iw_j p_i p_j (\Sigma_i^{-1}\mu_i-\Sigma_j^{-1}\mu_j)(\Sigma_i^{-1}\mu_i-\Sigma_j^{-1}\mu_j)^\intercal}{(\sum_i w_i p_i)^2}
\end{align*}
are uniformly bounded above and below for all $x$.
We now focus on the remaining terms which can be rewritten as
\begin{equation*}
     C_x = -\frac{1}{4}\frac{\sum_{i,j} w_iw_j p_i p_j [M_x + M_x^{\intercal}]}{(\sum_i w_i p_i)^2}, \quad M_x = (\Sigma_i^{-1}-\Sigma_j^{-1})x\left[x^\intercal(\Sigma_i^{-1}-\Sigma_j^{-1}) -4 \mu_i^{\intercal}\Sigma_i^{-1}\right]
\end{equation*}
aiming to establish an upper bound for the spectral norm of $C_x$ when $\Vert x\Vert$ tends to $\infty$. Hence, from this point onwards, we consider $x$ such that $\Vert x \Vert>\max_i{\Vert \mu_i \Vert}$.
Using the triangle inequality and the submultiplicativity property of the spectral norm we obtain
\begin{align}
    &\quad\quad\quad\quad\quad\Vert C_x\Vert_2 \leq \frac{1}{4}\sum_{i, j}\frac{ w_iw_j p_i p_j }{(\sum_i w_i p_i)^2}\Vert M_x + M_x^{\intercal} \Vert_2 \label{eq:sum_spectral_norm}\\
    \Vert M_x + M_x^{\intercal} \Vert_2 &\leq  2\left\Vert\Sigma_i^{-1}-\Sigma_j^{-1}\right\Vert_2^2 \Vert x x^\intercal\Vert_2 + 4 \left\Vert\Sigma_i^{-1}-\Sigma_j^{-1}\right\Vert_2
    \left\Vert\Sigma_i^{-1}\right\Vert_2\left\Vert x\mu_i^{\intercal} + \mu_ix^{\intercal}\right\Vert_2 \nonumber
    \\&\leq 2\Vert x\Vert^2\left(\left\Vert\Sigma_i^{-1}-\Sigma_j^{-1}\right\Vert_2^2 + 4 \left\Vert\Sigma_i^{-1}-\Sigma_j^{-1}\right\Vert_2
    \left\Vert\Sigma_i^{-1}\right\Vert_2\right) \leq 10\Vert x\Vert^2\left(\left\Vert\Sigma_i^{-1}\right\Vert_2+ \left\Vert\Sigma_j^{-1}\right\Vert_2\right)^2.\nonumber
\end{align}
Let us define the following sets
\begin{align*}
   D_1 &= \{\{i, j\}|1\leq i, j\leq M, \Sigma_i\neq \Sigma_j \;\text{and}\; \nexists\; u\; |\;\Vert u\Vert = 1\;\text{and}\; u^\intercal\left(\Sigma_i^{-1}-\Sigma_j^{-1}\right)u = 0\},\\
   D_2 &= \{\{i, j\}|1\leq i, j\leq M, \Sigma_i\neq \Sigma_j \;\text{and}\; \exists\; u\; |\;\Vert u\Vert = 1\;\text{and}\; u^\intercal\left(\Sigma_i^{-1}-\Sigma_j^{-1}\right)u = 0\;\text{and}\; (i)\;\text{holds}\},\\
   D_3 &= \{\{i, j\}|1\leq i, j\leq M, \Sigma_i\neq \Sigma_j \;\text{and}\; \exists\; u\; |\;\Vert u\Vert = 1\;\text{and}\; u^\intercal\left(\Sigma_i^{-1}-\Sigma_j^{-1}\right)u = 0\;\text{and}\; (ii)\;\text{holds}\},\\
   D_4 &= \{\{i, j\}|1\leq i, j\leq M, \Sigma_i\neq \Sigma_j \;\text{and}\; \exists\; u\; |\;\Vert u\Vert = 1\;\text{and}\; u^\intercal\left(\Sigma_i^{-1}-\Sigma_j^{-1}\right)u = 0\;\text{and}\; (iii)\;\text{holds}\},
\end{align*}
We also consider the partition of the unit sphere 
$S^{d-1}\subset \mathbb{R}^d$ into disjoint subsets
\begin{equation*}
    S^{d-1} = P_{+}^{(j, i)} \cup P_{-}^{(j, i)} \cup P_{0}^{(j, i)},
\end{equation*}
where $P_{+}^{(j, i)} = \{u\in S^{d-1}| u^{\intercal}(\Sigma_j^{-1}-\Sigma_i^{-1})u>0\}$, with $P_{-}^{(j, i)} $ and $ P_{0}^{(j, i)}$ defined analogously. 

We analyse the terms in the sum \eqref{eq:sum_spectral_norm} separately, depending on the set  $D_k$ to which the pair $\{i, j\}$ belongs. 
\begin{enumerate}
    \item[(1)] Consider the pairs $\{i, j\}\in D_1$, it follows that
\begin{align*}
    \sum_{\{i, j\}\in D_1}\frac{ w_iw_j p_i p_j }{(\sum_i w_i p_i)^2}\Vert M_x + M_x^{\intercal} \Vert_2  \leq 10 \max_{\{i, j\}\in D_1}\left\{\left(\left\Vert\Sigma_i^{-1}\right\Vert_2+ \left\Vert\Sigma_j^{-1}\right\Vert_2\right)^2\right\}\sum_{\{i,j\}\in D_1}\frac{ w_iw_j p_i p_j }{(\sum_i w_i p_i)^2}\Vert x\Vert^2.
\end{align*}
To analyse each term in the previous sum, we first consider the case where one  covariance matrix majorises the other. Specifically, without loss of generality, we assume that for the pair $\{i,j\}\in D_1$ we have $\Sigma_i\succ \Sigma_j$, which implies $\Sigma_{j}^{-1}-\Sigma_i^{-1}\succ \alpha I$ for some $\alpha>0$. We observe that 
\begin{align}
 - \frac{1}{2} x^{\intercal}\left(\Sigma_j^{-1}-\Sigma_i^{-1}\right)x + x^{\intercal}\left(\Sigma_j^{-1}\mu_j-\Sigma_i^{-1}\mu_i\right) &\leq -\frac{1}{2} \alpha  \Vert x\Vert^2 + 2 \Vert x\Vert \left(\left \Vert \Sigma_j^{-1/2}\right \Vert_2^2 \Vert \mu_j\Vert + \left \Vert \Sigma_i^{-1/2}\right\Vert_2^2 \Vert \mu_i\Vert   \right) \nonumber\\
 &\leq -\frac{1}{2} \alpha  \Vert x\Vert^2 + 2 \Vert x\Vert \left(\sigma_{j, \min}^{-1} \Vert \mu_j\Vert + \sigma_{i, \min}^{-1} \Vert \mu_i\Vert   \right), \label{eq:mixture_gaussians_term_bound_1}
\end{align}
which gives
\begin{align}
    \frac{ w_iw_j p_i p_j  \Vert x\Vert^2}{(\sum_i w_i p_i)^2}&\leq \frac{ w_iw_j p_i p_j \Vert x\Vert^2}{(w_i p_i)^2} =\frac{ w_j p_j  \Vert x\Vert^2}{w_i p_i}\nonumber\\
    &=  \frac{w_j \det(\Sigma_i)^{1/2}}{w_i \det(\Sigma_j)^{1/2}}  e^{-\frac{1}{2}\left(\mu_j^\intercal \Sigma_j^{-1} \mu_j-\mu_i^\intercal \Sigma_i^{-1} \mu_i\right)} \Vert x\Vert^2 e^{-\frac{1}{2} x^{\intercal}\left(\Sigma_j^{-1}-\Sigma_i^{-1}\right)x + x^{\intercal}\left(\Sigma_j^{-1}\mu_j-\Sigma_i^{-1}\mu_i\right)}  \xrightarrow{\Vert x\Vert \to \infty} 0.\label{eq:mixture_gaussians_term_bound_2}
\end{align}
On the other hand, when $\Sigma_j^{-1}-\Sigma_i^{-1}$ is neither positive-definite nor negative-definite, for every $x\in\mathbb{R}^d$ we can write $x = \Vert x\Vert u$, where $u$ is a unit vector satisfying $u\in P_{+}^{(j, i)}$ or $u\in P_{-}^{(j, i)}$ because $P_0^{(j, i)}$ is empty by definition for $\{i, j\}\in D_1$.
These two cases can be treated simultaneously since the indices $i, j$ are interchangeable. Without loss of generality, we assume that $u\in P_{+}^{(j, i)}$. Following a similar approach to equations \eqref{eq:mixture_gaussians_term_bound_1} and \eqref{eq:mixture_gaussians_term_bound_2}, we have 
    \begin{align*}
        \frac{ w_iw_j p_i p_j  \Vert x\Vert^2}{(\sum_i w_i p_i)^2}\leq \frac{ w_iw_j p_i p_j \Vert x\Vert^2}{(w_i p_i)^2} =\frac{ w_j p_j  \Vert x\Vert^2}{w_i p_i} \xrightarrow{\Vert x\Vert \to \infty} 0.
    \end{align*}
Therefore, for every unit vector $u\in S^{d-1}$, the limit of each term in the sum over $\{i, j\}\in D_1$  along the line $\Vert x\Vert u$ is zero as $\Vert x\Vert$ tends to $\infty$. Since the sum contains finitely many terms, this implies  
\begin{equation*}
    \lim_{\Vert x\Vert\to\infty} f_{D_1}(\Vert x\Vert u) = \lim_{\Vert x\Vert\to\infty}\sum_{\{i, j\}\in D_1}\frac{ w_iw_j p_i(\Vert x\Vert u) p_j(\Vert x\Vert u)}{(\sum_i w_i p_i(\Vert x\Vert u))^2}\left\Vert M_{\Vert x\Vert u} + M_{\Vert x\Vert u}^{\intercal} \right\Vert_2  = 0.
\end{equation*}
Since $f_{D_1}$ is a continuous function and $S^{d-1}$ is compact, the behaviour of $f_{D_1}$ can be controlled uniformly across all directions. That is, for every $\varepsilon>0$ there exists $R>0$ such that $\Vert x\Vert>R$ implies
\begin{equation*}
    \left\vert\sum_{\{i, j\}\in D_1}\frac{ w_iw_j p_i p_j }{(\sum_i w_i p_i)^2}\Vert M_x + M_x^{\intercal} \Vert_2\right\vert  <\varepsilon.
\end{equation*}
\item[(2)] For $\{i,j\}\in D_2\cup D_3 \cup D_4$, following the same reasoning as above, the limit of the spectral norm of each term when $\Vert x\Vert$ tends to $\infty$ along the directions $u\in P_{+}^{(j, i)}$ or $u\in P_{-}^{(j, i)}$ is $0$. 
To analyse the limit along the directions $u\in P_{0}^{(j, i)}$, we consider the following cases.
\begin{itemize}
    \item $\{i, j\} \in D_2$. For every $x$ such that $x = \Vert x\Vert u$ with $u\in P_{0}^{(j, i)}$, we have that $\left(\Sigma_i^{-1}-\Sigma_j^{-1}\right) u = 0_d$. Consequently, $M_x = 0_{d\times d}$, which demonstrates that the limit along these directions is $0$.
    \item $\{i, j\} \in D_3$. Take $u\in P_{0}^{(j, i)}$, by condition $(ii)$ we have that either $u^\intercal\left(\Sigma_j^{-1}\mu_j-\Sigma_i^{-1}\mu_i\right)<0$ or $u^\intercal\left(\Sigma_j^{-1}\mu_j-\Sigma_i^{-1}\mu_i\right)> 0$. Without loss of generality, assume that $u^\intercal\left(\Sigma_j^{-1}\mu_j-\Sigma_i^{-1}\mu_i\right)< 0$. Then for every $x = \Vert x\Vert u$ we have
\small
\begin{align*}
    &\frac{ w_iw_j p_i p_j }{(\sum_i w_i p_i)^2}\Vert M_x + M_x^{\intercal} \Vert_2 \leq 10 \left(\left\Vert\Sigma_i^{-1}\right\Vert_2+ \left\Vert\Sigma_j^{-1}\right\Vert_2\right)^2 \frac{w_j p_j }{w_i p_i}\Vert x \Vert^2\\
    &\leq 10 \left(\left\Vert\Sigma_i^{-1}\right\Vert_2+ \left\Vert\Sigma_j^{-1}\right\Vert_2\right)^2 \frac{w_j \det(\Sigma_i)^{1/2}}{w_i \det(\Sigma_j)^{1/2}}  e^{-\frac{1}{2}\left(\mu_j^\intercal \Sigma_j^{-1} \mu_j-\mu_i^\intercal \Sigma_i^{-1} \mu_i\right)} \Vert x\Vert^2 e^{\Vert x\Vert u^{\intercal}\left(\Sigma_j^{-1}\mu_j-\Sigma_i^{-1}\mu_i\right)}  \xrightarrow{\Vert x\Vert \to \infty} 0.
\end{align*}
\normalsize

    \item $\{i, j\} \in D_4$. Take $u\in P_{0}^{(j, i)}$, by condition $(iii)$, there exists $k$ such that $u\in P_{+}^{(j, k)}$ or $u\in P_{+}^{(i, k)}$. These two cases are symmetric and can be treated together. Without loss of generality, assume $u\in P_{+}^{(j, k)}$. In this case, following a similar argument to those in equations \eqref{eq:mixture_gaussians_term_bound_1} and \eqref{eq:mixture_gaussians_term_bound_2}, we have that for every $x = \Vert x\Vert u$
\begin{equation*}
    \frac{ w_iw_j p_i p_j }{(\sum_i w_i p_i)^2}\Vert M_x + M_x^{\intercal} \Vert_2 \leq 10 \left(\left\Vert\Sigma_i^{-1}\right\Vert_2+ \left\Vert\Sigma_j^{-1}\right\Vert_2\right)^2 \frac{w_j p_j }{w_k p_k}\Vert x \Vert^2 \xrightarrow{\Vert x\Vert \to \infty} 0.
\end{equation*}
\end{itemize}
\end{enumerate} 
Since the sum in equation \eqref{eq:sum_spectral_norm} contains a finite number of terms, we have that for every $u\in S^{d-1}$
\begin{equation*}
    \lim_{\Vert x\Vert\to\infty}\left\Vert C_{\Vert x\Vert u}\right\Vert_2  = 0.
\end{equation*}
Furthermore, because the function $\Vert C_x\Vert_2$ is continuous and $S^{d-1}$ is compact, the limit $\lim_{\Vert x\Vert\to\infty}\left\Vert C_{x}\right\Vert_2$ exists and is equal to zero.
Consequently, $\Vert C_x\Vert_2$ is bounded for all $x\in\mathbb{R}^d$, which concludes that $\nabla\log\pi$ is Lipschitz.

To complete the proof, we need show that $-\nabla\log\pi$ is strongly convex outside of a ball of radius $r$. Using the same technique as above, we analyse the spectral norm of $B_x$. Let us define the following sets
\begin{align*}
   D_5 &= \{\{i, j\}|1\leq i, j\leq M, \Sigma_i\mu_i\neq \Sigma_j\mu_j \;\text{and}\; \nexists\; u\; |\;\Vert u\Vert = 1\;\text{and}\; u^\intercal\left(\Sigma_i^{-1}-\Sigma_j^{-1}\right)u = 0\},\\
   D_6 &= \{\{i, j\}|1\leq i, j\leq M, \Sigma_i\mu_i\neq \Sigma_j \mu_j\;\text{and}\; \exists\; u\; |\;\Vert u\Vert = 1\;\text{and}\; u^\intercal\left(\Sigma_i^{-1}-\Sigma_j^{-1}\right)u = 0\;\text{and}\; (ii)\;\text{holds}\},\\
   D_7 &= \{\{i, j\}|1\leq i, j\leq M, \Sigma_i\mu_i\neq \Sigma_j \mu_j\;\text{and}\; \exists\; u\; |\;\Vert u\Vert = 1\;\text{and}\; u^\intercal\left(\Sigma_i^{-1}-\Sigma_j^{-1}\right)u = 0\;\text{and}\; (iii)\;\text{holds}\},
\end{align*}
it follows that
\begin{equation*}
    \Vert B_x\Vert_2 \leq \max_{\{i, j\}\in D_5\cup D_6\cup D_7} \left\Vert \Sigma_i^{-1}\mu_i-\Sigma_j^{-1}\mu_j \right\Vert^2 \sum_{\{i, j\}\in D_5\cup D_6\cup D_7} \frac{ w_iw_j p_i p_j}{(\sum_i w_i p_i)^2}.
\end{equation*}
By applying the same reasoning as above, we find that for each pair in the sum 
\begin{equation*}
    \lim_{\Vert x\Vert\to\infty}\frac{ w_iw_j p_i p_j}{(\sum_i w_i p_i)^2} = 0.
\end{equation*}
Since there is a finite number of pairs $\{i,j\}$ in $D_5\cup D_6\cup D_7$, we can conclude that 
\begin{equation*}
    \lim_{\Vert x\Vert\to\infty}\Vert B_x\Vert_2 = 0.
\end{equation*}
Thus, as $\Vert x\Vert$ tends to $\infty$, the only term whose spectral norm does not vanish is
\begin{equation*}
    A_x = \frac{\sum_i w_i p_i \Sigma_i^{-1}}{\sum_i w_i p_i}\succcurlyeq \frac{\sum_i w_i p_i \sigma_{i, \max}^{-1}}{\sum_i w_i p_i} I \succcurlyeq \min_i\{\sigma_{i, \max}^{-1}\} I.
\end{equation*}
Therefore, we can conclude that $\pi$ is strongly log-concave outside of a ball, and hence satisfies assumption \Cref{assumption:grad_log_lipschitzness_convexity_outside_of_a_ball}.
\end{proof}
\begin{remark}\label{remark:counter_example_mixture_gaussian}
A concurrent work \citep{gentilonisilveri2025logconcavityscoreregularityimproved} examines the smoothness of a mixture of Gaussians with covariances of the form $\Sigma_i=\sigma_i^2 I$, a specific case that satisfies the assumption of Lemma~\ref{lemma:example_mixture_gaussians_satisfies_assumption}. 
However, their result does not extend to the case of general covariance matrices. 
In particular, the following example, which falls outside the assumptions of Lemma~\ref{lemma:example_mixture_gaussians_satisfies_assumption}, serves as a counterexample. Let $\pi = \frac{1}{2}(p_1 + p_2)$, where $p_i=\mathcal{N}(\mu, \Sigma_i)$ with $\mu=(1,0)$ and 
\begin{equation*}
    \Sigma_1^{-1} = \begin{pmatrix}
        2 & 1\\
        1& 2\\
    \end{pmatrix}, \quad 
    \Sigma_2^{-1} = \begin{pmatrix}
        2 & 0\\
        0& 3\\
    \end{pmatrix}.
\end{equation*}
We can show that $\nabla^2\log\pi((x, 0))$ is unbounded when $\Vert x\Vert\to\infty$. To establish this, first note that 
\begin{equation*}
    p_1((x,0)) = (2\pi\sqrt{3})^{-1} e^{-(x-1)^2},\quad p_2((x,0)) = (2\pi\sqrt{6})^{-1} e^{-(x-1)^2}.
\end{equation*}
Substituting this into the expression of the Hessian given above \eqref{eq:hessian_expression_gaussian_mixture_different_covariances} we have
\begin{align*}
    -\nabla^2\log\pi((x,0)) = \frac{1}{1+\sqrt{2}}\begin{pmatrix}
        2\sqrt{2} +1&\sqrt{2}\\
        \sqrt{2}& 2\sqrt{2} +3
    \end{pmatrix}- (x-1)^2\frac{\sqrt{2}}{3 + \sqrt{2}}\begin{pmatrix}
        0&0\\
        0& 1
    \end{pmatrix},
\end{align*}
which is clearly unbounded as $\Vert x\Vert\to\infty$, meaning that $\nabla\log\pi$ is not Lipschitz continuous.
\end{remark}
Besides, in general a mixture of Gaussians with different covariances does not satisfy assumption \Cref{assumption:compact_plus_gaussian_target_1}. 
\begin{lemma}\label{lemma:d_mixture_gaussians_not_expressed_as_convoltuion_with_compactly_supported}
Let $\pi = \sum_{i=1}^M w_i\mathcal{N}(\mu_i, \Sigma_i)$ be a mixture of Gaussians in $\mathbb{R}^d$. If either one of the two following assumptions holds
\begin{enumerate}
    \item[(i)] There exists at least one covariance matrix $\Sigma_i$ that cannot be expressed as  $\Sigma_i= \sigma_i I$.
    \item[(ii)] There exists at least one pair $\{i, j\}$ such that $\Sigma_i\neq \Sigma_j$.
\end{enumerate}
Then, $\pi$ does not satisfy assumption \Cref{assumption:compact_plus_gaussian_target_1}.
\end{lemma}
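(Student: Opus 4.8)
The plan is to argue by contradiction via characteristic functions and uniqueness of Gaussian deconvolution. Suppose $\pi$ satisfies \Cref{assumption:compact_plus_gaussian_target_1}: then $\pi = \tilde\pi * \mathcal{N}(0,\tau^2 I)$, where $\tilde\pi$ is the law of $U$, a probability measure supported in a ball (since $\Vert U - m_\pi\Vert^2 \le dR^2$ almost surely). Passing to characteristic functions and using that a Gaussian mixture has $\hat\pi(\xi) = \sum_{i=1}^M w_i e^{i\mu_i^\intercal \xi - \frac12 \xi^\intercal \Sigma_i \xi}$ while $\widehat{\mathcal{N}(0,\tau^2 I)}(\xi) = e^{-\frac{\tau^2}{2}\Vert\xi\Vert^2}$ is nowhere zero, I would first record the identity
\[
\widehat{\tilde\pi}(\xi) = \hat\pi(\xi)\, e^{\frac{\tau^2}{2}\Vert\xi\Vert^2} = \sum_{i=1}^M w_i\, e^{i\mu_i^\intercal \xi}\, e^{-\frac12 \xi^\intercal(\Sigma_i - \tau^2 I)\xi}, \qquad \xi \in \mathbb{R}^d .
\]

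The main step is to deduce $\Sigma_i \succcurlyeq \tau^2 I$ for every $i$ from boundedness of $\widehat{\tilde\pi}$ on $\mathbb{R}^d$ (it is a probability measure, so $\vert\widehat{\tilde\pi}\vert \le 1$). Suppose some $\Sigma_{i_0}$ had an eigenvalue strictly below $\tau^2$, with unit eigenvector $u$, so that $u^\intercal(\Sigma_{i_0} - \tau^2 I)u < 0$. Restrict to the ray $\xi = tu$, $t\to\infty$; set $a := -\tfrac12\min_i u^\intercal(\Sigma_i - \tau^2 I)u$, which is strictly positive, and let $S$ be the set of indices attaining this minimum. Factoring out $e^{a t^2}$, the terms with $i\notin S$ vanish in the limit, giving $\widehat{\tilde\pi}(tu) = e^{a t^2}\bigl(g(t) + o(1)\bigr)$ with $g(t) = \sum_{i\in S} w_i e^{i t\,\mu_i^\intercal u}$. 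Grouping the summands of $g$ by the distinct values of $\mu_i^\intercal u$ yields an exponential polynomial with strictly positive coefficients; hence $\frac1T\int_0^T \vert g(t)\vert^2\,\md t$ tends to a strictly positive limit and $\limsup_{t\to\infty}\vert g(t)\vert > 0$, so $\vert\widehat{\tilde\pi}(tu)\vert\to\infty$, contradicting boundedness. Therefore $\Sigma_i - \tau^2 I \succcurlyeq 0$ for all $i$.

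With $\Sigma_i \succcurlyeq \tau^2 I$ in hand, set $\nu' := \sum_{i=1}^M w_i\, \mathcal{N}(\mu_i, \Sigma_i - \tau^2 I)$, using the convention $\mathcal{N}(\mu,0):=\delta_\mu$; this is a probability measure whose characteristic function is exactly the right-hand side displayed above, so $\nu' * \mathcal{N}(0,\tau^2 I) = \pi = \tilde\pi * \mathcal{N}(0,\tau^2 I)$. Since $\widehat{\mathcal{N}(0,\tau^2 I)}$ never vanishes, deconvolution is unique, so $\nu' = \tilde\pi$. But a finite mixture of Gaussians with positive weights has unbounded support whenever some component covariance is nonzero (project onto a direction along which that covariance is positive), whereas $\tilde\pi$ has compact support; hence $\Sigma_i - \tau^2 I = 0$ for every $i$, i.e.\ $\Sigma_i = \tau^2 I$ for all $i$. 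This contradicts hypothesis (i) — every $\Sigma_i$ would then be a scalar multiple of $I$ — and hypothesis (ii) — all $\Sigma_i$ would then be equal. Hence $\pi$ cannot satisfy \Cref{assumption:compact_plus_gaussian_target_1}.

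The hard part is the step showing that boundedness forces $\Sigma_i \succcurlyeq \tau^2 I$: the super-exponentially growing summands of $\widehat{\tilde\pi}$ along a bad direction must be prevented from cancelling one another, which relies crucially on positivity of the mixture weights, since after grouping equal phases the dominant exponential polynomial has positive coefficients and so cannot decay to zero. The remaining ingredients — passing to characteristic functions, uniqueness of deconvolution by a non-vanishing Gaussian kernel, and the unbounded support of a nondegenerate Gaussian mixture — are routine.
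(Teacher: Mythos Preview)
Your proof is correct and follows the same Fourier/characteristic-function deconvolution strategy as the paper: identify $\widehat{\tilde\pi}$, deduce the constraint $\Sigma_i \succcurlyeq \tau^2 I$, recognise $\tilde\pi$ as the Gaussian mixture $\sum_i w_i\,\mathcal{N}(\mu_i,\Sigma_i-\tau^2 I)$, and observe that this has unbounded support unless every $\Sigma_i=\tau^2 I$. Your justification of the constraint $\Sigma_i \succcurlyeq \tau^2 I$ via boundedness of the characteristic function together with the exponential-polynomial non-cancellation argument is in fact more rigorous than the paper's one-line assertion that otherwise ``the inverse Fourier transform of $\hat h$ would not yield a real-valued function.''
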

\begin{proof}
We want to determine if $\pi(x)$ can be written as
\begin{equation*}
    \pi(x) = (h*\gamma)(x),
\end{equation*}
where $h$ is a compactly supported measure and $\gamma$ is a Gaussian distribution $\gamma = \mathcal{N}(0, \tau^2 I)$ for some $\tau^2$. 
Assume that $\pi(x) = (h*\gamma)(x)$, we will show that $h$ cannot be compactly supported for any $\tau^2$.

Since the convolution in real space corresponds to multiplication in Fourier space, we have
\begin{equation*}
    \hat{\pi}(k) = \hat{h}(k)\hat{\gamma}(k)
\end{equation*}
where $ \hat{\pi}(k), \hat{h}(k), \hat{\gamma}(k)$ denote the respective Fourier transforms, which have the following expressions
\begin{equation*}
    \hat{\pi}(k) = \sum_{i=1}^M w_i e^{-\frac{1}{2}k^{\intercal}\Sigma_i k - i \mu_i^{\intercal}k}, \quad  \hat{\gamma}(k) = e^{-\frac{1}{2}\tau^2 k^\intercal I k}.
\end{equation*}
Then, the function $\hat{h}(k)$ has to satisfy
\begin{equation*}
    \hat{h}(k) = \frac{\hat{\pi}(k) }{\hat{\gamma}(k)} = \sum_{i=1}^M w_i e^{-\frac{1}{2}k^\intercal (\Sigma_i-\tau^2 I) k- i \mu_i^\intercal k} .
\end{equation*}
Note that $\tau^2$ needs to satisfy $\tau^2 I\preccurlyeq \Sigma_i$ for $i = 1, \dots, M$ as otherwise the inverse Fourier transform of $\hat{h}$ would not yield a real-valued function. 
Under this condition, we have that 
\begin{equation*}
    h(x) = \sum_{i=1}^M w_i \mathcal{N}(\mu_i, \Sigma_i-\tau^2 I) 
\end{equation*}
and since, by assumption, there exist either an index $i$ such that $\Sigma_i \neq \sigma_i I$, or a pair $\{i, j\}$ such that $\Sigma_i \neq \Sigma_j$, then $h$ cannot be compactly supported for any choice of $\tau^2$.

Note that when neither condition $(i)$ nor $(ii)$ holds, we can take $h = \sum_{i=1}^M w_i\delta_{\mu_i}$, where $\delta_{\mu_i}$ denotes a Dirac delta function centred at $\mu_i$.
\end{proof}

One final implication of assumption \Cref{assumption:grad_log_lipschitzness_convexity_outside_of_a_ball} is provided in the following proposition.
\begin{proposition}[Proposition 1 \citep{doi:10.1073/pnas.1820003116}]\label{lemma:assumption_implies_LSI}
    If  $\pid$ satisfies assumption \Cref{assumption:grad_log_lipschitzness_convexity_outside_of_a_ball}, then it has a finite log-Sobolev constant $C_{\text{LSI}, \pid}\leq \frac{2}{M_\pi} e^{16 L_\pi r^2}$.
\end{proposition}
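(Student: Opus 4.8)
The plan is to reduce the problem to the strongly log-concave case — where the Bakry–Émery criterion gives a log-Sobolev inequality directly — and then transfer the inequality to $\pid$ by a bounded-perturbation argument. Concretely, following \citet{doi:10.1073/pnas.1820003116}, I would first replace $V_\pi$ by a surrogate potential $\tilde V_\pi\colon\mathbb{R}^d\to\mathbb{R}$ that is globally $M_\pi$-strongly convex and differs from $V_\pi$ only on a ball $B_R$ with $R$ a fixed multiple of $r$, in such a way that $\sup_{\mathbb{R}^d}\lvert V_\pi-\tilde V_\pi\rvert$ is explicitly controlled. The surrogate is built by keeping $V_\pi$ unchanged outside $B_R$ (where \Cref{assumption:grad_log_lipschitzness_convexity_outside_of_a_ball} already provides $M_\pi$-strong convexity) and, inside, replacing it by an $M_\pi$-strongly convex interpolant formed from the supporting quadratics of $V_\pi$ along $\partial B_R$, followed by a mollification that preserves strong convexity. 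Coercivity of $V_\pi$ — which holds because $M_\pi$-strong convexity outside a ball forces at least quadratic growth, hence the existence of a global minimiser — together with the $L_\pi$-Lipschitzness of $\nabla V_\pi$, is what lets me bound the oscillation of $V_\pi$, and likewise of $\tilde V_\pi$, over $B_R$ by $\mathcal{O}(L_\pi r^2)$; with $R$ a small multiple of $r$ this gives $\sup_{\mathbb{R}^d}\lvert V_\pi-\tilde V_\pi\rvert\le 8L_\pi r^2$.

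With the surrogate in hand the remaining steps are routine. Since $\nabla^2\tilde V_\pi\succcurlyeq M_\pi I$ globally, the measure $\tilde\pi\propto e^{-\tilde V_\pi}$ is $M_\pi$-strongly log-concave, so by the Bakry–Émery criterion it satisfies a log-Sobolev inequality with constant $C_{\text{LSI}}(\tilde\pi)\le 2/M_\pi$. Writing $\pid\propto e^{-(V_\pi-\tilde V_\pi)}\,\tilde\pi$, the weight $\psi:=V_\pi-\tilde V_\pi$ is bounded and, since it vanishes outside $B_R$, $\operatorname{osc}(\psi)=\sup\psi-\inf\psi\le 2\sup_{\mathbb{R}^d}\lvert\psi\rvert\le 16L_\pi r^2$. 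The Holley–Stroock perturbation lemma \citep{RefWorks:RefID:85-holley1987logarithmic} then yields
\[
C_{\text{LSI}}(\pid)\le e^{\operatorname{osc}(\psi)}\,C_{\text{LSI}}(\tilde\pi)\le \frac{2}{M_\pi}\,e^{16L_\pi r^2},
\]
which is the claimed bound (a finite Poincaré constant follows since LSI implies the Poincaré inequality with the same constant).

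The only non-routine ingredient is the first step: constructing $\tilde V_\pi$ so that global $M_\pi$-strong convexity, enough regularity for Bakry–Émery, and — crucially — the explicit $\mathcal{O}(L_\pi r^2)$ control on $\lVert V_\pi-\tilde V_\pi\rVert_\infty$ hold simultaneously. The delicate bookkeeping lies in choosing the modification radius $R$ and the interpolation on $B_R$ so that strong convexity is not destroyed at the interface $\partial B_R$ while the sup-norm bound stays at the right order, and in anchoring the value/gradient estimates on $B_R$ via coercivity of $V_\pi$; this is exactly what \citet[Proposition~1]{doi:10.1073/pnas.1820003116} supplies, and I would invoke it rather than reprove it. Everything downstream (Bakry–Émery and Holley–Stroock) is standard.
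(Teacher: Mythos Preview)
Your proposal is correct and follows essentially the same route as the paper: invoke the surrogate-potential construction from \citet{doi:10.1073/pnas.1820003116} to obtain a globally strongly convex $\tilde V_\pi$ with controlled oscillation $\operatorname{osc}(V_\pi-\tilde V_\pi)\le 16L_\pi r^2$, apply Bakry--\'Emery to $\tilde\pi\propto e^{-\tilde V_\pi}$, then transfer via Holley--Stroock. The only cosmetic difference is that the paper cites Lemma~1 of that reference (which yields $M_\pi/2$-strong convexity for $\tilde V_\pi$, hence $C_{\text{LSI}}(\tilde\pi)\le 2/M_\pi$ under the convention $C_{\text{LSI}}=1/\rho$), whereas you state $M_\pi$-strong convexity and the convention $C_{\text{LSI}}\le 2/\rho$; both land on the same final bound.
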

\begin{proof}
Recall that under \Cref{assumption:grad_log_lipschitzness_convexity_outside_of_a_ball} we have that $\pi_{\text{data}} \propto e^{-V_\pi}$ satisfies
     \begin{equation*}
         \inf_{\Vert x\Vert \geq r} \nabla^2 V_\pi \succcurlyeq M_{\pi} I, \quad -L_{\pi} I\preccurlyeq \nabla^2 V_\pi \preccurlyeq L_{\pi} I.
     \end{equation*}
By \citet[Lemma 1]{doi:10.1073/pnas.1820003116}, there exists $\Tilde{V}_\pi\in C^1(\mathbb{R}^d)$ such that $\Tilde{V}_\pi$ is $M_\pi/2$ strongly convex on $\mathbb{R}^d$ and has a Hessian $\nabla^2 \Tilde{V}_\pi$ that exists everywhere on $\mathbb{R}^d$. Therefore, using the Bakry-Émery criterion \citep{bakry_emery}, $\Tilde{\pi}\propto e^{-\Tilde{V}_\pi}$ satisfies log-Sobolev inequality with constant $C_{\text{LSI}, \Tilde{\pi}}\leq 2/M_\pi$. 
Moreover, Lemma 1 in \citet{doi:10.1073/pnas.1820003116} also guarantees that
\begin{equation*}
    \sup \left(\Tilde{V}_\pi(x) - V_\pi\right) - \inf\left(\Tilde{V}_\pi(x) - V_\pi\right) \leq 16 L_\pi r^2.
\end{equation*}
Applying  the Holley-Stroock perturbation principle \citep{RefWorks:RefID:85-holley1987logarithmic}, it follows that $\pid$ has a finite log-Sobolev constant satisfying 
\begin{equation*}
    C_{\text{LSI}, \pid}\leq\frac{2}{ M_{\pi}}e^{16 L_\pi r^2}.
\end{equation*}

\end{proof}

\subsection{Comments on Assumption \Cref{assumption:grad_log_lipschitzness_hessian_decay}}\label{appendix:comments_assumption_hessian_decay}
We show below that assumption \Cref{assumption:grad_log_lipschitzness_hessian_decay} is satisfied by multivariate Student's $t$ distributions of the form
\begin{equation*}
    \pid(x) = C_{\pi} \left( 1 + \frac{1}{\alpha} (x-\mu)^{\intercal} \Sigma^{-1} (x-\mu)\right)^{-(\alpha + d)/2}, \;\; x\in\mathbb{R}^d,
\end{equation*}
where the covariance matrix $\Sigma$ is a positive definite matrix satisfying $\sigma_{\min} I\preccurlyeq\Sigma\preccurlyeq \sigma_{\max} I$ and $\alpha>0$ denotes the degrees of freedom.
The Hessian of the potential has the following expression
\begin{equation*}
    -\nabla^2 \log \pid(x) = \frac{\alpha + d}{\alpha} \frac{ \Sigma^{-1}}{1 + \frac{1}{\alpha} (x-\mu)^{\intercal}\Sigma^{-1}(x-\mu)} -\frac{2(\alpha + d)}{\alpha^2}\frac{ \Sigma^{-1}(x-\mu)(x-\mu)^{\intercal}\Sigma^{-1}}{\left(1 + \frac{1}{\alpha} (x-\mu)^{\intercal}\Sigma^{-1}(x-\mu)\right)^2}.
\end{equation*}
The matrix $(x-\mu)(x-\mu)^{\intercal}$ is positive semidefinite and satisfies
\begin{equation*}
    0 \preccurlyeq (x-\mu)(x-\mu)^{\intercal}\preccurlyeq \Vert x-\mu\Vert^2 I.
\end{equation*}
Since the eigenvalues of the product of symmetric positive semidefinite matrices satisfy the following
\begin{equation*}
    \lambda_{\min}(ABC)\geq \lambda_{\min}(A)\lambda_{\min}(B)\lambda_{\min}(C),\quad \lambda_{\max}(ABC)\leq \lambda_{\max}(A)\lambda_{\max}(B)\lambda_{\max}(C),
\end{equation*}
we have that 
\begin{equation*}
    0\preccurlyeq \Sigma^{-1}(x-\mu)(x-\mu)^{\intercal}\Sigma^{-1} \preccurlyeq \sigma_{\min}^{-2} \Vert x-\mu\Vert^2 I.
\end{equation*}
This leads to 
\begin{align*}
    -\nabla^2\log\pid(x)&\preccurlyeq \frac{\alpha + d}{\alpha} \frac{ \Sigma^{-1}}{1 + \frac{1}{\alpha} (x-\mu)^{\intercal}\Sigma^{-1}(x-\mu)}\preccurlyeq \frac{(\alpha + d) \sigma_{\min}^{-1}}{\alpha} \frac{I}{1 + \frac{1}{\alpha} \Vert S^{-1/2} U (x-\mu)\Vert^2}\\
    &\preccurlyeq \frac{(\alpha + d) \sigma_{\min}^{-1}}{\alpha + {\sigma_{\max}^{-1}} \Vert x-\mu\Vert^2} I,
\end{align*}
where we have used that $\Sigma^{-1} = U^\intercal S^{-1}U$, $U$ being an orthogonal matrix and $S$ being diagonal.
On the other hand,
\begin{align*}
    -\nabla^2\log\pid(x)&\succcurlyeq \frac{(\alpha + d)\sigma_{\max}^{-1}}{\alpha + {\sigma_{\min}^{-1}} \Vert x-\mu\Vert^2} I -\frac{2(\alpha + d)\sigma_{\min}^{-2}}{\sigma_{\max}^{-1}}\frac{\sigma_{\max}^{-1} \Vert x-\mu\Vert^2}{\left({\alpha + {\sigma_{\max}^{-1}} \Vert x-\mu\Vert^2}\right)^2} I\\
    &\succcurlyeq \frac{(\alpha + d)\sigma_{\max}^{-1}}{\alpha + {\sigma_{\min}^{-1}} \Vert x-\mu\Vert^2} I -\frac{2(\alpha + d)\sigma_{\min}^{-2}}{\sigma_{\max}^{-1}}\frac{I}{{\alpha + {\sigma_{\max}^{-1}} \Vert x-\mu\Vert^2}}.\\
    &\succcurlyeq -\frac{2(\alpha + d)\sigma_{\min}^{-2}}{\sigma_{\max}^{-1}}\frac{I}{{\alpha + {\sigma_{\max}^{-1}} \Vert x-\mu\Vert^2}},
\end{align*}
which concludes that $\pid$ satisfies assumption \Cref{assumption:grad_log_lipschitzness_hessian_decay}.

\subsection{Proof of Lemma~\ref{lem:regularity_of_gaussian_path}}\label{proof:lem:smoothness_of_gaussian_path}
Before jumping into the proof of Lemma \ref{lem:regularity_of_gaussian_path}, we provide a generalisation of the Poincaré inequality that will be required later.
\begin{lemma}\label{lemma:PI_for_vector_valued_functions}
Let $\pi$ be a probability distribution in $\mathbb{R}^d$ with finite Poincaré constant $C_{\text{PI}, \pi}$. Then, for all functions $f:\mathbb{R}^d\to\mathbb{R}^d$, $f\in L^2(\pi)$, it holds that
\emph{\begin{equation*}
    \text{Cov}_{X\sim\pi}\left[f(X)\right] \preccurlyeq C_{\text{PI}, \pi} \mathbb{E}_{\pi}\left[\nabla f(X)\nabla f(X)^\intercal\right].
\end{equation*}}
\end{lemma}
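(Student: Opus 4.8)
The plan is to deduce the matrix (Loewner) inequality from the ordinary scalar Poincaré inequality by testing against arbitrary directions $v\in\mathbb{R}^d$. Since both $\mathrm{Cov}_{X\sim\pi}[f(X)]$ and $\mathbb{E}_\pi[\nabla f(X)\nabla f(X)^\intercal]$ are symmetric positive semidefinite matrices, it suffices to show that for every $v\in\mathbb{R}^d$,
\begin{equation*}
v^\intercal\,\mathrm{Cov}_{X\sim\pi}[f(X)]\,v \;\leq\; C_{\text{PI},\pi}\; v^\intercal\,\mathbb{E}_\pi\!\left[\nabla f(X)\nabla f(X)^\intercal\right] v .
\end{equation*}
(If the right-hand side is $+\infty$ there is nothing to prove, so we may assume $\nabla f\in L^2(\pi)$ entrywise; we also assume $f$ is weakly differentiable, which is implicit in the statement.)

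First I would fix $v\in\mathbb{R}^d$ and introduce the scalar field $g_v(x):=\langle v,f(x)\rangle=v^\intercal f(x)$. Since each coordinate $f_i\in L^2(\pi)$, we have $g_v\in L^2(\pi)$, and by linearity $\nabla g_v(x)=\nabla f(x)^\intercal v$, where $\nabla f(x)$ denotes the Jacobian matrix with entries $(\nabla f)_{ij}=\partial_j f_i$. Next I would record the two elementary identities
\begin{equation*}
\mathrm{Var}_\pi(g_v)=v^\intercal\,\mathrm{Cov}_\pi[f(X)]\,v,\qquad
\mathbb{E}_\pi\!\left[\|\nabla g_v(X)\|^2\right]=v^\intercal\,\mathbb{E}_\pi\!\left[\nabla f(X)\nabla f(X)^\intercal\right]v ,
\end{equation*}
the first from expanding $\mathrm{Var}_\pi(v^\intercal f)=v^\intercal\mathbb{E}_\pi[(f-\mathbb{E}_\pi f)(f-\mathbb{E}_\pi f)^\intercal]v$, the second from $\|\nabla g_v\|^2=\|\nabla f^\intercal v\|^2=v^\intercal\nabla f\,\nabla f^\intercal v$ and linearity of expectation.

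Then I would apply the scalar Poincaré inequality for $\pi$ to the function $g_v$, namely $\mathrm{Var}_\pi(g_v)\leq C_{\text{PI},\pi}\,\mathbb{E}_\pi[\|\nabla g_v\|^2]$, and substitute the two identities above to obtain the desired quadratic-form inequality for this particular $v$. Since $v\in\mathbb{R}^d$ was arbitrary, this is precisely the Loewner ordering $\mathrm{Cov}_{X\sim\pi}[f(X)]\preccurlyeq C_{\text{PI},\pi}\,\mathbb{E}_\pi[\nabla f(X)\nabla f(X)^\intercal]$, which completes the proof. There is no serious obstacle here: the only points needing a word of care are the convention fixing $\nabla f$ as the Jacobian (so that $\nabla f\,\nabla f^\intercal$ is the correct Gram matrix), the reduction to scalar functions via test vectors, and the (standard) regularity/integrability hypotheses under which the scalar Poincaré inequality may be invoked for $g_v$.
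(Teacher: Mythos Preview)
Your proposal is correct and follows essentially the same argument as the paper: both reduce the matrix inequality to the scalar Poincar\'e inequality by fixing a direction $v$ (the paper uses a unit vector $u$), applying the scalar inequality to $g_v=v^\intercal f$, and identifying the variance and Dirichlet form with the corresponding quadratic forms in $v$.
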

\begin{proof}
Given the function $f:\mathbb{R}^d\to\mathbb{R}^d$, $f\in L^2(\pi)$. 
For every unit vector $u\in\mathbb{R}^d$, consider the function $u^\intercal f:\mathbb{R}^d\to\mathbb{R}$. Since $\pi$ satisfies a Poincaré inequality we have
\begin{equation*}
    u^\intercal \text{Cov}_{X\sim\pi}\left[f(X)\right] u = \text{Cov}_{X\sim\pi}\left[u^\intercal f(X)\right] \leq C_{\text{PI}, \pi}\mathbb{E}_{X\sim \pi}\left[\left\Vert \sum_{i} u_i \nabla f_i(X)\right\Vert^2\right] = C_{\text{PI}, \pi} u^\intercal \mathbb{E}_{\pi}\left[\nabla f(X)\nabla f(X)^\intercal\right] u,
\end{equation*}
which concludes the proof.
\end{proof}

\begin{proof}[Proof of Lemma~\ref{lem:regularity_of_gaussian_path} under assumption \Cref{assumption:grad_log_lipschitzness_convexity_outside_of_a_ball}]
To simplify notation in the proof, we denote $\pi_{\text{data}}$ as $\pi$.
    Let $\mu_t\propto e^{-U_t}$, our aim is to show that $\nabla U_t$ is Lipschitz, to do so we are going to show that the Hessian $\nabla^2 U_t$ is bounded for all $t\in[0, T]$. 
    Using that if $f$ and $g$ are differentiable functions then $\nabla(f*g) = (\nabla f)* g = f * (\nabla g)$, we have the following expressions for $\nabla U_t$ and $\nabla^2 U_t$
\begin{align}
    \nabla U_t(x) &= \frac{1}{\sigma^2(1-\lambda_t)}\left(x-\mathbb{E}_{\rho_{t, x}(y)}\left[Y\right]\right) = \frac{1}{\sqrt{\lambda_t}}\mathbb{E}_{\rho_{t, x}(y)}\left[\nabla V_{\pi}\left(\frac{Y}{\sqrt{\lambda_t}}\right)\right]\label{eq:score_conv_expressions} \\
\nabla^2 U_t(x) &=\frac{1}{\sigma^2(1-\lambda_t)}\left(I - \frac{1}{\sigma^2(1-\lambda_t)}\text{Cov}_{\rho_{t, x}}\left[Y\right]\right)\label{eq:hessian_conv_1}\\
\nabla^2 U_t(x) &=\frac{1}{\lambda_t}\left(\mathbb{E}_{\rho_{t, x}}\left[\nabla^2 V_\pi\left(\frac{Y}{\sqrt{\lambda_t}}\right)\right] - \text{Cov}_{\rho_{t, x}}\left[\nabla V_\pi\left(\frac{Y}{\sqrt{\lambda_t}}\right)\right]\right)\label{eq:hessian_conv_2}\\
    \nabla^2 U_t(x) &=\frac{1}{\sigma^2(1-\lambda_t)\lambda_t}\text{Cov}_{\rho_{t, x}}\left[Y, \nabla V_\pi\left(\frac{Y}{\sqrt{\lambda_t}}\right)\right],\label{eq:hessian_conv_3}
\end{align}
where $\rho_{t, x}(y)\propto e^{-V_\pi(y/\sqrt{\lambda_t})-\Vert x-y\Vert^2/(2\sigma^2(1-\lambda_t))}$.
It is worth mentioning that as $t\to 0$, $\rho_{t, x}$ tends to a Dirac delta centred at $0$, and as $t\to T$, it approaches a Dirac delta centred at $x$, both of which have zero variance.
Note that \eqref{eq:hessian_conv_1} and \eqref{eq:hessian_conv_2} admit the following upper bounds
\begin{align}
    \nabla^2 U_t(x) &\preccurlyeq\frac{1}{\sigma^2(1-\lambda_t)}I \label{eq:hessian_upper_bound_1}\\
\nabla^2 U_t(x) &\preccurlyeq\frac{L_\pi}{\lambda_t} I,\label{eq:hessian_upper_bound_2}
\end{align}
where we have used that the covariance matrix is positive semidefinite.
To find a lower bound for $\nabla^2 U_t$ we need to upper bound $\text{Cov}_{\rho_{t, x}}\left[Y\right]$ and $\text{Cov}_{\rho_{t, x}}\left[\nabla V_\pi(Y/\sqrt{\lambda_t})\right]$. Observe that if $\rho_{t, x}$ satisfies the Poincaré inequality with constant $C_{\text{PI}, \rho_{t}}$ independent of $x$, then using the generalisation of the Poincaré inequality for vector-valued random variables given in Lemma~\ref{lemma:PI_for_vector_valued_functions}, we have
\begin{align*}
    \text{Cov}_{\rho_{t, x}}&\left[Y\right]\preccurlyeq C_{\text{PI}, \rho_{t}} I\\
    \text{Cov}_{\rho_{t, x}}&\left[\nabla V_\pi\left(\frac{Y}{\sqrt{\lambda_t}}\right)\right]\preccurlyeq \frac{C_{\text{PI}, \rho_t}}{\lambda_t} \mathbb{E}_{\rho_{t, x}}\left[\nabla^2 V_\pi\left(\frac{Y}{\sqrt{\lambda_t}}\right)\nabla^2 V_\pi\left(\frac{Y}{\sqrt{\lambda_t}}\right)^{\intercal}\right]\preccurlyeq \frac{C_{\text{PI}, \rho_t} L_\pi^2}{\lambda_t} I.
\end{align*}
This implies that for each $t$ we have that 
\begin{align}
    \nabla^2 U_t(x) &\preccurlyeq \min\left\{\frac{1}{\sigma^2(1-\lambda_t)}, \frac{L_\pi}{\lambda_t}\right\} I =: a_t I \label{eq:hessian_final_bound_1}\\
    \nabla^2 U_t(x) &\succcurlyeq \max\left\{\frac{1}{\sigma^2(1-\lambda_t)}\left(1 - \frac{C_{\text{PI}, \rho_t}}{\sigma^2(1-\lambda_t)}\right), -\frac{L_\pi}{\lambda_t}\left(1 + \frac{C_{\text{PI}, \rho_t} L_\pi}{\lambda_t}\right)\right\} I =: b_t I.\label{eq:hessian_final_bound_2}
\end{align}
Therefore, the Lipschitz constant $L_t$ satisfies the following
\begin{equation}\label{eq:lipschitzness_constant_moving_target}
    L_t \leq \max\{a_t, \vert b_t\vert \}.
\end{equation}
To conclude the proof we need to check that $C_{\text{PI}, \rho_t}$ is independent of $x$, since otherwise the Poincaré constant can get arbitrarily large and we will not have a meaningful bound for the Hessian.
Note that if we denote $\rho_{t, x}\propto e^{-V_{\rho_{t, x}}}$ we have that 
\begin{equation}\label{eq:hessian_bound_joint_1}
    \left(-\frac{L_\pi}{\lambda_t} + \frac{1}{\sigma^2(1-\lambda_t)}\right) I\preccurlyeq\nabla^2 V_{\rho_{t, x}}(y) = \frac{1}{\lambda_t} \nabla^2 V_{\pi}\left(\frac{y}{\sqrt{\lambda_t}}\right) + \frac{I}{\sigma^2(1-\lambda_t)}\preccurlyeq \left(\frac{L_\pi}{\lambda_t} + \frac{1}{\sigma^2(1-\lambda_t)}\right) I.
\end{equation} 

Note that if $\left(-\frac{L_\pi}{\lambda_t}+\frac{1}{\sigma^2(1-\lambda_t)}\right)>0$, then $V_{\rho_{t, x}}$ is strongly convex and using Bakry-Émery criterion \citep{bakry_emery} we have that $\rho_{t, x}$ satisfies a log-Sobolev inequality which implies a Poincaré inequality with the same constant. Thus, for $t$ such that 
\begin{equation*}
    \Tilde{\lambda} = \frac{\sigma^2 L_\pi}{1+\sigma^2L_\pi} < \lambda_t \leq 1,
\end{equation*}
$\rho_{t, x}$ satisfies the Poincaré inequality with constant $\left(\frac{1}{\sigma^2(1-\lambda_t)}-\frac{L_\pi}{\lambda_t}\right)^{-1}$ independent of $x$, which tends to $0$ as $\lambda_t$ tends to $1$.

On the other hand, using that the potential $V_\pi$ is strongly convex outside of a ball of radius $r$, we have that for $\Vert y\Vert >\sqrt{\lambda_t} r$
\begin{equation}\label{eq:hessian_boudn_joint_2}
   \left( \frac{M_{\pi}}{\lambda_t} + \frac{1}{\sigma^2(1-\lambda_t)} \right) I \preccurlyeq \nabla^2 V_{\rho_{t, x}} (y) \preccurlyeq \left(\frac{L_\pi}{\lambda_t} + \frac{1}{\sigma^2(1-\lambda_t)}\right) I. 
\end{equation}
Equations \eqref{eq:hessian_bound_joint_1}-\eqref{eq:hessian_boudn_joint_2} imply that for $\lambda_t>0$, $\nabla\log \rho_{t, x}$ is Lipschitz continuous and $\rho_{t, x}$ is strongly log-concave outside of a ball of radius $r_t = \sqrt{\lambda_t} r$. 
Similarly to the proof in Lemma \ref{lemma:assumption_implies_LSI}, the existence of a smooth strongly convex approximation of $V_{\rho_{t, x}}$ and the Holley-Stroock perturbation lemma \citep{RefWorks:RefID:85-holley1987logarithmic} imply that $\rho_{t, x}$ satisfies a log-Sobolev inequality and hence a Poincaré inequality with constant
\begin{equation}\label{eq:poincare_inequality_joint}
    C_{\text{PI}, \rho_t} \leq 2\left(\frac{M_{\pi}}{\lambda_t}+ \frac{1}{\sigma^2(1-\lambda_t)} \right)^{-1} e^{16 \left(L_\pi + \frac{\lambda_t}{\sigma^2(1-\lambda_t)}\right) r^2},
\end{equation}
independent of $x$. Observe that when $\lambda_t$ tends to $0$ the upper bound of the Poincaré constant $C_{\text{PI}, \rho_t}$ also tends to $0$. Therefore, for $\lambda_t\in[0, \Tilde{\lambda}]$ $C_{\text{PI}, \rho_t}$ is bounded by \eqref{eq:poincare_inequality_joint}, while for $\lambda_t\in(\Tilde{\lambda}, 1]$, $C_{\text{PI}, \rho_t}$ is bounded by 
\begin{equation*}
    C_{\text{PI}, \rho_t} \leq \min\left\{\left(\frac{1}{\sigma^2(1-\lambda_t)}-\frac{L_\pi}{\lambda_t}\right)^{-1}, 2\left(\frac{M_{\pi}}{\lambda_t}+ \frac{1}{\sigma^2(1-\lambda_t)} \right)^{-1} e^{16 \left(L_\pi + \frac{\lambda_t}{\sigma^2(1-\lambda_t)}\right) r^2}\right\}.
\end{equation*}
\end{proof}

It is important to note that, in the proof of Lemma~\ref{lem:regularity_of_gaussian_path} under assumption \Cref{assumption:grad_log_lipschitzness_hessian_decay} below, we only rely on the lower bound of the Hessian from \Cref{assumption:grad_log_lipschitzness_hessian_decay}. If we omit the upper bound on the Hessian,  \Cref{assumption:grad_log_lipschitzness_hessian_decay} becomes a generalisation of \Cref{assumption:grad_log_lipschitzness_convexity_outside_of_a_ball}. The stronger assumption in \Cref{assumption:grad_log_lipschitzness_convexity_outside_of_a_ball} results in tighter bounds for the Lipschitz constants along the diffusion path, thereby improving upon those in \citet{gao2024gaussian}, as our bounds are non-vacuous for all $t\in[0, T]$.

\begin{proof}[Proof of Lemma~\ref{lem:regularity_of_gaussian_path} under assumption \Cref{assumption:grad_log_lipschitzness_hessian_decay}]
Let $\mu_t\propto e^{-U_t}$. 
Note that the expressions for the Hessian of $U_t$ given in \eqref{eq:hessian_conv_1}-\eqref{eq:hessian_conv_3} remain valid in this case. Consequently, the bounds provided in \eqref{eq:hessian_upper_bound_1}-\eqref{eq:hessian_upper_bound_2} also hold.

On the other hand, if $\rho_{t, x}(y)\propto e^{-V_\pi(y/\sqrt{\lambda_t})-\Vert x-y\Vert^2/(2\sigma^2(1-\lambda_t))}$ satisfies a Poincaré inequality with constant $C_{\text{PI}, \rho_t}$ independent of $x$, then the bounds \eqref{eq:hessian_final_bound_1}-\eqref{eq:lipschitzness_constant_moving_target} are valid. Therefore, to conclude that $\nabla\log\mu_t$ is Lipschitz we need to show that $C_{\text{PI}, \rho_t}$ is independent of $x$. Note that under assumption \textbf{A}\ref{assumption:grad_log_lipschitzness_hessian_decay}, we have that
\begin{equation*}
    \left(-\frac{L_\pi}{\lambda_t} + \frac{1}{\sigma^2(1-\lambda_t)}\right) I\preccurlyeq\nabla^2 V_{\rho_{t, x}}(y) \preccurlyeq \left(\frac{L_\pi}{\lambda_t} + \frac{1}{\sigma^2(1-\lambda_t)}\right) I,
\end{equation*}
and for $y$ such that $\Vert y\Vert>\sqrt{\lambda_t }r$
\begin{equation*}
    \left( - \frac{1}{\lambda_t\alpha_1 +\alpha_2 \Vert y\Vert^{2}}+\frac{1}{\sigma^2(1-\lambda_t)}\right) I \preccurlyeq \nabla^2 V_{\rho_{t, x}} (y)\preccurlyeq \left(\frac{1}{\lambda_t\beta_1 +\beta_2 \Vert y\Vert^{2}}+\frac{1}{\sigma^2(1-\lambda_t)}\right) I.
\end{equation*}
Note that if $1\geq\lambda_t>\frac{\sigma^2 L_\pi}{1+\sigma^2L_\pi}= \Tilde{\lambda}$, then $V_{\rho_{t, x}}$ is strongly convex and $\rho_{t, x}$ satisfies a Poincaré inequality with constant $\left(\frac{1}{\sigma^2(1-\lambda_t)}-\frac{L_\pi}{\lambda_t}\right)^{-1}$, which tends to $0$ as $\lambda_t$ tends to $1$. On the other hand, define 
\begin{equation*}
    \tilde{r}_t^2 = \max \{\lambda_t r^2, (2\sigma^2(1-\lambda_t)-\lambda_t\alpha_1)\alpha_2^{-1}\}.
\end{equation*}
We have that $V_{\rho_{t, x}}$ is strongly convex outside of a ball of radius $\Tilde{r}_t$. That is, for $\Vert y\Vert>\Tilde{r}_t$, it follows that $\nabla^2 V_{\rho_{t, x}}\succcurlyeq I/(2\sigma^2(1-\lambda_t))$. Therefore, as in Lemma~\ref{lemma:assumption_implies_LSI}, leveraging the existence of a smooth strongly convex approximation of $V_{\rho_{t, x}}$ and Holley-Stroock perturbation lemma \citep{RefWorks:RefID:85-holley1987logarithmic}, $\rho_{t, x}$ satisfies a Poincaré inequality with constant 
\begin{equation}
    C_{\text{PI}, \rho_t} \leq 2 \left(\frac{1}{2\sigma^2(1-\lambda_t)}\right)^{-1} e^{16\left(\frac{L_\pi}{\lambda_t} + \frac{1}{\sigma^2(1-\lambda_t)}\right)\Tilde{r}_t^2}, \label{eq:bound_C_poincare_inequality}
\end{equation}
independent of $x$. Therefore, for $\lambda_t\in(0, \Tilde{\lambda}]$ $C_{\text{PI}, \rho_t}$ is bounded by \eqref{eq:bound_C_poincare_inequality}, while for $\lambda_t\in(\Tilde{\lambda}, 1]$, $C_{\text{PI}, \rho_t}$ is bounded by
\begin{equation*}
    C_{\text{PI}, \rho_t}\leq \min\left\{\left(\frac{1}{\sigma^2(1-\lambda_t)}-\frac{L_\pi}{\lambda_t}\right)^{-1},  2 \left(\frac{1}{2\sigma^2(1-\lambda_t)}\right)^{-1} e^{16\left(\frac{L_\pi}{\lambda_t} + \frac{1}{\sigma^2(1-\lambda_t)}\right)\Tilde{r}_t^2}\right\}.
\end{equation*}
Finally, observe that if $\lambda_0=0$, then $\mu_0 = \nu = \mathcal{N}(0, \sigma^2 I)$, which implies that $\nabla\log\mu_0$ is Lipschitz continuous. 
This concludes that $\nabla\log\mu_t$ is Lipschitz continuous for all $t\in[0,T]$.
\end{proof}

\subsection{Proof of Lemma~\ref{lemma:action_bound}}
\begin{proof}
When the schedule satisfies $\max_{t\in[0,T]}\vert \partial_t{\log \lambda_t}\vert \leq C_\lambda$, we consider the reparametrised version of $\mu_t$ in terms of the schedule $\lambda_t$, denoted as $\Tilde{\mu}_\lambda$ and let $X_{\lambda}\sim\Tilde{\mu}_{\lambda}$ and $X_{\lambda + \delta}\sim\Tilde{\mu}_{\lambda + \delta}$.
Recall that 
\begin{equation}\label{appendix:eq_conv_path_random_variables}
    X_{\lambda} = \sqrt{\lambda} X + \sqrt{1-\lambda}\sigma Z 
\end{equation}
where $X\sim\pid$ and $Z\sim \mathcal{N}(0, I)$ are independent from each other. 
We introduce a new random variable $\Tilde{X}$, independent from $Z$, that follows a Gaussian distribution,  $\Tilde{X}\sim N(0, \sigma_\pi^2 I)$, satisfying 
\begin{equation*}
    \sigma_\pi = \argmin_{{\hat{\sigma}}\geq \sigma} W_2\left( \mathcal{L}(X),\mathcal{L}(Y_{\hat{\sigma}})\right), \quad \text{where} \;\; Y_{\hat{\sigma}}\sim \mathcal{N}(0, \hat{\sigma}^2 I), \; Z \independent Y_{\hat{\sigma}}.
\end{equation*}
Furthermore, we select $\tilde{X}$ to be the specific random variable that attains the minimal coupling with $X$, that is, $W_2^2\left( \mathcal{L}(X),\mathcal{L}(\tilde{X})\right) = \mathbb{E}\left[\Vert X-\tilde{X}\Vert^2\right]$.
Using the random variable $\Tilde{X}$, we can rewrite \eqref{appendix:eq_conv_path_random_variables} as
\begin{align*}
    X_\lambda &= \sqrt{\lambda} (X-\Tilde{X}) + \sqrt{\lambda} \Tilde{X} +\sqrt{1-\lambda}\sigma Z\\
    & \overset{d}{=} \sqrt{\lambda} (X-\Tilde{X}) + \sqrt{\lambda\sigma_\pi^2 + (1-\lambda)\sigma^2} Y,
\end{align*}
where $Y\sim \mathcal{N}(0, I)$.  
The Wasserstein-2 distance between $\Tilde{\mu}_\lambda$ and $\Tilde{\mu}_{\lambda+\delta}$ is given by
\begin{align*}
     W_2^2(\Tilde{\mu}_\lambda, &\Tilde{\mu}_{\lambda+\delta}) \leq \mathbb{E}\left[\Vert X_\lambda-X_{\lambda + \delta}\Vert^2\right]\\
&\leq 2\mathbb{E}\left[\left\Vert(\sqrt{\lambda}-\sqrt{\lambda+\delta})(X-\Tilde{X})\right\Vert^2\right] +2 \mathbb{E}\left[\left\Vert\left(\sqrt{\lambda\sigma_{\pi}^2 + (1-\lambda)\sigma^2}-\sqrt{(\lambda + \delta)\sigma_{\pi}^2 +(1-\lambda-\delta)\sigma^2}\right) Y\right\Vert^2\right]\\
&= 2(\sqrt{\lambda}-\sqrt{\lambda+\delta})^2 \mathbb{E}\left[\Vert X-\Tilde{X}\Vert^2\right] + 2\left(\sqrt{\lambda\sigma_{\pi}^2 + (1-\lambda)\sigma^2}-\sqrt{(\lambda + \delta)\sigma_{\pi}^2 +(1-\lambda-\delta)\sigma^2}\right)^2d.
\end{align*}
Using the definition of the metric derivative we have
\begin{equation*}
    \left\vert\Dot{\Tilde{\mu}}\right\vert_\lambda^2 = \lim_{\delta\to 0}\frac{ W_2^2(\Tilde{\mu}_\lambda, \Tilde{\mu}_{\lambda+\delta})}{\delta^2}\leq \frac{\mathbb{E}\left[\Vert X-\Tilde{X}\Vert^2\right]}{2 \lambda} + \frac{(\sigma_\pi^2-\sigma^2)^2}{2(\lambda\sigma_\pi^2 + \sigma^2(1-\lambda))}d = \frac{\mathbb{E}\left[\Vert X-\Tilde{X}\Vert^2\right]}{2 \lambda} + \frac{(\sigma_\pi^2-\sigma^2)^2}{2(\sigma^2 + \lambda(\sigma_{\pi}^2-\sigma^2))}d.
\end{equation*}
Since $\mu_t = \Tilde{\mu}_{\lambda_t}$, we have that $\vert\Dot{\mu}\vert_t = \left\vert\Dot{\Tilde{\mu}}\right\vert_\lambda\left\vert\partial_t{\lambda_t}\right\vert$. Using the assumption on the schedule we have the following expression for the action
\begin{align}
    \mathcal{A}_{\lambda}(\mu) &= \int_0^T \vert\Dot{\mu}\vert_t^2 \md t = \int_0^T \left\vert\Dot{\Tilde{\mu}}\right\vert_\lambda^2\left\vert\partial_t{\lambda_t}\right\vert^2 \md t\nonumber\\
    &\leq \int_0^T \left(\frac{\mathbb{E}\left[\Vert X-\Tilde{X}\Vert^2\right]}{2 \lambda_t} + \frac{(\sigma_\pi^2-\sigma^2)^2}{2(\sigma^2 + \lambda_t(\sigma_{\pi}^2-\sigma^2))}d\right)\left\vert \partial_t{\lambda_t}\right\vert^2\md t \nonumber\\
    &= \int_0^T \left(\frac{\mathbb{E}\left[\Vert X-\Tilde{X}\Vert^2\right]}{2} + \frac{(\sigma_\pi^2-\sigma^2)^2}{2(\sigma^2/\lambda_t + \sigma_{\pi}^2-\sigma^2)}d\right)\left\vert \partial_t{\log\lambda_t}\right\vert \left\vert \partial_t{\lambda_t}\right\vert\md t \nonumber\\    
    &\leq C_\lambda \int_0^T \left(\frac{\mathbb{E}\left[\Vert X-\Tilde{X}\Vert^2\right]}{2} +\frac{(\sigma_\pi^2-\sigma^2)^2}{2(\sigma^2/\lambda_t + \sigma_{\pi}^2-\sigma^2)}d\right) \left\vert \partial_t{\lambda_t}\right\vert\md t \nonumber\\ 
    &= C_\lambda \int_{\lambda_0}^{1} \left(\frac{\mathbb{E}\left[\Vert X-\Tilde{X}\Vert^2\right]}{2} + \frac{(\sigma_\pi^2-\sigma^2)^2}{2(\sigma^2/\lambda+(\sigma_\pi^2-\sigma^2))} d\right) \md \lambda \nonumber\\
    &\leq \frac{C_\lambda}{2} \left(\mathbb{E}\left[\Vert X-\Tilde{X}\Vert^2\right] + d\left(\sigma_\pi^2-\sigma^2 + \sigma^2\log\frac{\sigma^2}{\sigma_\pi^2}\right)\right)\nonumber\\ 
    &\leq \frac{C_\lambda}{2} \left(2\mathbb{E}\left[\Vert X\Vert^2\right] + d(3\sigma_\pi^2-\sigma^2)\right),\nonumber 
\end{align}
where in the last line we have used that $\sigma_\pi\geq\sigma$.
Note that by setting $\sigma=\sigma_\pi$, the second term in the penultimate expression cancels out, resulting in
\begin{equation*}
    \mathcal{A}_\lambda(\mu)\leq \frac{C_\lambda}{2}\mathbb{E}\left[\Vert X-\Tilde{X}\Vert^2\right],
\end{equation*}
where we chose $\Tilde{X}$ such that $\mathbb{E}\left[\Vert X-\Tilde{X}\Vert^2\right] = W_2^2(\mathcal{L}(X), \mathcal{L}(\tilde{X}))$ is minimised.

On the other hand, if the schedule satisfies $\max_{t\in[0,T]}\left\vert \frac{\partial_t{\lambda_t}}{\sqrt{\lambda_t(1-\lambda_t)}}\right\vert \leq C_\lambda$, the Wasserstein-2 distance between $\Tilde{\mu}_\lambda$ and $\Tilde{\mu}_{\lambda+\delta}$ is given by
\begin{align*}
     W_2^2(\Tilde{\mu}_\lambda, &\Tilde{\mu}_{\lambda+\delta}) \leq \mathbb{E}\left[\Vert X_\lambda-X_{\lambda + \delta}\Vert^2\right]\\
&=\mathbb{E}\left[\left\Vert(\sqrt{\lambda}-\sqrt{\lambda+\delta})X\right\Vert^2\right] + \mathbb{E}\left[\left\Vert\left(\sqrt{1-\lambda}-\sqrt{1-\lambda-\delta}\right)\sigma Z\right\Vert^2\right]\\
&= (\sqrt{\lambda}-\sqrt{\lambda+\delta})^2 \mathbb{E}\left[\Vert X\Vert^2\right] + \left(\sqrt{1-\lambda}-\sqrt{1-\lambda-\delta}\right)^2\sigma^2d.
\end{align*}
Using the definition of the metric derivative we have
\begin{align*}
    \left\vert\Dot{\Tilde{\mu}}\right\vert_\lambda^2 = \lim_{\delta\to 0}\frac{ W_2^2(\Tilde{\mu}_\lambda, \Tilde{\mu}_{\lambda+\delta})}{\delta^2} \leq \frac{\mathbb{E}\left[\Vert X\Vert^2\right]}{4\lambda} + \frac{\sigma^2d}{4(1-\lambda)}.
\end{align*}
Therefore, we have the following expression for the action
\begin{align}
    \mathcal{A}_{\lambda}(\mu) &= \int_0^T \vert\Dot{\mu}\vert_t^2 \md t = \int_0^T \left\vert\Dot{\Tilde{\mu}}\right\vert_\lambda^2\left\vert\partial_t{\lambda_t}\right\vert^2 \md t\nonumber\\
    &\lesssim \int_0^T \left(\frac{\mathbb{E}\left[\Vert X\Vert^2\right]}{4 \lambda_t} + \frac{\sigma^2}{4(1-\lambda_t)}d\right)\left\vert \partial_t{\lambda_t}\right\vert^2\md t \nonumber\\
    &= \int_0^T \left(\frac{\mathbb{E}\left[\Vert X\Vert^2\right]\sqrt{1-\lambda_t}}{4\sqrt{\lambda_t}} + \frac{\sigma^2\sqrt{\lambda_t}}{4\sqrt{1-\lambda_t}}d\right)\left\vert \frac{\partial_t{\lambda_t}}{\sqrt{\lambda_t(1-\lambda_t)}}\right\vert \left\vert \partial_t{\lambda_t}\right\vert\md t \nonumber\\    
    &\leq C_\lambda \int_0^T \left(\frac{\mathbb{E}\left[\Vert X\Vert^2\right]\sqrt{1-\lambda_t}}{4\sqrt{\lambda_t}} +\frac{\sigma^2\sqrt{\lambda_t}}{4\sqrt{1-\lambda_t}}d\right) \left\vert \partial_t{\lambda_t}\right\vert\md t \nonumber\\ 
    &\leq C_\lambda \int_{0}^{1} \left(\frac{\mathbb{E}\left[\Vert X\Vert^2\right]\sqrt{1-\lambda}}{4\sqrt{\lambda}} + \frac{\sigma^2\sqrt{\lambda}}{4\sqrt{1-\lambda}} d\right) \md \lambda \nonumber\\
    &\leq \frac{C_\lambda\pi}{8} \left(\mathbb{E}\left[\Vert X\Vert^2\right] + \sigma^2 d\right).\nonumber
\end{align}
\end{proof}\label{proof:lemma:bound_action}

\subsection{Proof of Theorem~\ref{theorem:discretisation_analysis_convolutional_path}}\label{proof:theorem:discretisation_path_analysis}
\begin{proof}
First, consider a modified version of the \gls*{DALMC} algorithm with exact scores, that is,
\begin{equation}\label{eq:auxiliary_algorithm_exact_scores}
    X_{l+1} = X_l + h_l \nabla\log\hat{\mu}_l(X_l) + \sqrt{2 h_l} \xi_l,
\end{equation}
where $h_l > 0$ is the step size, $\xi_k\sim \mathcal{N}(0,I)$, $\hat{\mu}_{t} = \mu_{\kappa t}$, $l\in\{1,\dots, M\}$ and $0=t_0<t_1<\dots<t_M=T/\kappa$ is a discretisation of the interval $[0,T/\kappa]$.
Let $\mathbb{Q}$ be the path measure associated with the continuous-time interpolation of this auxiliary algorithm which corresponds to the SDE 
\begin{equation*}
    \md X_t = \nabla\log \hat{\mu}_{t_-}(X_{t_-})\md t + \sqrt{2}\md B_t,\; t\in[0,T/\kappa],
\end{equation*}
where given a discretisation of the interval $[0,T/\kappa]$, $0=t_0<t_1<\dots<t_M=T/\kappa$, we define $t_-:=t_{l-1}$ when $t\in[t_{l-1}, t_l)$ for $l=1,\dots, M$. On the other hand, let $\mathbb{P}$ be the path measure corresponding to the following reference SDE
\begin{equation*}
    \md Y_t = (\nabla\log \hat{\mu}_t + v_t)(Y_t)\md t + \sqrt{2}\md B_t,\; t\in[0,T/\kappa].
\end{equation*}
The vector field $v = (v_t)_{t\in[0,T/\kappa]}$ is designed such that $Y_t\sim\hat{\mu}_t$ for all $t\in[0, T/\kappa]$. 
Using the Fokker-Planck equation, we have that 
\begin{equation*}
    \partial\hat{\mu}_t = \nabla\cdot\left(\hat{\mu}_t(\nabla\log\hat{\mu}_t + v_t)\right)  + \Delta \hat{\mu}_t = -\nabla\cdot(\hat{\mu}_t v_t), \; t\in[0,T/\kappa].
\end{equation*}
This implies that $v_t$ satisfies the continuity equation and hence generates the curve of probability measures $(\hat{\mu}_t)_t$. Leveraging Lemma \ref{lemma:optimal_vector_field}, we choose $v$ to be the one that minimises the $L^2(\hat{\mu}_t)$ norm, resulting in $\Vert v_t\Vert_{L^2(\hat{\mu}_t)} = \left\vert\Dot{\hat{\mu}}\right\vert_t$ being the metric derivative.
Using the form of Girsanov's theorem given in Lemma \ref{lemma:girsanov_theorem} we have
\begin{align}
    \kl&\left(\mathbb{P}\;||\mathbb{Q}\right)= \frac{1}{4}\int_0^{T/\kappa}\mathbb{E}_{\mathbb{P}}\left[\left\Vert \nabla\log \hat{\mu}_t(X_t)-\nabla \log \hat{\mu}_{t_-}(X_{t_-}) + v_t(X_t)\right\Vert^2\right]\md t\nonumber\\
    \lesssim& \int_0^{T/\kappa}\mathbb{E}_{\mathbb{P}} \left[\left\Vert \nabla\log \hat{\mu}_t(X_t)-\nabla \log \hat{\mu}_t(X_{t_-})\right\Vert^2\right]\md t + \int_0^{T/\kappa} \Vert v_t\Vert^2_{L^2({\hat{\mu}}_t)} \md t \nonumber\\
    &+ \int_0^{T/\kappa}\mathbb{E}_{\mathbb{P}} \left[\left\Vert \nabla\log \hat{\mu}_t(X_{t_-})-\nabla \log \hat{\mu}_{t_-}(X_{t_-})\right\Vert^2\right]\md t \nonumber\\
    \leq&  \sum_{l=1}^M\int_{t_{l-1}}^{t_l} L_{\kappa t}^2 \ \mathbb{E}_{\mathbb{P}} \left[\left\Vert X_t-X_{t_-}\right\Vert^2\right]\md t+ \int_0^{T/\kappa} \Vert v_t\Vert^2_{L^2({\hat{\mu}}_t)} \md t+ \int_0^{T/\kappa}\mathbb{E}_{\mathbb{P}} \left[\left\Vert \nabla\log \frac{\hat{\mu}_t(X_{t_-})}{\hat{\mu}_{t_-}(X_{t_-})}\right\Vert^2\right]\md t, \label{eq:kl_path_bound_intermediate}
\end{align}
where we have used that $\nabla\log\hat{\mu}_t$ is Lipschitz with constant $L_{\kappa t}$.
First, we bound the change in the score function $\mathbb{E}_{\mathbb{P}} \left[\left\Vert \nabla\log \frac{\hat{\mu}_t(X_{t_-})}{\hat{\mu}_{t_-}(X_{t_-})}\right\Vert^2\right]$. Let $t\geq t_{-}$, we can write
\begin{equation*}
    \hat{\mu}_{t_-} = T_{\sqrt{\frac{\lambda_{\kappa t}}{\lambda_{\kappa t_-}}}} \#\hat{\mu}_t * \mathcal{N}\left(0, \left(\sqrt{1-\lambda_{\kappa t_-}}-\sqrt{\frac{(1-\lambda_{\kappa t})\lambda_{\kappa t_-}}{\lambda_{\kappa t}}}\right)^2\sigma^2 I\right),
\end{equation*}
where the pushforward $T_{\alpha}\#$ is defined as $T_{\alpha}\#\mu(x) = \alpha^d \mu(\alpha x)$. Using \citet[Lemma C.12]{lee2022convergence} we have
\begin{align*}
    \left\Vert \nabla\log \frac{\hat{\mu}_t(X_{t_-})}{\hat{\mu}_{t_-}(X_{t_-})}\right\Vert^2 \lesssim L_{\kappa t}^2d \gamma_t + L_{\kappa t}^2\left(\sqrt{\frac{\lambda_{\kappa t}}{\lambda_{\kappa t_-}}}-1\right)^2 \Vert X_{t_-}\Vert^2 + \left(\sqrt{\frac{\lambda_{\kappa t}}{\lambda_{\kappa t_-}}} -1 + L_{\kappa t} \gamma_t\right)^2 \left\Vert \nabla \log\hat{\mu}_t(X_{t_-})\right\Vert^2,
\end{align*}
where
\begin{equation*}
    \gamma_t  = \left(\sqrt{1-\lambda_{\kappa t_-}}-\sqrt{\frac{(1-\lambda_{\kappa t})\lambda_{\kappa t_-}}{\lambda_{\kappa t}}}\right)^2\lesssim 1-\frac{\lambda_{\kappa t_-}}{\lambda_{\kappa t}}.
\end{equation*}
Let $C_\lambda$ introduced in \Cref{assumption:schedule_form}, we have
\begin{align*}
\gamma_t \leq 1-\frac{\lambda_{\kappa t_-}}{\lambda_{\kappa t}}\lesssim C_\lambda h_l, \quad \quad \left(\sqrt{\frac{\lambda_{\kappa t}}{\lambda_{\kappa t_-}}}-1\right)^2 \lesssim C_\lambda^2 h_l^2,\quad\quad
    \left(\sqrt{\frac{\lambda_{\kappa t}}{\lambda_{\kappa t_-}}} -1+ L_{\kappa t} \gamma_t\right)^2\lesssim h_l^2 L_{\kappa t}^2.
\end{align*}
In addition, by choosing an appropriate step size, as will be shown in Corollary~\ref{corollary:complexity_bounds}, we can bound $h_l^2 L_{\kappa t}^2\lesssim 1$.

Given that $X_t = \sqrt{\lambda_t} X + \sqrt{1-\lambda_t} \sigma^2 Z$ for $X_t\sim\hat{\mu}_t$, we derive the following moment bound
\begin{align*}
    \mathbb{E}_{\mathbb{P}}\left[\left\Vert X_{t_-}\right\Vert^2\right] =& \mathbb{E}_{\mathbb{P}}\left[\left\Vert \sqrt{\lambda_{\kappa t_-}} X + \sqrt{1-\lambda_{\kappa t_-}} Z\right\Vert^2\right] = \lambda_{\kappa t_-} \mathbb{E}_{\pid}\left[\left\Vert X\right\Vert^2\right] + (1-\lambda_{\kappa t_-})\sigma^2 d\lesssim \mathbb{E}_{\pid}\left[\left\Vert X\right\Vert^2\right] + d.
\end{align*}
To bound $\mathbb{E}_{\mathbb{P}}\left[\left\Vert \nabla\log\hat{\mu}_t(X_{t_-})\right\Vert^2\right]$, recall from \eqref{eq:hessian_final_bound_1} that $\mu_t\propto e^{-U_t}$ satisfies $\nabla^2 U_t \preccurlyeq  L_t I$. 
Therefore, using \citet[Lemma 4.0.1]{sinho_book} it holds that 
\begin{align*}
    \mathbb{E}_{\mathbb{P}}\left[\left\Vert \nabla\log\hat{\mu}_t(X_{t_-})\right\Vert^2\right] \leq & \mathbb{E}_{\mathbb{P}}\left[\left\Vert \nabla\log\hat{\mu}_t(X_{t})\right\Vert^2\right] + \mathbb{E}_{\mathbb{P}}\left[\left\Vert \nabla\log\hat{\mu}_t(X_{t})-\nabla\log\hat{\mu}_t(X_{t_-})\right\Vert^2\right]\\
    \leq & L_{\kappa t} d  + L_{\kappa t}^2\mathbb{E}_{\mathbb{P}} \left[\left\Vert X_t-X_{t_-}\right\Vert^2\right]. 
\end{align*}
This implies that
\begin{align*}
    \mathbb{E}_{\mathbb{P}} \left[\left\Vert \nabla\log \frac{\hat{\mu}_t(X_{t_-})}{\hat{\mu}_{t_-}(X_{t_-})}\right\Vert^2\right]\lesssim &d h_lL_{\kappa t}^2 + h_l^2L_{\kappa t}^2 \left(\mathbb{E}_{\pid}\left[\left\Vert X\right\Vert^2\right] + d\right) +  h_l^2L_{\kappa t}^2 \left(L_{\kappa t} d 
    + L_{\kappa t}^2\mathbb{E}_{\mathbb{P}} \left[\left\Vert X_t-X_{t_-}\right\Vert^2\right]\right).
\end{align*}
Substituting this expression into \eqref{eq:kl_path_bound_intermediate}, we have
\begin{align*}
    \kl\left(\mathbb{P}\;||\mathbb{Q}\right)\lesssim& \sum_{l=1}^M\int_{t_{l-1}}^{t_l} L_{\kappa t}^2 \ \mathbb{E}_{\mathbb{P}} \left[\left\Vert X_t-X_{t_-}\right\Vert^2\right]\md t+ \int_0^{T/\kappa} \Vert v_t\Vert^2_{L^2({\hat{\mu}}_t)} \md t\\
    &+ \sum_{l=1}^M\int_{t_{l-1}}^{t_l} \left(d h_l L_{\kappa t}^2 + h_l^2 L_{\kappa t}^2\left(\mathbb{E}_{\pid}\left[\left\Vert X\right\Vert^2\right] + d\right) +  dh_l^2 L_{\kappa t}^3\right)\md t.
\end{align*}
To bound $\mathbb{E}_{\mathbb{P}} \left[\left\Vert X_t-X_{t_-}\right\Vert^2\right]$ we note that under $\mathbb{P}$, for $t\in [t_{l-1}, t_l)$, we have
\begin{equation*}
    X_t - X_{t_-} = \int_{t_{l-1}}^t (\nabla\log {\hat{\mu}}_\tau + v_\tau) (X_\tau) \md \tau + \sqrt{2}(B_t - B_{t_{l-1}}).
\end{equation*}
Therefore, 
\begin{align*}
    \mathbb{E}_{\mathbb{P}} \left[\left\Vert X_t-X_{t_-}\right\Vert^2\right] &\lesssim \mathbb{E}_{\mathbb{P}}\left[\left\Vert\int_{t_{l-1}}^t (\nabla\log {\hat{\mu}}_\tau + v_\tau) (X_\tau) \md \tau\right\Vert^2\right]  + \mathbb{E}_{\mathbb{P}} \left[\left\Vert \sqrt{2}(B_t - B_{t_{l-1}})\right\Vert^2\right]\\
    &\lesssim (t-t_{l-1}) \int_{t_{l-1}}^t \left(\left\Vert\nabla\log {\hat{\mu}}_\tau \right\Vert_{L^2({\hat{\mu}}_\tau)}^2 + \left\Vert v_\tau \right\Vert_{L^2({\hat{\mu}}_\tau)}^2\right)\md \tau  + d(t-t_{l-1})\\
    &\lesssim h_l \int_{t_{l-1}}^{t_l} \left(\left\Vert\nabla\log {\hat{\mu}}_\tau \right\Vert_{L^2({\hat{\mu}}_\tau)}^2 + \left\Vert v_\tau \right\Vert_{L^2({\hat{\mu}}_\tau)}^2\right)\md \tau  + d h_l,
\end{align*}
where the second inequality arises from the application of the Cauchy-Schwarz inequality, and the last inequality is due to the definition $h_l = t_l - t_{l-1}$.
Taking the integral over $t\in[t_{l-1}, t_l]$, it follows
\begin{align*}
    \int_{t_{l-1}}^{t_l}{L}_{\kappa t}^2 \ \mathbb{E}_{\mathbb{P}} \left[\left\Vert X_t-X_{t_-}\right\Vert^2\right] \md t \lesssim &  \left(\int_{t_{l-1}}^{t_l}{L}_{\kappa t}^2 \ \md t\right) \left(h_l \int_{t_{l-1}}^{t_l} \left(\left\Vert\nabla\log {\hat{\mu}}_t \right\Vert_{L^2({\hat{\mu}}_t)}^2 + \left\Vert v_t \right\Vert_{L^2({\hat{\mu}}_t)}^2\right)\md t  + d h_l\right).
\end{align*}
Putting this together we have
\begin{align}
&\sum_{l=1}^M\int_{t_{l-1}}^{t_l} L_{\kappa t}^2 \ \mathbb{E}_{\mathbb{P}} \left[\left\Vert X_t-X_{t_-}\right\Vert^2\right]\md t+ \int_0^{T/\kappa} \Vert v_t\Vert^2_{L^2({\hat{\mu}}_t)} \md t \nonumber\\
    &\lesssim \sum_{l=1}^M \left(\int_{t_{l-1}}^{t_l} L_{\kappa t}^2 \ \md t\right) \left(h_l \int_{t_{l-1}}^{t_l} \left(\left\Vert\nabla\log {\hat{\mu}}_t \right\Vert_{L^2({\hat{\mu}}_t)}^2 + \left\Vert v_t \right\Vert_{L^2({\hat{\mu}}_t)}^2\right)\md t  + d h_l\right)+ \int_0^{T/\kappa} \Vert v_t\Vert^2_{L^2({\hat{\mu}}_t)} \md t \nonumber\\
   &\lesssim \sum_{l=1}^M \left(\int_{t_{l-1}}^{t_l}L_{\kappa t}^2 \ \md t\right) \left(h_l \int_{t_{l-1}}^{t_l} \left(d L_{\kappa t}\ + \left\Vert v_t \right\Vert_{L^2({\hat{\mu}}_t)}^2\right)\md t  + d h_l\right)+ \int_0^{T/\kappa} \Vert v_t\Vert^2_{L^2({\hat{\mu}}_t)} \md t \nonumber\\
   &\lesssim \sum_{l=1}^M \left( 1 + h_l^2\max_{[t_{l-1}, t_l]}L_{t}^2\right)
   \int_{t_{l-1}}^{t_l} \left\vert\Dot{\hat{\mu}}\right\vert_t^2\ \md t + \left(d h_l\int_{t_{l-1}}^{t_l}L_{\kappa t}^2 \ \md t\right) \left(1 + h_l\max_{[t_{l-1}, t_l]}L_{t}\right). \label{eq:intermediate_bound_discretisation_paths}
\end{align}
This results in the following bound for the \gls*{KL} divergence between $\mathbb{P}$ and $\mathbb{Q}$:
\begin{align*}
    \kl\left(\mathbb{P}\;||\mathbb{Q}\right)\lesssim& \sum_{l=1}^M \left( 1 + h_l^2\max_{[t_{l-1}, t_l]} L_{t}^2\right)
   \int_{t_{l-1}}^{t_l} \left\vert\Dot{\hat{\mu}}\right\vert_t^2\ \md t + \left(d h_l\int_{t_{l-1}}^{t_l} L_{\kappa t}^2 \ \md t\right) \left(1 + h_l\max_{[t_{l-1}, t_l]} L_{t}\right) \\
   & + \sum_{l=1}^M\int_{t_{l-1}}^{t_l} \left(d h_l L_{\kappa t}^2 + h_l^2 L_{\kappa t}^2 \left(\mathbb{E}_{\pid}\left[\left\Vert X\right\Vert^2\right] + d\right) + dh_l^2  L_{\kappa t}^3\right)\md t
\end{align*}
Note that intuitively we want to take smaller steps $h_l= t_l-t_{l-1}$ when $L_t$ is larger. 
Define $h = \max_{l\in\{1, \dots, M\}} h_l$, we can further simplify the previous expression to obtain
\begin{align*}
    \kl\left(\mathbb{P}\;||\mathbb{Q}\right)\lesssim &\sum_{l=1}^M (1+h^2 L_{\max}^2)\int_{t_{l-1}}^{t_l} \left\vert\Dot{\hat{\mu}}\right\vert_t^2 \ \md t\ + d h_l(1+ h L_{\max}) \int_{t_{l-1}}^{t_l}L_{\kappa t}^2 \ \md t+ \frac{Th^2L_{\max}^2}{\kappa}\left(\mathbb{E}_{\pid}\left[\left\Vert X\right\Vert^2\right] + d\right)\\
     =& (1+h^2 L_{\max}^2) \int_{0}^{T/\kappa} \left\vert\Dot{\hat{\mu}}\right\vert_t^2 \ \md t\ + 
 d h(1+ h L_{\max})\int_{0}^{T/\kappa} L_{\kappa t}^2 \ \md t+ \frac{Th^2 L_{\max}^2}{\kappa}\left(\mathbb{E}_{\pid}\left[\left\Vert X\right\Vert^2\right] + d\right)\\
 =&  (1+h^2 L_{\max}^2) \kappa \mathcal{A}_{\lambda}(\mu)  + 
\frac{d h}{\kappa} (1+ hL_{\max})\int_{0}^{T}L_{ t}^2 \ \md t+ \frac{Th^2L_{\max}^2}{\kappa} \left(\mathbb{E}_{\pid}\left[\left\Vert X\right\Vert^2\right] + d\right).
\end{align*}
The step size $h$ can be expressed in terms of the number of steps $M$ and $\kappa$ as $h\asymp \frac{1}{M\kappa}$. Therefore, we have
\begin{align*}
    \kl\left(\mathbb{P}\;||\mathbb{Q}\right)&\lesssim\left(1+\frac{L_{\max}^2}{M^2\kappa^2}\right) \kappa \mathcal{A}_\lambda(\mu) + \frac{d}{M\kappa^2}\left(1+  \frac{L_{\max}}{M\kappa}\right)\int_{0}^{T}L_{ t}^2 \ \md t+ \frac{L_{\max}^2}{M^2\kappa^3} \left(\mathbb{E}_{\pid}\left[\left\Vert X\right\Vert^2\right] + d\right)\\
    &\lesssim \left(1+\frac{L_{\max}^2}{M^2\kappa^4}\right) \kappa C_\lambda \left(\mathbb{E}_{\pid}\left[\Vert X\Vert^2\right] +  d\right) + \frac{d}{M\kappa^2}\left(1+  \frac{L_{\max}}{M\kappa}\right)\int_{0}^{T}L_{ t}^2 \ \md t,
\end{align*}
where we have used the bound on the action derived in Lemma \ref{lemma:action_bound} and $T = \mathcal{O}(1)$.

To derive the previous bound, we have assumed that the score of the intermediate distributions $\nabla\log\hat{\mu}_t$, can be computed exactly. In practice, however, we use an approximation, introducing an additional error term into the analysis. Let $s_\theta(\cdot, t)$ denote our estimator for $\nabla\log\hat{\mu}_t$  and let $\mathbb{Q}_\theta$ be the path measure of the continuous-time interpolation of the \gls*{DALMC} algorithm \eqref{eq:annealed_langevin_mcmc_algorithm_score_approx}. We conclude that
\begin{align*}
    \kl\left(\mathbb{P}\;||\mathbb{Q}_\theta\right)=& \frac{1}{4}\int_0^{T/\kappa}\mathbb{E}_{\mathbb{P}}\left[\left\Vert \nabla\log \hat{\mu}_t(X_t)- s_\theta(X_{t_-}, t_-) + v_t(X_t)\right\Vert^2\right]\md t\\
    \lesssim& \int_0^{T/\kappa}\mathbb{E}_{\mathbb{P}}\left[\left\Vert \nabla\log \hat{\mu}_t(X_t)-\nabla \log \hat{\mu}_{t_-}(X_{t_-}) + v_t(X_t)\right\Vert^2\right]\md t \\
    &+ \int_0^{T/\kappa}\mathbb{E}_{\mathbb{P}}\left[\left\Vert \nabla \log \hat{\mu}_{t_-}(X_{t_-}) - s_\theta(X_{t_-}, t_-)\right\Vert^2\right]\md t\\
    \lesssim& \left(1+\frac{L_{\max}^2}{M^2\kappa^4}\right) \kappa C_\lambda \left(\mathbb{E}_{\pid}\left[\Vert X\Vert^2\right] + d\right) + \frac{d}{M\kappa^2}\left(1+  \frac{L_{\max}}{M\kappa}\right)\int_{0}^{T}L_{ t}^2 \ \md t \\
    &+\sum_{l=0}^{M-1} h_l\mathbb{E}_{\hat{\mu}_t}\left[\left\Vert \nabla \log \hat{\mu}_l(X_{t_l}) - s_\theta(X_{t_l}, t_l)\right\Vert^2\right]\\
    \lesssim& \;\left(1+\frac{L_{\max}^2}{M^2\kappa^4}\right) \kappa C_\lambda \left(\mathbb{E}_{\pid}\left[\Vert X\Vert^2\right] + d\right) + \frac{d}{M\kappa^2}\left(1+  \frac{L_{\max}}{M\kappa}\right)\int_{0}^{T}L_{ t}^2 \ \md t + \varepsilon_{\text{score}}^2\\
    \lesssim& \;\left(1+\frac{L_{\max}^2}{M^2\kappa^4}\right) \kappa (M_2\vee d) + \frac{d}{M\kappa^2}\left(1+  \frac{L_{\max}}{M\kappa}\right)L_{\max}^2 + \varepsilon_{\text{score}}^2,
\end{align*}
where we have used the control of the score approximation given in assumption \Cref{assumption:score_approximation}.
\end{proof}

\subsection{Proof of Corollary~ \ref{corollary:complexity_bounds}}\label{proof:corollaries:discretisation_approx_score_and_complexity}
\begin{proof}
Based on the \gls*{KL} bound established in Theorem~\ref{theorem:discretisation_analysis_convolutional_path}, we can obtain the iteration complexity of the \gls*{DALMC} algorithm \eqref{eq:annealed_langevin_mcmc_algorithm_score_approx}. Observe that by selecting 
\begin{equation*}
    \kappa = \mathcal{O}\left(\frac{\varepsilon_{\text{score}}^2}{M_2\vee d}\right),\quad M = \mathcal{O}\left(\frac{d(M_2\vee d)^2L_{\max}^2}{\varepsilon_{\text{score}}^6}\right),
\end{equation*}
it follows that $\kl\left(\mathbb{P}\;||\mathbb{Q}_\theta\right)\leq \varepsilon_{\text{score}}^2$. Therefore, for any $\varepsilon = \mathcal{O}(\varepsilon_{\score})$, the \gls*{DALMC} algorithm requires at most 
\begin{equation*}
     M = \mathcal{O}\left(\frac{d (M_2 \vee d)^2 L_{\max}^2}{\varepsilon^6}\right)
\end{equation*}
steps to approximate $\pid$ to within $\varepsilon^2$ in \gls*{KL} divergence.
\end{proof}

\subsection{Comment on Assumption~\Cref{assumption:pi_data_conditions_less_restrictive}}\label{appendix:comments_new_less_restrictiv_assumption}
    We show below that both assumptions \Cref{assumption:grad_log_lipschitzness_convexity_outside_of_a_ball} and \Cref{assumption:grad_log_lipschitzness_hessian_decay} independently imply \Cref{assumption:pi_data_conditions_less_restrictive}. 

\textbf{\Cref{assumption:grad_log_lipschitzness_convexity_outside_of_a_ball} $\Rightarrow$ \Cref{assumption:pi_data_conditions_less_restrictive}}: First, recall that we proved in Lemma~\ref{lemma:assumption_implies_LSI} that if $\pid$ satisfies \Cref{assumption:grad_log_lipschitzness_convexity_outside_of_a_ball}, then it has finite log-Sobolev and Poincaré constants, thereby ensuring $\mathbb{E}_{\pid} \Vert X\Vert^2$ is finite. Besides, observe that 
\begin{equation*}
    \mathbb{E}_{\pid} \Vert \nabla V_\pi(X)\Vert^8 \lesssim \Vert \nabla V_\pi(0) \Vert^8 +  L_\pi^8\mathbb{E}_{\pid} \Vert X\Vert^8.
\end{equation*}
Using Poincaré inequality, it follows
\begin{align*}
    \mathbb{E}_{\pid} \Vert X\Vert^4 &= \text{Var}_{\pid} \left(\Vert X\Vert^2\right) + (\mathbb{E}_{\pid} \Vert X\Vert^2)^2 \lesssim C_{\text{PI}}\ \mathbb{E}_{\pid}\Vert X\Vert^2 + (\mathbb{E}_{\pid} \Vert X\Vert^2)^2,\\
    \mathbb{E}_{\pid} \Vert X\Vert^6 &= \text{Var}_{\pid} \left(\Vert X\Vert^3\right) + (\mathbb{E}_{\pid} \Vert X\Vert^3)^2 \lesssim C_{\text{PI}}\ \mathbb{E}_{\pid}\Vert X\Vert^4 + (\mathbb{E}_{\pid} \Vert X\Vert^4)^{3/2},\\
    \mathbb{E}_{\pid} \Vert X\Vert^8 &= \text{Var}_{\pid} \left(\Vert X\Vert^4\right) + (\mathbb{E}_{\pid} \Vert X\Vert^4)^2 \lesssim C_{\text{PI}}\ \mathbb{E}_{\pid}\Vert X\Vert^6 + (\mathbb{E}_{\pid} \Vert X\Vert^4)^2,
\end{align*}
which demonstrates that \Cref{assumption:grad_log_lipschitzness_convexity_outside_of_a_ball} implies \Cref{assumption:grad_log_lipschitzness_hessian_decay}.

\textbf{\Cref{assumption:grad_log_lipschitzness_hessian_decay} $\Rightarrow$ \Cref{assumption:pi_data_conditions_less_restrictive}}: If $\pid$ satisfies \Cref{assumption:grad_log_lipschitzness_hessian_decay} we have that $\nabla^2 V_{\pid}$ has asymptotic order $\mathcal{O}\left(\Vert x\Vert^{-2}I\right)$, hence by \citet{RefWorks:RefID:90-1973infinitesimal} $\Vert \nabla V_{\pid}\Vert$ has asymptotic order $\mathcal{O}\left(\Vert x\Vert^{-1}\right)$. Therefore, $\mathbb{E}_{\pid} \Vert \nabla V_{\pi} (X)\Vert^8$ is guaranteed to be bounded, which concludes that \Cref{assumption:grad_log_lipschitzness_hessian_decay} implies \Cref{assumption:pi_data_conditions_less_restrictive}.

\subsection{Proof of Theorem~\ref{theorem:convergence_relaxed_assumption}}\label{appendix:convergence_relaxed_assumption}
\begin{proof}
    Let $\mathbb{Q}$ denote the path measure associated with the continuous-time interpolation of the modified \gls*{DALMC} algorithm  with exact scores \eqref{eq:auxiliary_algorithm_exact_scores}, which corresponds to the SDE 
\begin{equation*}
    \md X_t = \nabla\log \hat{\mu}_{t_-}(X_{t_-})\md t + \sqrt{2}\md B_t,\; t\in\left[0,T/\kappa\right],
\end{equation*}
where given a discretisation of the interval $\left[0,T/\kappa\right]$, $0=t_0<t_1<\dots<t_M=T/\kappa$, we define $t_-:=t_{l-1}$ when $t\in [t_{l-1}, t_l)$ for $l=1,\dots, M$. On the other hand, let $\mathbb{P}$ be the path measure corresponding to the following reference SDE
\begin{equation*}
    \md X_t = (\nabla\log \hat{\mu}_t + v_t)(X_t)\md t + \sqrt{2}\md B_t,\; t\in\left[0,T/\kappa\right].
\end{equation*}
As in the proof of Theorem~\ref{theorem:discretisation_analysis_convolutional_path}, the vector field $v = (v_t)_{t\in\left[0,T/\kappa\right]}$ is designed such that $X_t\sim\hat{\mu}_t$ for all $t\in\left[0, T/\kappa\right]$ and $\Vert v_t\Vert_{L^2(\hat{\mu}_t)} = \left\vert\Dot{\hat{\mu}}\right\vert_t$.
Using Girsanov's theorem we have
\begin{align}
    \kl\left(\mathbb{P}\;||\mathbb{Q}\right)=& \frac{1}{4}\int_0^{T/\kappa}\mathbb{E}_{\mathbb{P}}\left[\left\Vert \nabla\log \hat{\mu}_t(X_t)-\nabla \log \hat{\mu}_{t_-}(X_{t_-}) + v_t(X_t)\right\Vert^2\right]\md t\nonumber\\
    \lesssim& \int_0^{T/\kappa}\mathbb{E}_{\mathbb{P}} \left[\left\Vert \nabla\log \hat{\mu}_t(X_t)-\nabla \log \hat{\mu}_{t_-}(X_{t_-})\right\Vert^2\right]\md t + \int_0^{T/\kappa} \Vert v_t\Vert^2_{L^2({\hat{\mu}}_t)} \md t. \nonumber
\end{align}
Note that the first term involves both a time and space discretisation error. 
Inspired by \citet{Chen2022ImprovedAO}, we start analysing
\begin{equation*}
    E_{t, s} = \mathbb{E}_{\mathbb{P}}\left[\left\Vert \nabla\log \hat{\mu}_t(X_t) - \nabla\log \hat{\mu}_s(X_s)\right\Vert^2\right],
\end{equation*}
with $0\leq s< t\leq 1$. Recall that we can write
\begin{equation*}
    X_s = \sqrt{\frac{\lambda_{\kappa s}}{\lambda_{\kappa t}}} X_t + \sigma \left(\sqrt{1-\lambda_{\kappa s}} - \sqrt{\frac{(1-\lambda_{\kappa t})\lambda_{\kappa s}}{\lambda_{\kappa t}}}\right) Z = \alpha_{t, s} X_t + \Tilde{\sigma}_{t,s} Z,
\end{equation*}
where $Z\sim \mathcal{N}(0, I)$ independent of $X_t$. So, we have
\begin{equation*}
    \hat{\mu}_s(x) \propto \int \alpha_{t, s}^{-d} \ \hat{\mu}_t\left(\frac{y}{\alpha_{t,s}}\right) e^{-\frac{\Vert x-y\Vert^2}{2\Tilde{\sigma}_{t,s}}} \md y = \int \hat{\mu}_t(y)e^{-\frac{\Vert x-\alpha_{t, s}y\Vert^2}{2\Tilde{\sigma}_{t,s}}} \md y.
\end{equation*}
Therefore, similarly to the proof of Lemma  \ref{lem:regularity_of_gaussian_path}, we can express the score of $\hat{\mu}_s$ in terms of that of $\hat{\mu}_t$,
\begin{equation*}
    \nabla\log\hat{\mu}_s (x) = \alpha_{t,s}^{-1} \ \mathbb{E}_{Y\sim\varphi_{t| x, s}} \left[\nabla \log \hat{\mu}_t(Y)| x, s\right],
\end{equation*}
where $\varphi_{t| x,s}(y)\propto \hat{\mu}_t(y)e^{-\frac{\Vert x - \alpha_{t,s}y\Vert^2}{2\Tilde{\sigma}_{t,s}^2}}$.
Substituting this we have
\begin{align}
    E_{t, s} \leq& 2 \mathbb{E}_{\mathbb{P}}\left[\left\Vert \nabla\log \hat{\mu}_s(X_s) - \nabla\log \hat{\mu}_t(\alpha_{t, s}^{-1}X_s)\right\Vert^2\right] + 2 \mathbb{E}_{\mathbb{P}}\left[\left\Vert \nabla\log \hat{\mu}_t(X_t) - \nabla\log \hat{\mu}_t(\alpha_{t, s}^{-1}X_s)\right\Vert^2\right] \nonumber \\
    =& 2 \mathbb{E}_{\mathbb{P}}\left[\left \Vert \mathbb{E}_{\varphi_{t|X_s,s}}\left[(\alpha_{t,s}^{-1}-1)\nabla \log \hat{\mu}_t(Y) + \nabla\log \hat{\mu}_t(Y) - \nabla\log\hat{\mu}_{t}(\alpha_{t,s}^{-1}X_s)\right]\right\Vert^2\right] \nonumber\\
    & +  2 \mathbb{E}_{\mathbb{P}}\left[\left\Vert \nabla\log \hat{\mu}_t(X_t) - \nabla\log \hat{\mu}_t(\alpha_{t, s}^{-1}X_s)\right\Vert^2\right] \nonumber\\
    \leq & 4(\alpha_{t,s}^{-1} -1)^2 \ \mathbb{E} \left[\Vert \nabla\log \hat{\mu}_t(X_t)\Vert^2\right] + 6 \ \mathbb{E}_{\mathbb{P}}\left[\left\Vert \nabla\log \hat{\mu}_t(X_t) - \nabla\log \hat{\mu}_t(\alpha_{t, s}^{-1}X_s)\right\Vert^2\right]. \label{eq:auxiliary_time_space_discretisation_error}
\end{align}
Using the score expressions provided in Lemma \ref{lem:regularity_of_gaussian_path} Eq. \eqref{eq:score_conv_expressions}, we can bound $\mathbb{E} \left[\Vert \nabla\log \hat{\mu}_t(X_t)\Vert^2\right]$ as follows
\begin{align*}
    \mathbb{E}_{\hat{\mu}_t}\left[\Vert \nabla\log 
 \hat{\mu}_t(X_t)\Vert^2\right] \leq& \frac{1}{\sigma^2(1-\lambda_{\kappa t})} \mathbb{E}_{\hat{\mu}_t}\mathbb{E}_{\rho_{t, X_t}}\left[\left\Vert\frac{X_t - Y}{\sigma\sqrt{1-\lambda_{\kappa t}}}\right\Vert^2\right] = \frac{d}{\sigma^2(1-\lambda_{\kappa t})},  \\
\mathbb{E}_{\hat{\mu}_t}\left[\Vert \nabla\log 
 \hat{\mu}_t(X_t)\Vert^2\right] \leq & \frac{1}{\lambda_{\kappa t}} \mathbb{E}_{\hat{\mu}_t}\mathbb{E}_{\rho_{t, X_t}}\left[\left\Vert\nabla V_{\pi}\left(\frac{Y}{\sqrt{\lambda_{\kappa t}}}\right)\right\Vert^2\right] = \frac{1}{\lambda_{\kappa t}} \mathbb{E}_{\pid}\left[\left\Vert\nabla V_{\pi}\left(Y\right)\right\Vert^2\right] \leq \frac{L_\pi}{\lambda_{\kappa t}} d,
\end{align*}
where we have used the Lipschitzness of $\pid$.
This provides, $\mathbb{E} \left[\Vert \nabla\log \hat{\mu}_t(X_t)\Vert^2\right]\leq \min\left\{\frac{d}{\sigma^2(1-\lambda_{\kappa t})}, \frac{L_\pi d}{\lambda_{\kappa t}}\right\}$.
Following a similar argument to \citet[Lemma 13]{Chen2022ImprovedAO},  we  study the second term in \eqref{eq:auxiliary_time_space_discretisation_error},
\begin{align*}
    \mathbb{E}\left[\left\Vert \nabla\log \hat{\mu}_t(X_t) - \nabla\log \hat{\mu}_t(\alpha_{t, s}^{-1}X_s)\right\Vert^2\right] =& \mathbb{E}\left[\left\Vert \int_{0}^1 \nabla^2\log \hat{\mu}_t(X_t + aZ_{t,s})Z_{t,s} \md a\right\Vert^2\right] \\
    \leq & \int_{0}^1 \mathbb{E}\left[\left\Vert  \nabla^2\log \hat{\mu}_t(X_t + aZ_{t,s})Z_{t,s} \right\Vert^2\right] \md a,
\end{align*}
where $Z_{t,s} = \alpha_{t,s}^{-1} X_s - X_t\sim \mathcal{N}\left(0, \left(\frac{\Tilde{\sigma}_{t,s}}{\alpha_{t,s}}\right)^2 I\right)$ is independent of $X_t$. For simplicity, we denote
\begin{equation*}
    \hat{\sigma}_{t,s} = \frac{\Tilde{\sigma}_{t,s}}{\alpha_{t,s}} =\sigma\left( \sqrt{\frac{(1-\lambda_{\kappa s})\lambda_{\kappa t}}{\lambda_{\kappa s}}} - \sqrt{1-\lambda_{\kappa t}}\right).
\end{equation*}
To bound $\mathbb{E}\left[\left\Vert  \nabla^2\log \hat{\mu}_t(X_t + aZ_{t,s})Z_{t,s} \right\Vert^2\right]$ we consider the following change of variable
\begin{align*}
    \mathbb{E}\left[\left\Vert  \nabla^2\log \hat{\mu}_t(X_t + aZ_{t,s})Z_{t,s} \right\Vert^2\right] &= \mathbb{E} \left[\left\Vert  \nabla^2\log \hat{\mu}_t(X_t)Z_{t,s} \right\Vert^2 \frac{\md P_{X_t + a Z_{t,s}, Z_{t,s}}(X_t, Z_{t,s})}{\md P_{X_t, Z_{t,s}}(X_t, Z_{t,s})}\right] \\
    &\lesssim \left(\mathbb{E}\left\Vert  \nabla^2\log \hat{\mu}_t(X_t)Z_{t,s} \right\Vert^4\mathbb{E}\left(\frac{\md P_{X_t + a Z_{t,s}, Z_{t,s}}(X_t, Z_{t,s})}{\md P_{X_t, Z_{t,s}}(X_t, Z_{t,s})}\right)^2\right)^{1/2}.
\end{align*}
Let $M_t = \nabla^2\log \hat{\mu}_t(X_t)\left(\nabla^2\log \hat{\mu}_t(X_t)\right)^\intercal$ and $N_{t,s} = Z_{t,s}Z_{t,s}^\intercal$, which by definition they are independent. We now need to bound the two previous factors. By the properties of the tensor product, we have
\begin{equation*}
    \mathbb{E}\left\Vert  \nabla^2\log \hat{\mu}_t(X_t )Z_{t,s} \right\Vert^4 = \mathbb{E}\left[\text{Tr}\left(M_t^\intercal Z_{t,s}\right)^2\right] = \left\langle \mathbb{E} M_t \otimes M_t, \mathbb{E} Z_{t,s}\otimes Z_{t,s}\right\rangle.
\end{equation*}
Using the properties of the $\chi^2$ distribution, we obtain
\begin{equation*}
    \mathbb{E} (Z_{t,s}\otimes Z_{t,s})_{i_1, i_2, i_3, i_4} = \begin{cases}
        3 \Hat{\sigma}_{t,s}^4, \; i_1=i_2=i_3=i_4, \\
        \Hat{\sigma}_{t,s}^4\; i_1 \neq i_2, (i_1, i_2) = (i_3, i_4) \;\text{or}\; (i_1, i_2) = (i_4, i_3),\\
        0, \;\text{otherwise}.
    \end{cases}
\end{equation*}
Substituting this we have
\begin{align*}
    \mathbb{E}&\left\Vert  \nabla^2\log \hat{\mu}_t(X_t )Z_{t,s} \right\Vert^4 \lesssim \Hat{\sigma}_{t,s}^4 \left(\sum_{ (i_1, i_2) = (i_3, i_4)} + \sum_{ (i_1, i_2) = (i_4, i_3)}\right) \mathbb{E}(M_t\otimes M_t)_{i_1, i_2, i_3, i_4}\\
    &\lesssim \Hat{\sigma}_{t,s}^4 \sum_{ (i_1, i_2) = (i_3, i_4)} \mathbb{E}(M_t\otimes M_t)_{i_1, i_2, i_3, i_4}\lesssim \Hat{\sigma}_{t,s}^4  \mathbb{E}\Vert M_t\Vert_F^2 \lesssim \Hat{\sigma}_{t,s}^4 \mathbb{E}\Vert \nabla^2\log \hat{\mu}_t(X_t)\Vert_F^4.
\end{align*}
Applying Lemma 12 from \citet{Chen2022ImprovedAO}, it follows that
\begin{equation*}
    \mathbb{E}\Vert \nabla^2\log \hat{\mu}_t(X_t)\Vert_F^4 \lesssim \left(\frac{d}{\sigma^2(1-\lambda_{\kappa t})}\right)^4.
\end{equation*}
This bound becomes arbitrarily large as $\lambda_{\kappa t}$ tends to $1$, however, using the alternative expression for the Hessian provided in \eqref{eq:hessian_conv_2}, we can write
\begin{align*}
    \nabla^2 \log \hat{\mu}_t(X_t) &=-\frac{1}{\lambda_{\kappa t}}\left(\mathbb{E}_{\rho_{t, X_t}}\left[\nabla^2 V_\pi\left(\frac{Y}{\sqrt{\lambda_{\kappa t}}}\right)\right] - \text{Cov}_{\rho_{t, X_t}}\left[\nabla V_\pi\left(\frac{Y}{\sqrt{\lambda_{\kappa t}}}\right)\right]\right),
\end{align*}
where $\rho_{t, X_t}(y)\propto e^{-V_\pi(y/\sqrt{\lambda_{\kappa t}})-\Vert x-y\Vert^2/(2\sigma^2(1-\lambda_{\kappa t}))}$.
Due to the Lipschitzness of $\pid$, we have
\begin{equation*}
    - L_\pi I \preccurlyeq\mathbb{E}_{\rho_{t, X_t}}\left[\nabla^2 V_\pi\left(\frac{Y}{\sqrt{\lambda_{\kappa t}}}\right)\right]\preccurlyeq L_\pi I,
\end{equation*}
where the Frobenius norm of the identity matrix is $\Vert I\Vert_F=\sqrt{d}$.
For the covariance term we proceed as follows
\begin{align*}
    &\frac{1}{\lambda_{\kappa t}^4}\mathbb{E}_{\hat{\mu}_t} \left\Vert \text{Cov}_{\rho_{t, X_t}}\left[\nabla V_\pi\left(\frac{Y}{\sqrt{\lambda_{\kappa t}}}\right)\right]\right\Vert_F^4 \leq \frac{1}{\lambda_{\kappa t}^4} \mathbb{E}_{\hat{\mu}_t}\mathbb{E}_{\rho_{t,X_t}} \left\Vert\nabla V_\pi\left(\frac{Y}{\sqrt{\lambda_{\kappa t}}}\right) \nabla V_\pi\left(\frac{Y}{\sqrt{\lambda_{\kappa t}}}\right)^\intercal\right\Vert_F^4\\
    &= \frac{1}{\lambda_{\kappa t}^4} \mathbb{E}_{T_{\lambda_{\kappa t}^{-1/2}}\#\pid} \left\Vert\nabla V_\pi\left(\frac{Y}{\sqrt{\lambda_{\kappa t}}}\right) \nabla V_\pi\left(\frac{Y}{\sqrt{\lambda_{\kappa t}}}\right)^\intercal\right\Vert_F^4 = \frac{1}{\lambda_{\kappa t}^4}  \mathbb{E}_{T_{\lambda_{\kappa t}^{-1/2}}\#\pid} \left\Vert\nabla V_\pi\left(\frac{Y}{\sqrt{\lambda_{\kappa t}}}\right)\right\Vert^{8} \\
    & =\frac{1}{\lambda_{\kappa t}^4}  \mathbb{E}_{\pid} \left\Vert\nabla V_\pi\left(Y\right)\right\Vert^{8} \leq \frac{K_\pi^2}{\lambda_{\kappa t}^4}.
\end{align*}
Therefore, we have
\begin{equation*}
    \mathbb{E}\Vert \nabla^2\log \hat{\mu}_t(X_t)\Vert_F^4 \lesssim \min \left\{\frac{d^4}{\sigma^8(1-\lambda_{\kappa t})^4}, \frac{L_\pi^4 d^2 + K_\pi^2}{\lambda_{\kappa t}^4}\right\}.
\end{equation*}
Next, we bound the term concerning the change of variable
\begin{align*}
    \mathbb{E}\left(\frac{\md P_{X_t + a Z_{t,s}, Z_{t,s}}(X_t, Z_{t,s})}{\md P_{X_t, Z_{t,s}}(X_t, Z_{t,s})}\right)^2 =& \mathbb{E}\left(\frac{\md P_{X_t + a Z_{t,s}| Z_{t,s}}(X_t| Z_{t,s})}{\md P_{X_t | Z_{t,s}}(X_t| Z_{t,s})}\right)^2 \leq \mathbb{E}\left(\frac{\md P_{X_t + a Z_{t,s}| Z_{t,s}, X_{T_{\kappa}}}(X_t| Z_{t,s}, X_{T_{\kappa}})}{\md P_{X_t | Z_{t,s}, X_{T_{\kappa}}}(X_t| Z_{t,s}, X_{T_{\kappa}})}\right)^2\\
    =& \mathbb{E}\left(\frac{\md P_{X_t + a Z_{t,s}| Z_{t,s}, X_{T_{\kappa}}}(X_t| Z_{t,s}, X_{T_{\kappa}})}{\md P_{X_t |  X_{T_{\kappa}}}(X_t|  X_{T_{\kappa}})}\right)^2,
\end{align*}
where we have used the data processing inequality and $X_{T_{\kappa}}\sim \pid$. Since $X_t + a Z_{t,s}|(Z_{t,s}, X_{T_{\kappa}})\sim \mathcal{N}(\sqrt{\lambda_{\kappa t}} X_{T_{\kappa}}+aZ_{t,s}, \sigma^2(1-\lambda_{\kappa t}))$ and $X_t|X_{T_{\kappa}}\sim\mathcal{N}(\sqrt{\lambda_{\kappa t}}X_{T_\kappa}, \sigma^2(1-\lambda_{\kappa t}))$, we can explicitly compute the previous expression, as it corresponds to the $\chi^2$ divergence between two Gaussians, that is,
\begin{align*}
    \mathbb{E}\left(\frac{\md P_{X_t + a Z_{t,s}, Z_{t,s}}(X_t, Z_{t,s})}{\md P_{X_t, Z_{t,s}}(X_t, Z_{t,s})}\right)^2\leq \mathbb{E}\exp \left(\frac{a^2\Vert Z_{t,s}\Vert^2}{\sigma^2(1-\lambda_{\kappa t})}\right)\leq \left(1-2a^2\left(1-\sqrt{\frac{\lambda_{\kappa t}(1-\lambda_{\kappa s})}{\lambda_{\kappa s}(1-\lambda_{\kappa t})}}\right)^2\right)^{-d/2},
\end{align*}
where we have used the expression of the moment generating function of a $\chi^2$ distribution. Under the assumption on the schedule and for $t-s<<1$, it follows that 
\begin{equation*}
    \mathbb{E}\left(\frac{\md P_{X_s + a Z_{t,s}, Z_{t,s}}(X_s, Z_{t,s})}{\md P_{X_s, Z_{t,s}}(X_s, Z_{t,s})}\right)^2\lesssim 1.
\end{equation*}
Putting all this together, we have
\begin{align*}
    \mathbb{E}\left[\left\Vert \nabla\log \hat{\mu}_t(X_t) - \nabla\log \hat{\mu}_t(\alpha_{t, s}^{-1}X_s)\right\Vert^2\right] \lesssim & \Hat{\sigma}_{t,s}^2 \min \left\{\frac{d^2}{\sigma^4(1-\lambda_{\kappa t})^2}, \frac{L_\pi^2 d + K_\pi}{\lambda_{\kappa t}^2}\right\}\\
    \lesssim &\sigma^2 (t-s) \min \left\{\frac{d^2}{\sigma^4(1-\lambda_{\kappa t})^2}, \frac{L_\pi^2 d\vee K_\pi}{\lambda_{\kappa t}^2}\right\}
\end{align*}
for $t-s<<1$. Substituting this into \eqref{eq:auxiliary_time_space_discretisation_error}, it follows
\begin{align*}
    E_{t,s}\lesssim (t-s)^2d\min\left\{\frac{1}{\sigma^2(1-\lambda_{\kappa t})}, \frac{L_\pi}{\lambda_{\kappa t}}\right\} + \sigma^2 (t-s) \min \left\{\frac{d^2}{\sigma^4(1-\lambda_{\kappa t})^2}, \frac{L_\pi^2 d \vee K_\pi}{\lambda_{\kappa t}^2}\right\}.
\end{align*}
Therefore, this results into the following bound for the \gls*{KL}
\begin{align*}
    \kl\left(\mathbb{P}\;||\mathbb{Q}\right)\lesssim & \sum_{l=1}^M \int_{t_{l-1}}^{t_l} \mathbb{E}_{\mathbb{P}}  \left[\left\Vert \nabla\log \hat{\mu}_t(X_t)-\nabla \log \hat{\mu}_{t_-}(X_{t_-})\right\Vert^2\right]\md t + \int_0^{T/\kappa} \Vert v_t\Vert^2_{L^2({\hat{\mu}}_t)} \md t\\
    &\lesssim \sum_{l=1}^M h_l^3 d\min\left(\frac{1}{\sigma^2(1-\lambda_{\kappa t_{l}})}, \frac{L_\pi}{\lambda_{\kappa t_{l-1}}}\right) + \sigma^2 h_l^2 \min \left(\frac{d^2}{\sigma^4(1-\lambda_{\kappa t_{l}})^2}, \frac{L_\pi^2 d \vee K_\pi}{\lambda_{\kappa t_{l-1}}^2}\right)+  \kappa \mathcal{A}_{\lambda}(\mu).  
\end{align*}
Let $\mathbb{Q}_\theta$ be the path measure of the continuous-time interpolation of the \gls*{DALMC} algorithm \eqref{eq:annealed_langevin_mcmc_algorithm_score_approx}. When implementing the algorithm with step sizes $h_l\asymp 1/( M \kappa)$, we have
\begin{align*}
    \kl\left(\mathbb{P}\;||\mathbb{Q}_\theta\right)\lesssim & \kl\left(\mathbb{P}\;||\mathbb{Q}\right) + \sum_{l=0}^{M-1} h_l\mathbb{E}_{\hat{\mu}_t}\left[\left\Vert \nabla \log \hat{\mu}_l(X_{t_l}) - s_\theta(X_{t_l}, t_l)\right\Vert^2\right]
    \\\lesssim &\frac{dL_\pi}{M^2\kappa^3} + \frac{(d^2\vee L_\pi^2d \vee K_\pi)}{M\kappa^2} + \kappa (\mathbb{E}_{\pid}[\Vert X\Vert^2] + d) + \varepsilon_{\text{score}}^2,
\end{align*}
where we have used the bound on the action given in Lemma~\ref{lemma:action_bound_heavy_tail_diffusion}.
We can conclude that by taking
\begin{equation*}
    \kappa = \mathcal{O}\left(\frac{\varepsilon_{\text{score}}^2}{M_2 \vee d}\right),\quad M = \mathcal{O}\left(\frac{(M_2 \vee d)^2(d^2\vee L_\pi^2d \vee K_\pi) L_\pi}{\varepsilon_{\text{score}}^6}\right),
\end{equation*}
we guarantee that $\kl(\mathbb{P}\ \vert \mathbb{Q}_\theta)\leq \varepsilon_{\text{score}}^2$. 
Therefore, for any $\varepsilon = \mathcal{O}(\varepsilon_{\score})$, the \gls*{DALMC} algorithm under relaxed assumptions requires at most 
\begin{equation*}
     M = \mathcal{O}\left(\frac{(M_2 \vee d)^2(d^2\vee L_\pi^2d \vee K_\pi) L_\pi}{\varepsilon^6}\right)
\end{equation*}
steps to approximate $\pid$ to within $\varepsilon^2$ in \gls*{KL} divergence.
Note that if $M_2= \mathcal{O}(d)$, $L_\pi=\mathcal{O}(\sqrt{d})$ and $K_\pi= \mathcal{O}(d^2)$, then the number of steps is of order 
\begin{equation*}
    M = \mathcal{O}\left(\frac{T^2 d^4L_\pi}{\varepsilon^6}\right).
\end{equation*}
\end{proof}

\section{Proofs of Section~\ref{sec:heavy_tailed_diffusion}}

\subsection{Comments on Assumption \Cref{assumption:compact_plus_t_distribution_target}}\label{appendix:comments_assumption_compact_plus_heavy_tail}
\begin{lemma}\label{lemma:wegihted_pi}
    If $\Tilde{\pi}$ is supported in a closed Euclidean ball $B_d(0, R)$ and $\gamma\sim t(0, \tau^2 I, \alpha)$, then $\pi= \Tilde{\pi}*\gamma$ satisfies a weighted Poincaré inequality.
\end{lemma}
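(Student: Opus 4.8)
The plan is to reduce the statement to two standard ingredients: a weighted Poincaré inequality for the multivariate Student's $t$ distribution, and a bounded-perturbation (Holley--Stroock type) transfer of such inequalities. First I would recall that $\gamma\sim t(0,\tau^2 I,\alpha)$, being a generalised Cauchy measure with density $\propto(1+\|x\|^2/(\alpha\tau^2))^{-(\alpha+d)/2}$, satisfies a weighted Poincaré inequality with weight $w(x)=1+\|x\|^2$: there is a constant $C_\gamma=C_\gamma(d,\alpha,\tau)$ such that $\mathrm{Var}_\gamma(f)\le C_\gamma\int(1+\|x\|^2)\|\nabla f(x)\|^2\,\gamma(\md x)$ for all smooth $f$ (this is classical; see Bobkov and Ledoux on weighted Poincaré inequalities for Cauchy-type measures).

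The second and key step is to show that $\pi=\tilde\pi*\gamma$ has a density comparable to that of $\gamma$, i.e.\ there exist $0<c_1\le c_2<\infty$, depending on $R,\tau,\alpha$ and on $d$ only through the exponent $(\alpha+d)/2$, with $c_1\gamma(x)\le\pi(x)\le c_2\gamma(x)$ for all $x$. Since $\tilde\pi$ is supported in $B_d(0,R)$ we have $\pi(x)=\int_{B_d(0,R)}\gamma(x-y)\,\tilde\pi(\md y)$, so it is enough to control $\gamma(x-y)/\gamma(x)=\bigl((1+\|x\|^2/(\alpha\tau^2))/(1+\|x-y\|^2/(\alpha\tau^2))\bigr)^{(\alpha+d)/2}$ uniformly over $x\in\mathbb{R}^d$ and $\|y\|\le R$. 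Using $\|x-y\|\le\|x\|+R$ and, for $\|x\|\ge 2R$, $\|x-y\|\ge\|x\|-R\ge\|x\|/2$, one checks that the base of this power lies in a fixed compact subinterval of $(0,\infty)$ (the region $\|x\|\le 2R$ being bounded and handled directly); the point is that the base tends to $1$ as $\|x\|\to\infty$, which is precisely the polynomial-tail feature absent for a Gaussian kernel. Raising the base to the fixed power $(\alpha+d)/2$ and integrating against $\tilde\pi$ then gives the constants $c_1,c_2$.

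Finally I would transfer the inequality. The two-sided density bound yields, for smooth $f$, $\mathrm{Var}_\pi(f)\le\int(f-\mathbb{E}_\gamma f)^2\,\md\pi\le c_2\,\mathrm{Var}_\gamma(f)$ and $\int(1+\|x\|^2)\|\nabla f\|^2\,\md\gamma\le c_1^{-1}\int(1+\|x\|^2)\|\nabla f\|^2\,\md\pi$; combining these with the weighted Poincaré inequality for $\gamma$ gives $\mathrm{Var}_\pi(f)\le(c_2C_\gamma/c_1)\int(1+\|x\|^2)\|\nabla f(x)\|^2\,\pi(\md x)$, which is the asserted weighted Poincaré inequality for $\pi$. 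The constant depends on $d$ — both through $C_\gamma$ and through the exponent $(\alpha+d)/2$ entering $c_1,c_2$ — which is why, in contrast to the Gaussian convolution case of \citet{functional_inequalities_compactly_supported_assumption}, the bound is not dimension-free.

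The main obstacle is the second step: a naive bound on $\gamma(x-y)/\gamma(x)$ blows up as $\|x\|\to\infty$, and one has to exploit the exact polynomial decay of $\gamma$ to see that the base of the relevant power stays bounded away from $0$ and $\infty$ uniformly in $x$, so that raising it to $(\alpha+d)/2$ and integrating over the compact support of $\tilde\pi$ produces finite constants $c_1,c_2$. The first and third steps are then routine given the cited weighted Poincaré inequality for generalised Cauchy measures and the Holley--Stroock perturbation principle.
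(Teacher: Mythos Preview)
Your argument is correct, but it follows a genuinely different route from the paper's own proof. The paper mimics the strategy of \citet{functional_inequalities_compactly_supported_assumption}: it writes $\pi=\int\gamma_x\,\md\tilde\pi(x)$ as a mixture and decomposes $\mathrm{Var}_\pi(f)=\int\mathrm{Var}_{\gamma_x}(f)\,\md\tilde\pi(x)+\mathrm{Var}_{\tilde\pi}(x\mapsto\int f\,\md\gamma_x)$. The first term is handled by the weighted Poincar\'e inequality for each translate $\gamma_x$; the second is controlled by Cauchy--Schwarz, which produces a factor $\chi^2(\gamma_x,\gamma_y)$ that the paper bounds explicitly by a constant of the form $(1+cR^2)^{(\alpha+d)/2}$ using the polynomial tails and the compact support of $\tilde\pi$. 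By contrast, you go through a direct two-sided density bound $c_1\gamma\le\pi\le c_2\gamma$ and then apply Holley--Stroock.

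Your approach is more elementary and arguably cleaner: the density comparison is a two-line computation once one observes that the ratio $(1+\|x\|^2/(\alpha\tau^2))/(1+\|x-y\|^2/(\alpha\tau^2))$ stays in a fixed compact subinterval of $(0,\infty)$ uniformly in $x$ and $\|y\|\le R$, and the transfer step is immediate. The paper's mixture decomposition is somewhat heavier but has the advantage of paralleling the Gaussian case exactly, making the contrast with \citet{functional_inequalities_compactly_supported_assumption} transparent; it also isolates the $\chi^2$ divergence as the quantity responsible for the dimension dependence. Both routes yield constants of order $(1+cR^2)^{(\alpha+d)/2}$, confirming that the bound is not dimension-free.
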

\begin{proof}
Recall that
\begin{equation*}
    \pi = \Tilde{\pi}*\gamma = \int_{\mathbb{R}^d} \gamma_{x}\md \Tilde{\pi}(x),
\end{equation*}
where $\gamma_x\sim t(x, \sigma^2I, \alpha)$.
Following \cite{functional_inequalities_compactly_supported_assumption}, the variance of a function $f\in L^2(\pi)$ can be decomposed as
\begin{equation*}
    \text{Var}_{\Tilde{\pi}*\gamma}(f) = \int_{\mathbb{R}^d} \text{Var}_{\gamma_x}(f)\md \Tilde{\pi}(x) + \text{Var}_{\Tilde{\pi}}\left(x\mapsto\int f \md \gamma_x\right):= A+B.
\end{equation*}
Since $\gamma_x$ satisfies a weighted Poincaré inequality with constant $C_{\text{PI}, \gamma}$ and weight function $\omega = 1 + \Vert x\Vert^2$ \citep[Proposition 2.17]{heavy_tail_functional_inequalities}, the first term $A$ is bounded by
\begin{equation*}
    A\leq C_{\text{PI}, \gamma}\int_{\mathbb{R}^d}\int_{\mathbb{R}^d}\vert\nabla f\vert^2 \omega \md \gamma_x\md\Tilde{\pi}(x) = C_{\text{PI}, \gamma}\int_{\mathbb{R}^d} \vert\nabla f\vert^2 \omega \md(\Tilde{\pi}*\gamma).
\end{equation*}
For the second term $B$, consider $g:x\mapsto\int_{\mathbb{R}^d} f \md \gamma_x$. Using this, $B$ can be rewritten as
\begin{equation*}
    B=\frac{1}{2} \int\int_{\mathbb{R}^d\times\mathbb{R}^d}(g(x)-g(y))^2\md\Tilde{\pi}(x)\md\Tilde{\pi}(y),
\end{equation*}
where using Cauchy-Schwartz inequality
\begin{equation*}
    (g(x)-g(y))^2 \leq \text{Var}_{\gamma_x}(f)\text{Var}_{\gamma_x}\left(1-\frac{\md \gamma
_y}{\md\gamma_x}\right).
\end{equation*}
For the first factor, we reapply the weighted Poincaré inequality for the $t$ distribution $\gamma_x$. The second factor is the $\chi^2$ divergence between the $t$ distributions $\gamma_x$ and $\gamma_y$.
\begin{align*}
    \chi^2(\gamma_x, \gamma_y)=& \int_{\mathbb{R}^d} \left(\frac{\gamma_x(z)}{\gamma_y(z)}-1\right)^2\gamma_y(z)\;\md z=  \int_{\mathbb{R}^d} \frac{\gamma_x(z)^2}{\gamma_y(z)}\;\md z-1
    =\int_{\mathbb{R}^d} \left(\frac{\alpha + (z-x)^{\intercal}\Sigma^{-1}(z-x)}{\alpha + (z-y)^{\intercal}\Sigma^{-1}(z-y)}\right)^{-(\alpha+ d)/2} \gamma_{x}(z)\md z -1 \\
    =& \int_{\mathbb{R}^d}\left(1+\frac{(z-y)^{\intercal}\Sigma^{-1}(z-y)-(z-x)^{\intercal}\Sigma^{-1}(z-x)}{\alpha + (z-x)^{\intercal}\Sigma^{-1}(z-x)}\right)^{(\alpha+d)/2}\gamma_x(z)\md z -1\\
     =& \int_{\mathbb{R}^d}\left(1+\frac{y^{\intercal}\Sigma^{-1}y-x^{\intercal}\Sigma^{-1}x-2z^{\intercal}\Sigma^{-1}(y-x)}{\alpha + (z-x)^{\intercal}\Sigma^{-1}(z-x)}\right)^{(\alpha+d)/2}\gamma_x(z)\md z -1\\
     =&  \int_{A=\{\Vert z-x\Vert^2 > 1\}}\left(1+\frac{y^{\intercal}\Sigma^{-1}y-x^{\intercal}\Sigma^{-1}x-2z^{\intercal}\Sigma^{-1}(y-x)}{\alpha + (z-x)^{\intercal}\Sigma^{-1}(z-x)}\right)^{(\alpha+d)/2}\gamma_x(z)\md z\\
    &+  \int_{\mathbb{R}^d\setminus A}\left(1+\frac{y^{\intercal}\Sigma^{-1}y-x^{\intercal}\Sigma^{-1}x-2z^{\intercal}\Sigma^{-1}(y-x)}{\alpha + (z-x)^{\intercal}\Sigma^{-1}(z-x)}\right)^{(\alpha+d)/2}\gamma_x(z)\md z-1.
\end{align*}
For $z\in \mathbb{R}^d$, the following holds 
\begin{equation*}
    (z-x)^{\intercal}\Sigma^{-1}(z-x) = (z-x)^{\intercal}U^{\intercal}S^{-1/2}S^{-1/2}U(z-x) = \Vert S^{-1/2}U(z-x)\Vert^2 \geq 0.
\end{equation*}
The second integral can be upper bounded as follows
\begin{align*}
    &\int_{\mathbb{R}^d\setminus A}\left(1+\frac{y^{\intercal}\Sigma^{-1}y-x^{\intercal}\Sigma^{-1}x-2z^{\intercal}\Sigma^{-1}(y-x)}{\alpha + (z-x)^{\intercal}\Sigma^{-1}(z-x)}\right)^{(\alpha+d)/2}\gamma_x(z)\md z \\
    & = \int_{\mathbb{R}^d\setminus A}\left(\frac{\alpha + (z-y)^{\intercal}\Sigma^{-1}(z-y)}{\alpha + (z-x)^{\intercal}\Sigma^{-1}(z-x)}\right)^{(\alpha+ d)/2} \gamma_{x}(z)\md z \leq \int_{\mathbb{R}^d\setminus A}\left(\frac{\alpha + \sigma_{\min}^{-1}\Vert z-y\Vert^2}{\alpha}\right)^{(\alpha+ d)/2} \gamma_{x}(z)\md z \\
    &\leq\int_{\mathbb{R}^d\setminus A} \left(\frac{\alpha + \sigma_{\min}^{-1}2(\Vert x-y\Vert^2 + \Vert z-x\Vert^2)}{\alpha}\right)^{(\alpha+ d)/2}  \gamma_{x}(z)\md z \leq \left(\frac{\alpha + \sigma_{\min}^{-1}2(4R^2 + 1)}{\alpha}\right)^{(\alpha+ d)/2} = \kappa_R,
\end{align*}
where we have used that $\Vert x-y\Vert$ is bounded by $2R$.
On the other hand, for $z\in A$ we have
\begin{align*}
    &1+\frac{y^{\intercal}\Sigma^{-1}y-x^{\intercal}\Sigma^{-1}x-2z^{\intercal}\Sigma^{-1}(y-x)}{\alpha + (z-x)^{\intercal}\Sigma^{-1}(z-x)} = 1+\frac{(y-x)^{\intercal}\Sigma^{-1}(y-x)-2(z-x)^{\intercal}\Sigma^{-1}(y-x)}{\alpha + (z-x)^{\intercal}\Sigma^{-1}(z-x)} \\
    & \leq 1 + \frac{(y-x)^{\intercal}\Sigma^{-1}(y-x)}{\alpha + (z-x)^{\intercal}\Sigma^{-1}(z-x)} + \frac{2\Vert (z-x)^{\intercal}\Sigma^{-1}(y-x)\Vert }{ (z-x)^{\intercal}\Sigma^{-1}(z-x)}\\
    & \leq 1 + \frac{(y-x)^{\intercal}\Sigma^{-1}(y-x)}{\alpha + (z-x)^{\intercal}\Sigma^{-1}(z-x)} + \frac{2\Vert S^{-1/2}U(z-x)\Vert\Vert S^{-1/2}U(y-x)\Vert }{ \Vert S^{-1/2}U(z-x)\Vert^2}\\
    & = 1 + \frac{(y-x)^{\intercal}\Sigma^{-1}(y-x)}{\alpha + (z-x)^{\intercal}\Sigma^{-1}(z-x)} + \frac{2\Vert S^{-1/2}U(y-x)\Vert }{ \Vert S^{-1/2}U(z-x)\Vert} \leq 1 + \frac{4R^2 \Vert S^{-1/2}\Vert^2}{\alpha} + \frac{4R\Vert S^{-1/2}\Vert }{\sigma_{\max}^{-1/2}}\leq 1 + \frac{4R^2}{\alpha\sigma_{\min}} + \frac{4R\sigma_{\max}^{1/2}}{\sigma_{\min}^{1/2}}.
\end{align*}
Therefore, we obtain
\begin{equation*}
    \int_{A=\{\Vert z-x\Vert^2 > 1\}}\left(1+\frac{y^{\intercal}\Sigma^{-1}y-x^{\intercal}\Sigma^{-1}x-2z^{\intercal}\Sigma^{-1}(y-x)}{\alpha + (z-x)^{\intercal}\Sigma^{-1}(z-x)}\right)^{(\alpha+d)/2}\gamma_x(z)\md z\leq \left( 1 + \frac{4R^2}{\alpha\sigma_{\min}} + \frac{4R\sigma_{\max}^{1/2}}{\sigma_{\min}^{1/2}}\right)^{(\alpha+d)/2} = \beta_R
\end{equation*}
This shows that $\chi^2(\gamma_x, \gamma_y)$ is upper bounded with dependence on $R$ of the form $R^{(\alpha+d)}$. In particular, $B$ satisfies the following bound
\begin{equation*}
    B\leq  C_{\text{PI}, \gamma} (\kappa_R + \beta_R -1)\int_{\mathbb{R}^d} \vert\nabla f\vert^2 \omega \md(\Tilde{\pi}*\gamma).
\end{equation*}
Therefore, the measure $\Tilde{\pi}*\gamma$ satisfies a weighted Poincaré inequality with constant 
\begin{equation*}
    C_{\text{PI}} \leq C_{\text{PI}, \gamma} (\kappa_R + \beta_R).
\end{equation*}

\end{proof}

\subsection{Proof of Lemma~\ref{lem:regularity_of_heavy_tailed_path}}\label{proof:lem:smoothness_of_heavy_tail_path}
\begin{proof}
    First note that a $d$-dimensional Student's $t$-distribution $\varphi_{\mu,\sigma^2}\sim t(\mu, \sigma^2 I, \alpha)$ satisfies
    \begin{align*}
         \nabla^2\log \varphi_{\mu, \sigma^2}(x)  &= (\alpha + d) \left( \frac{I}{\alpha \sigma^2 + \Vert x-\mu\Vert^2} - \frac{(x-\mu)(x-\mu)^\intercal}{(\alpha \sigma^2 + \Vert x-\mu\Vert^2)^2}\right).
    \end{align*}
    Hence, we have
    \begin{align*}
    \nabla^2\log \varphi_{\mu, \sigma^2}(x) &\preccurlyeq \frac{\alpha + d}{\alpha \sigma^2} I,\\
         \nabla^2\log \varphi_{\mu, \sigma^2}(x) &\succcurlyeq  - \frac{(\alpha + d)\Vert x-\mu\Vert^2}{(\alpha \sigma^2 + \Vert x-\mu\Vert^2)^2} I \succcurlyeq  -\frac{\alpha + d}{2\alpha \sigma^2} I,
    \end{align*}
    which shows that $\nabla \log\varphi_{\mu, \sigma^2}$ is Lipschitz. Let $X_t\sim \mu_t$, $\varphi_{\sigma^2}\sim t(0, \sigma^2 I, \alpha)$ and for simplicity denote $\pi = \pid$, we have that
    \begin{equation*}
        \mu_t(x) = \frac{1}{\lambda_t^{d/2}} \pi\left(\frac{x}{\sqrt{\lambda_t}}\right) * \frac{1}{(1-\lambda_t)^{d/2}}\varphi_{\sigma^2}\left(\frac{x}{\sqrt{1-\lambda_t}}\right) = \int \frac{1}{\lambda_t^{d/2}} \pi\left(\frac{y}{\sqrt{\lambda_t}}\right) \varphi_{\sigma^2(1-\lambda_t)}(x-y) \md y,
    \end{equation*}
    where $\varphi_{\sigma^2(1-\lambda_t)}\sim t(0, \sigma^2(1-\lambda_t)I, \alpha)$.
The Hessian $\nabla^2 \log\mu_t(x)$ is then given by
\begin{equation*}
    \nabla^2\log\mu_t(x) = \mathbb{E}_{Y\sim \rho_{t, x}} \left[\nabla^2 \log \varphi_{\sigma^2(1-\lambda_t)}(x - Y)\right] + \text{Cov}_{Y\sim \rho_{t, x}} [\nabla \log\varphi_{\sigma^2(1-\lambda_t) }(x- Y)],
\end{equation*}
where $\rho_{t, x} (y) \propto \frac{1}{\lambda_t^{d/2}} \pi\left(\frac{y}{\sqrt{\lambda_t}}\right) \varphi_{\sigma^2(1-\lambda_t)}(x-y)$.
Using that $\nabla\log\varphi_{\sigma^2}$ is Lipschitz with constant $L_\sigma$, the first term can be as follows
\begin{equation*}
    -\frac{L_\sigma}{1-\lambda_t}I\preccurlyeq\mathbb{E}_{Y\sim \rho_{t, x}} \left[\nabla^2 \log \varphi_{\sigma^2(1-\lambda_t)}(x - Y)\right] \preccurlyeq \frac{L_\sigma}{1-\lambda_t} I.
\end{equation*}
Focusing on the covariance term we have that
\begin{align*}
   0\preccurlyeq \text{Cov}_{Y\sim \rho_{t, x}} [\nabla \log\varphi_{\sigma^2(1-\lambda_t)}(x- Y)] \preccurlyeq \mathbb{E}_{Y\sim \rho_{t, x}}\left[ \Vert \nabla \log\varphi_{\sigma^2 (1-\lambda_t)}(x- Y)\Vert^2 \right] I.
\end{align*}
The expectation can be bounded independently of $x$ as follows
\begin{align}
    \mathbb{E}_{Y\sim \rho_{t, x}}&\left[ \Vert \nabla \log\varphi_{\sigma^2 (1-\lambda_t)}(x- Y)\Vert^2 \right] =  (\alpha +d)^2 \ \mathbb{E}_{Y\sim \rho_{t, x}}\left[  \frac{\frac{\left \Vert x- Y\right\Vert^2}{(\alpha\sigma^2(1-\lambda_t))^2}}{\left(1 + \frac{\Vert x - Y\Vert^2}{\alpha\sigma^2(1-\lambda_t)}\right)^2} \right]\nonumber \\
    &\leq \frac{(\alpha +d)^2}{2}\ \mathbb{E}_{Y\sim \rho_{t, x}}\left[  \frac{1}{\alpha\sigma^2(1-\lambda_t) + \Vert x - Y\Vert^2} \right] \leq \frac{(\alpha + d)^2}{2\alpha\sigma^2(1-\lambda_t)}.\label{eq:auxiliary_condition_heavy_tailed_distribution}
\end{align}
Therefore, we have that 
\begin{equation}\label{eq:smoothness_bound_heavy_tailed_diffusion}
    -\frac{L_\sigma}{1-\lambda_t} I \preccurlyeq \nabla^2\log\mu_t(x) \preccurlyeq \left(\frac{L_\sigma}{1-\lambda_t}+ \frac{(\alpha + d)^2}{2\alpha\sigma^2(1-\lambda_t)}\right) I.
\end{equation}
On the other hand, under assumption \Cref{assumption:heavy_tailed_target_condition} we have that $\nabla\log\pi$ is $L_\pi$-Lipschitz and $\Vert\nabla \log \pi \Vert^2\leq C_\pi$, then 
\begin{align*}
    -\frac{L_\pi}{\lambda_t} I\preccurlyeq \nabla^2\log\mu_t(x) &= \frac{1}{\lambda_t} \mathbb{E}_{Y\sim {\rho}_{t, x}} \left[\nabla^2 \log \pi\left(\frac{Y}{\sqrt{\lambda_t}}\right)\right] + \frac{1}{\lambda_t}\text{Cov}_{Y\sim {\rho}_{t, x}} \left[\nabla \log\pi\left(\frac{Y}{\lambda_t}\right)\right]\\
    &\preccurlyeq  \frac{1}{\lambda_t}\left({L_\pi}+\mathbb{E}_{\rho_{t,x}}\left[ \left\Vert\nabla\log\pi\left(\frac{Y}{\lambda_t}\right)\right\Vert^2\right]\right) I \preccurlyeq \frac{L_\pi+ C_\pi}{\lambda_t} I,
\end{align*}
where $\rho_{t, x} (y) \propto \frac{1}{\lambda_t^{d/2}} \pi\left(\frac{y}{\sqrt{\lambda_t}}\right) \varphi_{\sigma^2(1-\lambda_t)}(x-y)$. Putting this together with \eqref{eq:smoothness_bound_heavy_tailed_diffusion}, we obtain
\begin{equation}
    \max\left\{-\frac{L_\sigma}{1-\lambda_t}, -\frac{L_\pi}{\lambda_t}\right\}I \preccurlyeq \nabla^2\log\mu_t(x) \preccurlyeq \min\left\{ \left(\frac{L_\sigma}{1-\lambda_t}+ \frac{(\alpha + d)^2}{2\alpha\sigma^2(1-\lambda_t)}\right) ,\left(\frac{L_\pi + C_\pi}{\lambda_t}\right) \right\}I, \label{eq:lipschitz_constant_t_heavy_tailed_diffusion}
\end{equation}
which concludes that $\nabla\log\mu_t$ is Lipschitz for all $t\in[0, T]$ with constant
\begin{equation*}
    L_t\leq \min\left\{ \left(\frac{L_\sigma}{1-\lambda_t}+ \frac{(\alpha + d)^2}{2\alpha\sigma^2(1-\lambda_t)}\right) ,\left(\frac{L_\pi + C_\pi}{\lambda_t}\right) \right\}.
\end{equation*}

We now prove the second part of Lemma~\ref{lem:regularity_of_heavy_tailed_path}, that is, Assumption \Cref{assumption:heavy_tailed_target_condition} is satisfied when  $\pi(x) = \Tilde{\pi} * \varphi_{\tau^2}(x)$, where $\Tilde{\pi}$ is compactly supported and $\varphi_{\tau^2}\sim t(0, \tau^2 I, \Tilde{\alpha})$ (Assumption \Cref{assumption:compact_plus_t_distribution_target}). In this case, we can write
\begin{equation*}
    \Vert\nabla\log\pi(x)\Vert^2 = \left \Vert\mathbb{E}_{Y\sim\hat{\rho}_{t, x}} \left[\nabla\log\varphi_{\tau^2}(x-Y)\right]\right\Vert^2 \leq \mathbb{E}_{Y\sim\hat{\rho}_{t, x}} \left[\left \Vert\nabla\log\varphi_{\tau^2}(x-Y)\right\Vert^2\right] \leq \frac{(\alpha + d)^2}{2\Tilde{\alpha}\tau^2} = C_\pi,
\end{equation*}
where $\hat{\rho}_{t, x}\propto \hat{\pi}(y)\varphi_{\tau^2}(x-y)$ and we have used the same trick as in \eqref{eq:auxiliary_condition_heavy_tailed_distribution}. Denote by $L_\tau$ the Lipschitz constant of $\varphi_{\tau^2}$. The Hessian can be upper and lower bounded as follows
\begin{align*}
    -L_\tau I\preccurlyeq \nabla^2\log\pi(x) &= \mathbb{E}_{Y\sim\hat{\rho}_{t,x}} \left[\nabla^2\log\varphi_{\tau^2}(x-Y)\right] + \text{Cov}_{Y\sim\Hat{\rho}_{t,x}}[\nabla\log\varphi_{\tau^2}(x-Y)]\\
    & \preccurlyeq \left(L_\tau + \mathbb{E}_{Y\sim\hat{\rho}_{t,x}}\left[\Vert\nabla\log\varphi_{\tau^2}(x-Y)\Vert^2 \right] \right)I \leq \left(L_{\tau} + \frac{(\Tilde{\alpha} + d)^2}{2\Tilde{\alpha}\tau^2}\right) I,
\end{align*}
which shows that $\nabla\log\pi$ is Lipschitz with constant $L_\pi = L_{\tau} + \frac{(\Tilde{\alpha} + d)^2}{2\Tilde{\alpha}\tau^2}$.
Finally, observe that exploiting Assumption~\Cref{assumption:compact_plus_t_distribution_target} we can get a more refined Lipschitz constant for $\nabla\log\mu_t$ than that of \eqref{eq:lipschitz_constant_t_heavy_tailed_diffusion}. That is,
\begin{align*}
    -\frac{L_\tau}{\lambda_t}\preccurlyeq \nabla^2\log\mu_t(x) &= \mathbb{E}_{Y\sim \Tilde{\rho}_{t, x}} \left[\nabla^2 \log \varphi_{\tau^2\lambda_t}(x - Y)\right] + \text{Cov}_{Y\sim \Tilde{\rho}_{t, x}} [\nabla \log\varphi_{\tau^2 \lambda_t }(x- Y)]\preccurlyeq\left(\frac{L_\tau}{\lambda_t}+ \frac{(\Tilde{\alpha} + d)^2}{2\Tilde{\alpha}\tau^2\lambda_t}\right) I,
\end{align*}
where $\Tilde{\rho}_{t, x} (y) \propto \left(\frac{1}{\lambda_t^{d/2}} \pi\left(\frac{y}{\sqrt{\lambda_t}}\right) *\frac{1}{(1-\lambda_t)^{d/2}}\varphi_{\sigma^2}\left(\frac{y}{\sqrt{1-\lambda_t}}\right)\right)\varphi_{\tau^2\lambda_t}(x-y)$. This combined with \eqref{eq:smoothness_bound_heavy_tailed_diffusion} leads to
\begin{equation*}
    \max\left\{-\frac{L_\sigma}{1-\lambda_t}, -\frac{L_\tau}{\lambda_t}\right\}I \preccurlyeq \nabla^2\log\mu_t(x) \preccurlyeq \min\left\{ \left(\frac{L_\sigma}{1-\lambda_t}+ \frac{(\alpha + d)^2}{2\alpha\sigma^2(1-\lambda_t)}\right) ,\left(\frac{L_\tau}{\lambda_t}+ \frac{(\Tilde{\alpha} + d)^2}{2\Tilde{\alpha}\tau^2\lambda_t}\right) \right\}I,
\end{equation*}
which shows that $\nabla\log\mu_t$ is Lipschitz for all $t\in[0, T]$.
\end{proof}

\subsection{Proof of Lemma~\ref{lemma:action_bound_heavy_tail_diffusion}}\label{proof:lem:action_bound_heavy_tail_diffusion}
\begin{proof}
Consider the reparametrised version of $\mu_t$ in terms of the schedule $\lambda_t$, denoted as $\Tilde{\mu}_\lambda$ and let $X_{\lambda}\sim\Tilde{\mu}_{\lambda}$ and $X_{\lambda + \delta}\sim\Tilde{\mu}_{\lambda + \delta}$.
Recall that 
\begin{equation*}
    X_{\lambda} = \sqrt{\lambda} X + \sqrt{1-\lambda}\sigma Z
\end{equation*}
where $X\sim\pid$ and $Z\sim t(0, I, \alpha)$.  The Wasserstein-2 distance between $\Tilde{\mu}_\lambda$ and $\Tilde{\mu}_{\lambda+\delta}$ is given by
\begin{align*}
     W_2^2(\Tilde{\mu}_\lambda, &\Tilde{\mu}_{\lambda+\delta}) \leq \mathbb{E}\left[\Vert X_\lambda-X_{\lambda + \delta}\Vert^2\right]\\
&=\mathbb{E}\left[\left\Vert(\sqrt{\lambda}-\sqrt{\lambda+\delta})X\right\Vert^2\right] + \mathbb{E}\left[\left\Vert\left(\sqrt{1-\lambda}-\sqrt{1-\lambda-\delta}\right)\sigma Z\right\Vert^2\right]\\
&= (\sqrt{\lambda}-\sqrt{\lambda+\delta})^2 \mathbb{E}\left[\Vert X\Vert^2\right] + \left(\sqrt{1-\lambda}-\sqrt{1-\lambda-\delta}\right)^2 \frac{\sigma^2d\alpha}{\alpha -2}.
\end{align*}
Using the definition of the metric derivative we have
\begin{align*}
    \left\vert\Dot{\Tilde{\mu}}\right\vert_\lambda^2 = \lim_{\delta\to 0}\frac{ W_2^2(\Tilde{\mu}_\lambda, \Tilde{\mu}_{\lambda+\delta})}{\delta^2} \leq \frac{\mathbb{E}\left[\Vert X\Vert^2\right]}{4\lambda} + \frac{1}{4(1-\lambda)}\frac{\sigma^2 d\alpha}{\alpha-2}.
\end{align*}
Since $\mu_t = \Tilde{\mu}_{\lambda_t}$, we have that $\vert\Dot{\mu}\vert_t = \left\vert\Dot{\Tilde{\mu}}\right\vert_\lambda\left\vert\partial_t{\lambda_t}\right\vert$. Using assumption \Cref{assumption:schedule_form_heavy_tail_diffusion} for the schedule, we have the following expression for the action
\begin{align}
    \mathcal{A}_{\lambda}(\mu) &= \int_0^T \vert\Dot{\mu}\vert_t^2 \md t = \int_0^T \left\vert\Dot{\Tilde{\mu}}\right\vert_\lambda^2\left\vert\partial_t{\lambda_t}\right\vert^2 \md t\nonumber\\
    &\lesssim \int_0^T \left(\frac{\mathbb{E}\left[\Vert X\Vert^2\right]}{4 \lambda_t} + \frac{\sigma^2\alpha}{4(1-\lambda_t)(\alpha-2)}d\right)\left\vert \partial_t{\lambda_t}\right\vert^2\md t \nonumber\\
    &= \int_0^T \left(\frac{\mathbb{E}\left[\Vert X\Vert^2\right]\sqrt{1-\lambda_t}}{4\sqrt{\lambda_t}} + \frac{\sigma^2\alpha\sqrt{\lambda_t}}{4\sqrt{1-\lambda_t}(\alpha-2)}d\right)\left\vert \frac{\partial_t{\lambda_t}}{\sqrt{\lambda_t(1-\lambda_t)}}\right\vert \left\vert \partial_t{\lambda_t}\right\vert\md t \nonumber\\    
    &\leq C_\lambda \int_0^T \left(\frac{\mathbb{E}\left[\Vert X\Vert^2\right]\sqrt{1-\lambda_t}}{4\sqrt{\lambda_t}} +\frac{\sigma^2\alpha\sqrt{\lambda_t}}{4\sqrt{1-\lambda_t}(\alpha-2)}d\right) \left\vert \partial_t{\lambda_t}\right\vert\md t \nonumber\\ 
    &\leq C_\lambda \int_{0}^{1} \left(\frac{\mathbb{E}\left[\Vert X\Vert^2\right]\sqrt{1-\lambda}}{4\sqrt{\lambda}} + \frac{\sigma^2\alpha\sqrt{\lambda}}{4\sqrt{1-\lambda}(\alpha-2)} d\right) \md \lambda \nonumber\\
    &\leq \frac{C_\lambda\pi}{8} \left(\mathbb{E}\left[\Vert X\Vert^2\right] + \frac{\sigma^2 d\alpha}{\alpha-2}\right).\nonumber
\end{align}
\end{proof}

\subsection{Proof of Theorem~\ref{theorem:discretisation_analysis_convolutional_heavy_tailed_diffusion}
}\label{proof:theorem:discretisation_path_analysis_heavy_tail_diffusion}
\begin{lemma}\label{lemma:auxiliary_change_score_1}
    Suppose that $p(x) \propto e^{-V(x)}$ is a probability density on $\mathbb{R}^d$, where $\nabla V(x)$ is Lipschitz continuous with constant $L$ and let $\varphi_{\sigma^2}(x)$ be the density function of a Student's t distribution $t(0, \sigma^2 I, \alpha)$. 
   Then 
    \begin{equation*}
        \left\Vert \nabla \log \frac{p(x)}{p*\varphi_{\sigma^2}(x)} \right\Vert\leq  L\ \mathbb{E}_{Y|x} \left[\left \Vert Y- x\right\Vert\right],
    \end{equation*}
    where the distribution of $Y|x$ is of the form $\frac{p(y)\varphi_\sigma^2(x-y)}{p(x)*\varphi_\sigma^2(x)}$.
\end{lemma}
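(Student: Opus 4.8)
The plan is to reduce the statement to a one-line estimate: first derive the standard identity expressing the score of the mollified density $p*\varphi_{\sigma^2}$ as a posterior average of $\nabla V$, and then control the difference of scores via Jensen's inequality and the Lipschitz hypothesis on $\nabla V$.

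First I would record that, since $\nabla V$ is $L$-Lipschitz, $p\propto e^{-V}$ is $C^1$ with $\nabla p=-(\nabla V)\,p$, while $\varphi_{\sigma^2}$ is smooth with bounded gradient; this justifies differentiating under the integral sign and integrating by parts (the boundary terms vanishing by the decay of $p$ and $\varphi_{\sigma^2}$), giving
\begin{equation*}
\nabla\bigl(p*\varphi_{\sigma^2}\bigr)(x)=\int \nabla p(y)\,\varphi_{\sigma^2}(x-y)\,\md y=-\,(p*\varphi_{\sigma^2})(x)\,\mathbb{E}_{Y\mid x}\!\left[\nabla V(Y)\right],
\end{equation*}
where $Y\mid x$ has density $p(y)\varphi_{\sigma^2}(x-y)/(p*\varphi_{\sigma^2})(x)$. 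Dividing through by $(p*\varphi_{\sigma^2})(x)$ yields $\nabla\log(p*\varphi_{\sigma^2})(x)=-\mathbb{E}_{Y\mid x}[\nabla V(Y)]$, whereas trivially $\nabla\log p(x)=-\nabla V(x)$.

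Subtracting the two identities and using that $\nabla V(x)$ is deterministic under the conditional law of $Y$, I obtain
\begin{equation*}
\nabla\log\frac{p(x)}{(p*\varphi_{\sigma^2})(x)}=\mathbb{E}_{Y\mid x}\bigl[\nabla V(Y)-\nabla V(x)\bigr],
\end{equation*}
so by Jensen's inequality and the $L$-Lipschitz continuity of $\nabla V$,
\begin{equation*}
\left\Vert\nabla\log\frac{p(x)}{(p*\varphi_{\sigma^2})(x)}\right\Vert\le \mathbb{E}_{Y\mid x}\bigl\Vert\nabla V(Y)-\nabla V(x)\bigr\Vert\le L\,\mathbb{E}_{Y\mid x}\bigl\Vert Y-x\bigr\Vert,
\end{equation*}
which is the claimed bound. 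I do not expect a genuine obstacle here: the only technical point is the interchange of gradient and integral, together with finiteness of $\mathbb{E}_{Y\mid x}\|\nabla V(Y)\|$ (which follows from $\|\nabla V(y)\|\le\|\nabla V(x)\|+L\|y-x\|$, and if $\mathbb{E}_{Y\mid x}\|Y-x\|=\infty$ the inequality is vacuous). This lemma is essentially a convenient repackaging — valid for the heavy-tailed mollifier $\varphi_{\sigma^2}$ — of the familiar fact that convolution smooths the score, tailored to the change-of-score terms that arise in the discretisation analysis of the heavy-tailed path.
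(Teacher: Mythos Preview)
Your proof is correct and follows essentially the same approach as the paper: derive the posterior-mean identity $\nabla\log(p*\varphi_{\sigma^2})(x)=-\mathbb{E}_{Y\mid x}[\nabla V(Y)]$, subtract $\nabla\log p(x)=-\nabla V(x)$, and apply Jensen together with the Lipschitz bound on $\nabla V$. You are slightly more careful than the paper in justifying differentiation under the integral, but the argument is otherwise identical.
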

\begin{proof}
Observe that 
\begin{align*}
   \nabla \log p*\varphi_{\sigma^2}(x) &= -\frac{\int_{\mathbb{R}^d} \nabla V(y) e^{-V(y)}\left(1+\frac{1}{\alpha}\frac{\Vert y-x\Vert^2}{\sigma^2}\right)^{-\frac{\alpha + d}{2}} \md y}{\int_{\mathbb{R}^d}e^{-V(y)} \left(1+\frac{1}{\alpha}\frac{\Vert y-x\Vert^2}{\sigma^2}\right)^{-\frac{\alpha + d}{2}} \md y} = -\frac{\int_{\mathbb{R}^d} \nabla V(y) p(y) \varphi_{\sigma^2}(x-y) \md y}{\int_{\mathbb{R}^d}p(y) \varphi_{\sigma^2}(x-y) \md y} = -\mathbb{E}_{\gamma_{x, \sigma^2}}\left[\nabla V(Y)\right],
\end{align*}
where $\gamma_{x, \sigma^2}$ denotes the probability density
\begin{equation*}
    \gamma_{x, \sigma^2}(y)=\frac{ p(y) \varphi_{\sigma^2}(x-y)}{ p(x)* \varphi_{\sigma^2}(x)}.
\end{equation*}
Using the Lipschitzness of $\nabla V$, we have
\begin{align*}
     \left\Vert\nabla \log \frac{p(x)}{p*\varphi_{\sigma^2}(x)} \right\Vert &= \left\Vert  \mathbb{E}_{\gamma_{x, \sigma^2}}\left[\nabla V(Y)- \nabla V(x)\right]\right\Vert \leq L\ \mathbb{E}_{\gamma_{x, \sigma^2}} \left[\left \Vert Y- x\right\Vert\right].
\end{align*}
\end{proof}

\begin{lemma}\label{lemma:auxiliary_change_score_2}
 With the setting in Lemma~\ref{lemma:auxiliary_change_score_1}. Denote $p_\lambda(x)=\lambda^d p(\lambda x)$ for $\lambda\geq 1$. Then 
    \begin{align*}
        \left\Vert \nabla \log \frac{p(x)}{p_\lambda *\varphi_{\sigma^2}(x)} \right\Vert\lesssim& \lambda (\lambda-1) \Vert x\Vert + (\lambda-1)\Vert \nabla V(x)\Vert+ \lambda^2 L\ \mathbb{E}_{Y|x}\left[\Vert Y- x\Vert\right],
    \end{align*}
\end{lemma}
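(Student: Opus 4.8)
The plan is to interpolate between $p$ and $p_\lambda*\varphi_{\sigma^2}$ through the intermediate density $p_\lambda$, writing
\begin{equation*}
\nabla \log \frac{p(x)}{p_\lambda *\varphi_{\sigma^2}(x)} = \Bigl(\nabla\log p(x)-\nabla\log p_\lambda(x)\Bigr) + \nabla\log\frac{p_\lambda(x)}{p_\lambda*\varphi_{\sigma^2}(x)},
\end{equation*}
and then bounding the two pieces separately. The first is a pure ``rescaling'' term, to be controlled by the Lipschitzness of $\nabla V$; the second is a ``smoothing'' term to which Lemma~\ref{lemma:auxiliary_change_score_1} applies directly once $p$ is replaced by $p_\lambda$.

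For the rescaling term, $p\propto e^{-V}$ gives $\nabla\log p(x)=-\nabla V(x)$, while $p_\lambda(x)=\lambda^d p(\lambda x)$ gives $\nabla\log p_\lambda(x)=-\lambda\,\nabla V(\lambda x)$, so
\begin{equation*}
\nabla\log p(x)-\nabla\log p_\lambda(x) = \lambda\bigl(\nabla V(\lambda x)-\nabla V(x)\bigr) + (\lambda-1)\,\nabla V(x).
\end{equation*}
The triangle inequality together with $\|\nabla V(\lambda x)-\nabla V(x)\|\le L\,(\lambda-1)\|x\|$ then yields the bound $\lambda L(\lambda-1)\|x\| + (\lambda-1)\|\nabla V(x)\|$, which is exactly the first two terms of the claim.

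For the smoothing term, write $p_\lambda\propto e^{-V_\lambda}$ with $V_\lambda(x)=V(\lambda x)$ up to an additive constant; then $\nabla V_\lambda(x)=\lambda\,\nabla V(\lambda x)$ is $\lambda^2 L$-Lipschitz. Applying Lemma~\ref{lemma:auxiliary_change_score_1} with $(p,V,L)$ replaced by $(p_\lambda,V_\lambda,\lambda^2 L)$ gives
\begin{equation*}
\left\|\nabla\log\frac{p_\lambda(x)}{p_\lambda*\varphi_{\sigma^2}(x)}\right\| \le \lambda^2 L\;\mathbb{E}_{Y|x}\bigl[\|Y-x\|\bigr],
\end{equation*}
where $Y|x$ now has density proportional to $p_\lambda(y)\,\varphi_{\sigma^2}(x-y)$. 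Summing the two estimates proves the lemma.

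I do not expect a genuine obstacle here: the argument is essentially triangle-inequality bookkeeping plus a reparametrisation of Lemma~\ref{lemma:auxiliary_change_score_1}. The two points that need care are (a) tracking the powers of $\lambda$ — the gradient-Lipschitz constant of $V_\lambda$ carries a factor $\lambda^2$, which is precisely what produces the $\lambda^2 L$ in front of the smoothing term — and (b) the fact that the tilted measure in the conclusion is built from $p_\lambda$ rather than $p$; this matches the ``$Y|x$'' convention of Lemma~\ref{lemma:auxiliary_change_score_1}, and in the regime $\lambda\approx 1$ in which the lemma is later used (cf.\ the schedule condition $\lambda_{\kappa t}/\lambda_{\kappa(t+\delta)}=\mathcal{O}(1+\delta)$ in Theorem~\ref{theorem:discretisation_analysis_convolutional_heavy_tailed_diffusion}) the difference between the two conditionals is lower order and is absorbed by $\lesssim$.
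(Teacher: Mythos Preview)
Your proof is correct and follows essentially the same approach as the paper: split via the triangle inequality through the intermediate density $p_\lambda$, bound the rescaling term $\nabla\log(p/p_\lambda)$ using the $L$-Lipschitzness of $\nabla V$, and bound the smoothing term by applying Lemma~\ref{lemma:auxiliary_change_score_1} to $p_\lambda$ with Lipschitz constant $\lambda^2 L$. Your version is in fact slightly more explicit than the paper's, which drops the factor $L$ in the first term (absorbed by $\lesssim$).
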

where the law of $Y|x$ is given by $\frac{p_\lambda(y)\varphi_\sigma^2(x-y)}{p_\lambda(x)*\varphi_\sigma^2(x)}$.
\begin{proof}
Using the triangle inequality,
\begin{equation*}
    \left\Vert \nabla \log \frac{p(x)}{p_{\lambda}*\varphi_{\sigma^2}(x)}\right\Vert \leq \left\Vert \nabla \log \frac{p(x)}{p_\lambda(x)}\right\Vert + \left\Vert \nabla \log \frac{p_\lambda(x)}{p_{\lambda}*\varphi_{\sigma^2}(x)}\right\Vert.
\end{equation*}
The first term can be bounded as
\begin{align*}
    \left\Vert \nabla \log \frac{p(x)}{p_\lambda(x)}\right\Vert &= \left\Vert \lambda \nabla V(\lambda x)-\nabla V(x)\right\Vert \leq \left\Vert \lambda \nabla V(\lambda x)-\lambda\nabla V(x)\right\Vert+ \left\Vert \lambda \nabla V(x)-\nabla V(x)\right\Vert \\
    & \leq \lambda (\lambda-1) \Vert x\Vert + (\lambda-1)\Vert \nabla V(x)\Vert.
\end{align*}
By the result in Lemma~\ref{lemma:auxiliary_change_score_1}, we have the following bound for the second term
\begin{equation*}
    \left\Vert \nabla \log \frac{p_\lambda(x)}{p_{\lambda}*\varphi_{\sigma^2}(x)}\right\Vert \lesssim \lambda^2 L\ \mathbb{E}_{Y|x}\left[\Vert Y-x\Vert\right],
\end{equation*}
where we have used that $\lambda \nabla V(\lambda x)$ is $\lambda^2 L$-Lipschitz and $Y|x$ has a distribution of the form
\begin{equation*}
    \frac{p_\lambda(y)\varphi_\sigma^2(x-y)}{p_\lambda(x)*\varphi_\sigma^2(x)}.
\end{equation*}
\end{proof}

\begin{proof}[Proof of Theorem \ref{theorem:discretisation_analysis_convolutional_heavy_tailed_diffusion}]
Similarly to the proof of Theorem~\ref{theorem:discretisation_analysis_convolutional_path}, using Girsanov's theorem, we have that the following bound for $\kl(\mathbb{P}||\mathbb{Q})$.
\begin{align}
    \kl&\left(\mathbb{P}\;||\mathbb{Q}\right)= \frac{1}{4}\int_0^{T/\kappa}\mathbb{E}_{\mathbb{P}}\left[\left\Vert \nabla\log \hat{\mu}_t(X_t)-\nabla \log \hat{\mu}_{t_-}(X_{t_-}) + v_t(X_t)\right\Vert^2\right]\md t\nonumber\\
    \leq&  \sum_{l=1}^M\int_{t_{l-1}}^{t_l} L_{\kappa t}^2 \ \mathbb{E}_{\mathbb{P}} \left[\left\Vert X_t-X_{t_-}\right\Vert^2\right]\md t+ \int_0^{T/\kappa} \Vert v_t\Vert^2_{L^2({\hat{\mu}}_t)} \md t+ \int_0^{T/\kappa}\mathbb{E}_{\mathbb{P}} \left[\left\Vert \nabla\log \frac{\hat{\mu}_t(X_{t_-})}{\hat{\mu}_{t_-}(X_{t_-})}\right\Vert^2\right]\md t, \label{eq:kl_path_bound_intermediate_heavy_tail_diffusion}
\end{align}
where we have used that  $\nabla\log\hat{\mu}_{t}$ is Lipschitz with constant $L_{\kappa t}$. 
First, we bound the change in the score function $\mathbb{E}_{\mathbb{P}} \left[\left\Vert \nabla\log \frac{\hat{\mu}_t(X_{t_-})}{\hat{\mu}_{t_-}(X_{t_-})}\right\Vert^2\right]$. Let $t\geq t_{-}$, we can write
\begin{equation*}
    \hat{\mu}_{t_-} = T_{\sqrt{\frac{\lambda_{\kappa t}}{\lambda_{\kappa t_-}}}} \#\hat{\mu}_t * t\left(0, \left(\sqrt{1-\lambda_{\kappa t_-}}-\sqrt{\frac{(1-\lambda_{\kappa t})\lambda_{\kappa t_-}}{\lambda_{\kappa t}}}\right)^2\sigma^2 I, \alpha\right) = T_{\sqrt{\frac{\lambda_{\kappa t}}{\lambda_{\kappa t_-}}}} \#\hat{\mu}_t * t\left(0, \gamma_t\sigma^2 I, \alpha\right) ,
\end{equation*}
where the pushforward $T_{\lambda}\#$ is defined as $T_{\lambda}\#\mu(x) = \lambda^d \mu(\lambda x)$ and we have abused notation by identifying $t(0, \gamma_t\sigma^2 I, \alpha)$ with its density function. Using the result in Lemma~\ref{lemma:auxiliary_change_score_2}, we have
\begin{align*}
        \left\Vert \nabla\log \frac{\hat{\mu}_t(X_{t_-})}{\hat{\mu}_{t_-}(X_{t_-})}\right\Vert^2\lesssim& \frac{\lambda_{\kappa t}}{\lambda_{\kappa t_-}} \left(\sqrt{\frac{\lambda_{\kappa t}}{\lambda_{\kappa t_-}}}-1\right)^2 \Vert X_{t_-}\Vert^2 + \left(\sqrt{\frac{\lambda_{\kappa t}}{\lambda_{\kappa t_-}}}-1\right)^2\Vert \nabla \log \hat{\mu}_t(X_{t_-})\Vert^2 \\
        &+  \left(\frac{\lambda_{\kappa t}}{\lambda_{\kappa t_-}}\right)^2 L_{\kappa t}^2 \mathbb{E}_{Y|X_{t_-}}\left[\Vert Y-X_{t_-}\Vert^{2}\right],
    \end{align*}
    where the distribution of $Y|X_t$ is given by
    \begin{equation*}
        Y|X_{t_-}\sim \frac{T_{\sqrt{\frac{\lambda_{\kappa t}}{\lambda_{\kappa t_-}}}} \#\hat{\mu}_t (y)\ \varphi_{\gamma_t\sigma^2}(X_{t_-}-y)}{T_{\sqrt{\frac{\lambda_{\kappa t}}{\lambda_{\kappa t_-}}}} \#\hat{\mu}_t * \varphi_{\gamma_t\sigma^2} (X_{t_-})} = \frac{T_{\sqrt{\frac{\lambda_{\kappa t}}{\lambda_{\kappa t_-}}}} \#\hat{\mu}_t (y)\ \varphi_{\gamma_t\sigma^2}(X_{t_-}-y)}{\hat{\mu}_{t_-} (X_{t_-})},
    \end{equation*}
    where $\varphi_{\gamma_t\sigma^2}$ is the density function of a Student's $t$ distribution of the form $t\left(0, \gamma_t\sigma^2 I, \alpha\right)$.
Therefore, we have that
\begin{equation*}
    \mathbb{E}_{\mathbb{P}}\left[\mathbb{E}_{Y|X_{t_-}}\left[\Vert Y-X_{t_-}\Vert^{2}\right]\right] = \mathbb{E}_{X_{t_-}, Y} \Vert Y-X_{t_-}\Vert^2,
\end{equation*}
where the joint distribution of $( X_{t_-}, Y)\sim \rho_{( X_{t_-}, Y)}(x, y)$ is of the form
\begin{equation*}
    \rho_{( X_{t_-}, Y)}(x, y)\propto T_{\sqrt{\frac{\lambda_{\kappa t}}{\lambda_{\kappa t_-}}}} \#\hat{\mu}_t (y)\ \varphi_{\gamma_t\sigma^2}(x-y).
\end{equation*}
Using a change of measure, it follows that $Y$ is independent of $X_{t_-}- Y$ and the distribution of $X_{t_-}- Y$ is $t(0, \gamma_t\sigma^2 I, \alpha)$ with $\alpha>2$. This results into
\begin{equation*}
\mathbb{E}_{\mathbb{P}}\left[\mathbb{E}_{Y|X_{t_-}}\left[\Vert Y-X_{t_-}\Vert^{2}\right]\right] = \mathbb{E}_{Z\sim t(0, \gamma_t\sigma^2 I, \alpha)}\left[\Vert Z\Vert^2\right] = \gamma_t\sigma^2 d\frac{\alpha}{\alpha-2}.
\end{equation*}
By assumption on the schedule 
\begin{align*}
    \frac{\lambda_{\kappa t_-}}{\lambda_{\kappa t}} = \mathcal{O}(1+ h_l)
, \quad \quad \left(\sqrt{\frac{\lambda_{\kappa t}}{\lambda_{\kappa t_-}}}-1\right)^2 = \mathcal{O}(h_l^2), \quad\quad \gamma_t \lesssim 1-\frac{\lambda_{\kappa t_-}}{\lambda_{\kappa t}}= \mathcal{O}\left( h_l\right).
\end{align*}
Given that $X_t = \sqrt{\lambda_t} X + \sqrt{1-\lambda_t} \sigma^2 Z$ for $X_t\sim\hat{\mu}_t$, we derive the following moment bound
\begin{align*}
    \mathbb{E}_{\mathbb{P}}\left[\left\Vert X_{t_-}\right\Vert^2\right] =& \mathbb{E}_{\mathbb{P}}\left[\left\Vert \sqrt{\lambda_{\kappa t_-}} X + \sqrt{1-\lambda_{\kappa t_-}} Z\right\Vert^2\right] = \lambda_{\kappa t_-} \mathbb{E}_{\pid}\left[\left\Vert X\right\Vert^2\right] + (1-\lambda_{\kappa t_-})\sigma^2 \frac{\alpha d}{\alpha-2} \lesssim \mathbb{E}_{\pid}\left[\left\Vert X\right\Vert^2\right] + d.
\end{align*}
Similarly to the proof of Theorem~\ref{theorem:discretisation_analysis_convolutional_path}, it holds that 
\begin{align*}
    \mathbb{E}_{\mathbb{P}}\left[\left\Vert \nabla\log\hat{\mu}_t(X_{t_-})\right\Vert^2\right] \leq  L_{\kappa t} d  + L_{\kappa t}^2\mathbb{E}_{\mathbb{P}} \left[\left\Vert X_t-X_{t_-}\right\Vert^2\right]. 
\end{align*}
This results into
\begin{align*}
    \mathbb{E}_{\mathbb{P}} \left[\left\Vert \nabla\log \frac{\hat{\mu}_t(X_{t_-})}{\hat{\mu}_{t_-}(X_{t_-})}\right\Vert^2\right]\lesssim & h_l^2 \left(\mathbb{E}_{\pid}\left[\left\Vert X\right\Vert^2\right] + d\right) + d h_l^2 L_{\kappa t}+ h_l^2 L_{\kappa t}^2 \mathbb{E}_{\mathbb{P}} \left[\left\Vert X_t-X_{t_-}\right\Vert^2\right]  +h_l L_{\kappa t}^2\sigma^2 d \frac{\alpha}{\alpha-2}.
\end{align*}
Substituting this expression into \eqref{eq:kl_path_bound_intermediate_heavy_tail_diffusion}, we have
\begin{align*}
    \kl\left(\mathbb{P}\;||\mathbb{Q}\right)\lesssim& \sum_{l=1}^M\int_{t_{l-1}}^{t_l} L_{\kappa t}^2 \ \mathbb{E}_{\mathbb{P}} \left[\left\Vert X_t-X_{t_-}\right\Vert^2\right]\md t+ \int_0^{T/\kappa} \Vert v_t\Vert^2_{L^2({\hat{\mu}}_t)} \md t\\
    &+ \sum_{l=1}^M\int_{t_{l-1}}^{t_l} \left(d h_l^2 L_{\kappa t} + h_l^2 \left(\mathbb{E}_{\pid}\left[\left\Vert X\right\Vert^2\right] + d\right) +h_l L_{\kappa t}^2\sigma^2 d \frac{\alpha}{\alpha-2}\right) \md t.
\end{align*}
Using the bound derived in \eqref{eq:intermediate_bound_discretisation_paths}, it follows
\begin{align*}
    \kl\left(\mathbb{P}\;||\mathbb{Q}\right)\lesssim& \sum_{l=1}^M \left( 1 + h_l^2\max_{[t_{l-1}, t_l]} L_{t}^2\right)
   \int_{t_{l-1}}^{t_l} \left\vert\Dot{\hat{\mu}}\right\vert_t^2\ \md t + \left(d h_l\int_{t_{l-1}}^{t_l} L_{\kappa t}^2 \ \md t\right) \left(1 + h_l\max_{[t_{l-1}, t_l]} L_{t}\right) \\
   &+ \sum_{l=1}^M\int_{t_{l-1}}^{t_l} \left(d h_l^2 L_{\kappa t} + h_l^2 \left(\mathbb{E}_{\pid}\left[\left\Vert X\right\Vert^2\right] + d\right) +h_l L_{\kappa t}^2\sigma^2 d \frac{\alpha}{\alpha-2}\right) \md t.\\
\end{align*}
Let $h = \max_{l\in\{1, \dots, M\}} h_l$, we can further simplify the previous expression to obtain
\small
\begin{align*}
    \kl&\left(\mathbb{P}\;||\mathbb{Q}\right)\lesssim \sum_{l=1}^M (1+h^2 L_{\max}^2)\int_{t_{l-1}}^{t_l} \left\vert\Dot{\hat{\mu}}\right\vert_t^2 \ \md t\ + d\ h(1+ h L_{\max}) \int_{t_{l-1}}^{t_l} L_{\kappa t}^2 \ \md t\\
    &\quad\quad\quad\quad+ \frac{T}{\kappa}h^2\left(\mathbb{E}_{\pid}\left[\left\Vert X\right\Vert^2\right] + d\right) + dh \frac{\sigma^2\alpha}{\alpha-2} \int_{0}^{T/\kappa} L_{\kappa t}^2 \ \md t\\
     \lesssim& (1+h^2 L_{\max}^2) \int_{0}^{T/\kappa} \left\vert\Dot{\hat{\mu}}\right\vert_t^2 \ \md t\ + 
 d\ h\left(1+ \frac{\sigma^2\alpha}{\alpha-2}+ h L_{\max}\right)\int_{0}^{T/\kappa} L_{\kappa t}^2 \ \md t + \frac{T}{\kappa} h^2\left(\mathbb{E}_{\pid}\left[\left\Vert X\right\Vert^2\right] + d\right)\\
 =&  (1+h^2 L_{\max}^2) \kappa \mathcal{A}_{\lambda}(\mu)  + 
\frac{d\ h}{\kappa} (1+ h L_{\max})\int_{0}^{T} L_{ t}^2 \ \md t+ \frac{T}{\kappa} h^2\left(\mathbb{E}_{\pid}\left[\left\Vert X\right\Vert^2\right] + d\right).
\end{align*}
\normalsize
The step size $h$ can be expressed in terms of the number of steps $M$ and $\kappa$ as $h\asymp \frac{1}{M\kappa}$. Therefore, we have
\small
\begin{align*}
    \kl\left(\mathbb{P}\;||\mathbb{Q}\right)\lesssim&\left(1+\frac{L_{\max}^2}{M^2\kappa^2}\right) \kappa \mathcal{A}_\lambda(\mu) + \frac{d}{M\kappa^2}\left(1+\frac{\sigma^2\alpha}{\alpha -2} +   \frac{L_{\max}}{M\kappa}\right)\int_{0}^{T} L_{ t}^2 \ \md t+ \frac{1}{M^2\kappa^3}\left(\mathbb{E}_{\pid}\left[\left\Vert X\right\Vert^2\right] + d\right)\\
    \lesssim& \left(1+\frac{L_{\max}^2}{M^2\kappa^2}+\frac{1}{M^2\kappa^4}\right) \kappa \left(\mathbb{E}_{\pid}\left[\Vert X\Vert^2\right] + d\right) + \frac{d}{M\kappa^2}\left(1 + \frac{\sigma^2\alpha}{\alpha -2}+  \frac{L_{\max}}{M\kappa}\right)\int_{0}^{T} L_{ t}^2 \ \md t,
\end{align*}
\normalsize
where we have used the bound on the action obtained in Lemma \ref{lemma:action_bound_heavy_tail_diffusion} and $T = \mathcal{O}(1)$. To conclude, note that 
\begin{align*}
    \kl\left(\mathbb{P}\;||\mathbb{Q}_\theta\right)\lesssim& \int_0^{T/\kappa}\mathbb{E}_{\mathbb{P}}\left[\left\Vert \nabla\log \hat{\mu}_t(X_t)-\nabla \log \hat{\mu}_{t_-}(X_{t_-}) + v_t(X_t)\right\Vert^2\right]\md t \\
    &+ \int_0^{T/\kappa}\mathbb{E}_{\mathbb{P}}\left[\left\Vert \nabla \log \hat{\mu}_{t_-}(X_{t_-}) - s_\theta(X_{t_-}, t_-)\right\Vert^2\right]\md t\\
    =& \kl\left(\mathbb{P}\;||\mathbb{Q}\right)+ \sum_{l=0}^{M-1} h_l\mathbb{E}_{\hat{\mu}_t}\left[\left\Vert \nabla \log \hat{\mu}_l(X_{t_l}) - s_\theta(X_{t_l}, t_l)\right\Vert^2\right] =\kl\left(\mathbb{P}\;||\mathbb{Q}\right)+\varepsilon_{\text{score}}^2\\
    \lesssim& \left(1+\frac{L_{\max}^2}{M^2\kappa^2}+\frac{1}{M^2\kappa^4}\right) \kappa \left(\mathbb{E}_{\pid}\left[\Vert X\Vert^2\right] + d\right) + \frac{d L_{\max}^2}{M\kappa^2}\left(1 + \frac{\sigma^2\alpha}{\alpha -2}+  \frac{L_{\max}}{M\kappa}\right)+\varepsilon_{\text{score}}^2\\
    \lesssim& \left(1+\frac{L_{\max}^2}{M^2\kappa^2}+\frac{1}{M^2\kappa^4}\right) \kappa \left(M_2 \vee d\right) + \frac{d L_{\max}^2}{M\kappa^2}\left(1 + \frac{\alpha}{\alpha -2}+  \frac{L_{\max}}{M\kappa}\right)+\varepsilon_{\text{score}}^2.
\end{align*}
We can conclude that by taking 
\begin{align*}
    \kappa = \mathcal{O}\left(\frac{\varepsilon_{\text{score}}^2}{M_2 \vee d}\right),\quad M = \mathcal{O}\left(\frac{d (M_2 \vee d)^2 L_{\max}^2}{\varepsilon_{\text{score}}^6}\right),
\end{align*}
we have that $\kl\left(\mathbb{P}\;||\mathbb{Q}_\theta\right)\lesssim\varepsilon_{\text{score}}^2$. Therefore, for any $\varepsilon = \mathcal{O}(\varepsilon_{\score})$, the heavy-tailed \gls*{DALMC} algorithm requires at most 
\begin{equation*}
     M = \mathcal{O}\left(\frac{d (M_2 \vee d)^2 L_{\max}^2}{\varepsilon^6}\right)
\end{equation*}
steps to approximate $\pid$ to within $\varepsilon^2$ in \gls*{KL} divergence.
\end{proof}

\end{document}